\documentclass[lettersize,journal]{IEEEtran}
\usepackage{amsmath,amsfonts,amsthm}
\usepackage{array}
\usepackage{textcomp}
\usepackage{stfloats}
\usepackage{url}
\usepackage{verbatim}
\usepackage{graphicx}

\usepackage{enumitem}
\usepackage{amssymb}
\usepackage{xcolor}
\usepackage[ruled, noend, linesnumbered]{algorithm2e}
\usepackage{caption}
\usepackage{subcaption}
\usepackage{dsfont}
\usepackage{booktabs}
\usepackage{multirow}
\usepackage{cite}

\SetKw{KwPhase}{Phase}
\let\oldnl\nl
\newcommand{\nlnonumber}{\renewcommand{\nl}{\let\nl\oldnl}}

\newcommand{\Conv}{\mathop{\scalebox{1.5}{\raisebox{-0.2ex}{$\ast$}}}}%

\newcommand{\IG}[1]{\textcolor{blue}{[IG: #1]}}
\newcommand{\ml}[1]{\textcolor{cyan}{[ML: #1]}}

\newtheorem{theorem}{Theorem}
\newtheorem{problem}{Problem}
\newtheorem{remark}{Remark}
\newtheorem{lemma}{Lemma}
\newtheorem{assumption}{Assumption}
\newtheorem{example}{Example}
\newtheorem{definition}{Definition}
\newtheorem{proposition}{Proposition}

\newcommand{\naturals}{\mathbb{N}}
\newcommand{\reals}{\mathbb{R}}
\newcommand{\WS}{\text{WS}}

\def\BibTeX{{\rm B\kern-.05em{\sc i\kern-.025em b}\kern-.08em
    T\kern-.1667em\lower.7ex\hbox{E}\kern-.125emX}}
\usepackage{balance}
\begin{document}
\title{\LARGE \bf Provably Safe Motion Planning Under Unknown Disturbances
% *\thanks{*This work was not supported by any organization}
}

\author{Ibon Gracia$^{1*}$\thanks{* Corresponding Author}\thanks{$^{1}$The authors are with the Department of Aerospace Engineering Sciences at the University of Colorado Boulder, CO , USA {\tt\small \{\textit{firstname}.\textit{lastname}\}@colorado.edu}}, Qi Heng Ho$^{2}$\thanks{$^{2}$Qi Heng Ho is with the Department of Aerospace and Ocean Engineering, Virginia Tech, VA, USA {\tt\small qihengho@vt.ed}}, Luca Laurenti$^{3}$\thanks{$^{3}$The author is with the Delft University of Technology, the Netherlands, and the Italian Institute of Artificial Intelligence, Italy {\tt\small luca.laurenti@tudelft.nl}}, Morteza Lahijanian$^{1}$}% <-this % stops a space
% \author{Anonymous Authors}% <-this % stops a space
% <-this % stops a space
%
%%
%%
%         Delft University of Technology, 2628 CD Delft, Netherlands
%         {\tt\small albert.author@papercept.net}}
% }

% \markboth{Journal of \LaTeX\ Class Files,~Vol.~18, No.~9, September~2020}%
% {How to Use the IEEEtran \LaTeX \ Templates}

\maketitle

\begin{abstract}
We present a provably safe sampling-based motion planning algorithm for robotic systems affected by random disturbances of \emph{unknown} distribution. We consider systems with linear or linearizable dynamics evolving in workspace with arbitrary-shaped obstacles subject to state and control constraints.  Safety requirements are formulated as chance-constraints. Our approach leverages data from trajectories of the system to learn a \emph{Wasserstein} \emph{ambiguity tube}, i.e., a sequence of ambiguity sets, which contains the trajectory of the system's state distribution with high confidence. This ambiguity tube is then used in a probabilistically complete algorithm to grow a sampling-based motion planning tree that respects the constraints of the problem. We show that learning several lower-dimensional ambiguity tubes instead of a single high-dimensional one effectively reduces the conservatism and boosts scalability. Additionally, we design an efficient bandit-based validity checker that remarkably increases the empirical performance of our approach  without sacrificing probabilistic completeness. 
% We demonstrate through several case studies that our algorithm is able to find valid paths even in cluttered environments while respecting a high threshold in the probability of safety. We also compare our method against state-of-the-art approaches, showing its superiority.
Case studies show our algorithm finds valid plans in cluttered environments under strict safety thresholds, outperforming state-of-the-art methods.
\end{abstract}

\begin{IEEEkeywords}
Kinodynamic Motion Planning, Data-Driven Planning, Wasserstein Ambiguity Sets
\end{IEEEkeywords}

\section{Introduction}
    \label{sec:intro}

\IEEEPARstart{M}{otion} planning is a central problem in robotics, with applications spanning autonomous exploration, manufacturing, and surgical systems. Sampling-based methods have been particularly successful \cite{lavelle1998rrt,karaman2011sampling,SST, cortes2020samplingbased}: by drawing random samples, they rapidly construct a search tree and return a collision-free trajectory from start to goal. Traditionally, these algorithms assume a known, deterministic model of the underlying system. In reality, however, robotic systems are subject to disturbances arising from unmodeled dynamics, actuator noise, environmental variability, or model mismatch. Prior work, spanning both sampling-based and trajectory optimization frameworks, has typically attempted to capture such uncertainties using bounded disturbance models \cite{luders2014optimizing,wu2022robust,Majumdar2017,Lopez2019tubeMPC}
or Gaussian noise assumptions \cite{CC-RRT,Ho2022gbt, lindemann2021RobustMP,Kantaros2022perceptionTLplanning,prentice2009belief}, and to design motion plans that are robust under these models. Yet, these strategies are often conservative and rely on strong distributional assumptions that rarely hold in practice. In realistic settings, disturbances are stochastic with \emph{unknown} or \emph{partially known} distributions, making reliable planning significantly more challenging. This work aims to address this challenge by developing an efficient, data-driven motion planning framework that quantifies and propagates uncertainty directly from system data to produce provably safe trajectories.

Recent work has increasingly focused on motion planning under uncertainty \cite{luders2014optimizing,CC-RRT, Bry2011,Ho2022gbt}. Approaches based on bounded non-deterministic disturbances often yield overly conservative plans, as they ignore the low likelihood of extreme disturbance realizations. To incorporate probabilistic information, methods such as \cite{CC-RRT, Bry2011,Ho2022gbt} model disturbances as Gaussian and enforce chance constraints. In practice, however, the disturbance distribution is rarely known and must be learned from trajectory data. Distributionally robust methods address this by constructing ambiguity sets over (sets that contain) plausible distributions and planning against all distributions in these sets \cite{summers2018distributionally,ekenberg2023distributionally}. While theoretically appealing, these approaches remain highly conservative (even when obstacles are convex), and the conservatism grows with environmental complexity, rendering them impractical for cluttered planning scenarios.

In this work, we introduce an efficient planning algorithm for systems under unknown disturbance distributions. We use trajectory data to construct a \emph{Wasserstein ambiguity tube}, a high-confidence sequence of ambiguity sets capturing stochastic evolution, and incorporate it into a sampling-based planner. We show that, for linear (or feedback-linearizable) systems, ambiguity propagation decouples from planning, allowing a single offline-learned tube with tight uncertainty characterization. Then, we enforce safety via chance constraints using the optimization-free worst-case collision checker of \cite{gao2023distributionally}, enabling planning in cluttered environments and handling nonconvex constraints. 
% The tube has infinite temporal length with finite confidence, eliminating the need to predefine the planning horizon. 
% We further develop multiple vality-checking algorithms with different trad-off between conservatism and computational complexity.  Then, we unify these methods in a decision framework that uses appropriate method based on multi-arm bandit formulation.
% We prove soudness of the construction of the tubes and validity checking and probabilistic completeness with respect to the tube and show that leveraging multiple low-dimensional tubes significantly reduces data and computation. 
The tube has infinite temporal length with finite confidence, removing the need to specify a planning horizon. We further develop complementary validity-checking procedures with distinct efficiency–conservatism trade-offs, and unify them through a multi-armed bandit framework that adaptively selects the appropriate checker. We prove soundness of both the tube construction and validity checking, as well as probabilistic completeness of the algorithm with respect to the tube, and show that leveraging multiple low-dimensional tubes significantly reduces data and computation.
% Our evaluations and benchmarks demonstrate efficacy of the algorithm in complex environments and its superior performance over distributionally robust baselines \cite{summers2018distributionally, ekenberg2023distributionally}.
Benchmark results confirm the efficacy of the proposed algorithm in complex environments and its consistent outperformance of state-of-the-art methods.

The main contributions of this paper are five-fold:
\begin{itemize}
    % \item A probabilistically complete sampling-based motion planning algorithm for linear systems with unknown additive disturbances from trajectory observation data. 
    % \item A proof of probabilistic completeness for a Wasserstein sampling-based motion planning algorithm under relaxed assumptions. 
    % \item Novel theoretical results about propagation of distributional ambiguity.
    % \item A tighter expression for the size of data-driven ambiguity sets defined by the $1$-Wasserstein distance than the ones used in the literature.
    % \item Three efficient validity checking approaches that trade off efficiency and conservativeness, while all being less conservative than state-of-the-art approaches, as well as, a novel multi-armed bandit framework that adaptively selects the appropriate checker.
    % \item Detailed evaluation and benchmark studies  which validate our approach and which show its advantages against state-of-the-art approaches.
    \item A sampling-based motion planning algorithm for linear systems with unknown additive disturbances that uses trajectory observation data.
    \item Proofs of soundness and probabilistic completeness for the planner under relaxed assumptions.
    \item New theoretical results on the propagation of distributional ambiguity.
    \item A tighter characterization of the size of data-driven ambiguity sets defined by the 1-Wasserstein distance than those used in the existing literature.
    \item 
    % Three efficient validity-checking methods that trade off computational efficiency and conservativeness, each less conservative than state-of-the-art methods, together with a novel multi-armed bandit scheme that adaptively selects among them.
    Three validity-checking methods trading off efficiency and conservativeness, each less conservative than existing work, as well as a multi-armed bandit scheme that adaptively selects among them.
    \item Comprehensive evaluation and benchmark studies demonstrating the efficacy of the approach and its advantages over the state-of-the-art approaches.
\end{itemize}

\subsection{Related Work}

% Classical chance-constraint motion planning approaches assume linear dynamics with additive Gaussian noise. \cite{luders2014optimizing} presents a tree-based motion planning algorithm for uncertain linear robotic systems around uncertain obstacles. The dynamics of both the robot and the obstacles are assumed known except for set-bounded additive disturbances. Assuming a known Gaussian law for the disturbances, \cite{CC-RRT, Bry2011, Ho2022gbt} reduces the conservatism of the previous setting. Validity of the nodes in the tree is then checked via \emph{chance constraints}.

Classical chance-constrained motion planning assumes linear dynamics with additive Gaussian noise. \cite{luders2014optimizing} presents a tree-based algorithm for uncertain linear systems with set-bounded disturbances around uncertain obstacles, while \cite{CC-RRT,Bry2011,Ho2022gbt} reduce conservatism by assuming a known Gaussian disturbance law and checking node validity via chance constraints.

% In order to generalize to non-Gaussian disturbances, \cite{summers2018distributionally} proposes a distributionally robust approach: assuming that the mean and covariance of the noise and initial state distributions are precisely known, they obtain the mean and covariance of the state distribution in closed form, and then construct \emph{moment ambiguity sets} around them. Such a set contains all distributions with fixed mean and covariance, and therefore the state distribution. The work then concludes a node is valid by checking if chance-constraints are satisfied with respect to all distributions in the ambiguity set. Assuming convex polytopic obstacles, the approach this condition simplifies to checking a deterministic inequality, although introducing great conservatism in the computed probability of collision. Besides, this method uniformly allocates the admissible risk of collision to all obstacles, not taking into account their distance to the nodes. In consequence, this approach struggles to find a valid paths in cluttered environments, when the number of obstacles and the desired probability of safety are high. In order to mitigate these issues, \cite{ekenberg2023distributionally} allocates, to each obstacle, a risk equal to the upper bound in the probability of collision with that individual obstacle.  Although this logic eases planning around a high number of obstacles  and when few obstacles are located closely to each other, he solution is still fairly conservative. 

To generalize beyond Gaussian disturbances, \cite{summers2018distributionally} proposes a distributionally robust approach using \emph{moment ambiguity sets} built from the (assumed known) mean and covariance of the noise and initial state. For convex polytopic obstacles, the chance constraint reduces to a deterministic inequality, but at the cost of substantial conservatism. Moreover, admissible risk is allocated uniformly across obstacles regardless of proximity, which hinders planning in cluttered environments with high safety thresholds. \cite{ekenberg2023distributionally} mitigates this by allocating per-obstacle risk equal to its individual collision-probability upper bound, but the solution remains fairly conservative.

Other works use Wasserstein ambiguity sets, defined as balls centered on a nominal (often empirical) distribution. Unlike moment sets, Wasserstein sets shrink to a single distribution as samples grow \cite{mohajerin2018data}, avoiding the irreducible conservatism of moment-based approaches \cite{gao2023distributionally}. Early work focused on \emph{static} sets \cite{mohajerin2018data,gao2023distributionally,blanchet2019quantifying}; more recent work studies propagation through dynamical systems \cite{boskos2023high,aolaritei2022uncertainty,aolaritei2023capture,figueiredo2025efficient}. \cite{aolaritei2022uncertainty} characterizes when optimal-transport ambiguity sets are closed under linear and nonlinear mappings, and \cite{aolaritei2022uncertainty,aolaritei2023capture} bound ambiguity propagation through linear stochastic systems.  However, these bounds grow unbounded in time, ruling out long horizons. \cite{boskos2023high} give conditions for time-bounded ambiguity under process and measurement noise, and \cite{figueiredo2025efficient} provides a tighter bound leveraging the contractiveness of the dynamics. However, these results are still too conservative for linear robotic models and cluttered environments. In this work, we provide a tighter characterization: unlike \cite{aolaritei2022uncertainty,aolaritei2023capture,lathrop2021distributionally}, we learn an ambiguity tube of infinite length but finite radius that contains the state distribution over time with user-defined confidence.

% Wasserstein ambiguity sets have been recently employed in sampling-based motion planning algorithms \cite{hakobyan2022distributionally,lathrop2021distributionally}. In \cite{hakobyan2022distributionally}, the robot evolves according to known deterministic dynamics, whereas the obstacles are modeled as \emph{Gaussian Process} (GP) regression models. The authors propose to predict the motion of the obstacles by linearizing the GP predictions and then robustifying said predictions via Wasserstein ambiguity sets. However, no guarantee in the safety of the computed plan is obtained. The risk measure employed is the \emph{conditional value at risk} (CVaR), because of its coherence \cite{artzner1999coherent} and computational tractability. 

Wasserstein sets have been used in sampling-based planning \cite{hakobyan2022distributionally,lathrop2021distributionally} and in 
\emph{model predictive control} (MPC) \cite{hakobyan2020wasserstein,aolaritei2023capture,aolaritei2023wasserstein}, typically with the \emph{conditional value at risk} (CVaR) measure for its coherence \cite{artzner1999coherent} and tractability. \cite{hakobyan2022distributionally} robustifies linearized GP obstacle predictions via Wasserstein sets but provides no safety guarantee. \cite{hakobyan2020wasserstein} applies nonlinear MPC with obstacles characterized by data-driven Wasserstein sets, and \cite{aolaritei2023wasserstein} gives a convex tube-MPC reformulation for stochastic LTI systems under convex safety constraints. A key drawback of CVaR in these settings is that program complexity scales linearly with the sample count \cite{hakobyan2020wasserstein,aolaritei2023capture}, which becomes prohibitive at the sample sizes needed for tight guarantees \cite{mohajerin2018data}.

We instead define collision risk as a chance constraint and use the optimization-free algorithm of \cite{gao2023distributionally} to compute the exact worst-case collision probability. Unlike \cite{summers2018distributionally,ekenberg2023distributionally,aolaritei2023capture,lathrop2021distributionally}, this handles obstacles of arbitrary shape with cost independent of their number. Combined with our tighter ambiguity characterization, this enables planning in cluttered environments, control-effort limits (unlike \cite{summers2018distributionally,ekenberg2023distributionally,lathrop2021distributionally}), and general non-convex constraints, with tube size and conservatism shrinking in the sample count.

% Unlike \cite{hakobyan2022distributionally}, \cite{lathrop2021distributionally} considers the setting where the uncertainty is present in both the robotic system and the obstacles, and defines the risk measure as the Wasserstein distance between them. However, the work assumes that the disturbance is gaussianly distributed, which is a strong assumption. Furthermore, although the algorithm in \cite{lathrop2021distributionally} is shown to be probabilistically complete, strong assumptions are required for this statement to hold: specifically, it requires existence of a motion plan such that the support of the robot and obstacles' distributions never overlap, which neglects probabilistic information. By leveraging it instead, in this paper we prove that our algorithm is probabilistically complete under weaker assumptions.

Unlike \cite{hakobyan2022distributionally}, \cite{lathrop2021distributionally} handles uncertainty in both robot and obstacles, using the Wasserstein distance between them as the risk measure, but assumes Gaussian disturbances. Its probabilistic completeness also requires a motion plan whose robot and obstacle supports never overlap, discarding probabilistic information. We prove probabilistic completeness under weaker assumptions by exploiting that information.

% We highlight that \cite{hakobyan2020wasserstein,hakobyan2022distributionally,lathrop2021distributionally,aolaritei2023capture,aolaritei2023wasserstein} and, to the best of our knowledge, all sampling-based motion planning algorithms using Wasserstein ambiguity sets treat the size of the ambiguity set as a tuning parameter, instead of determining a sufficient size that guarantees containment of the unknown distributions. 
% We believe that this choice is a consequence of the large number of samples required to obtain guarantees for Wasserstein ambiguity sets, which translates into excessive computational burden if not dealt with carefully. A big portion of the efforts in this paper is aimed at mitigating this issue. To this end, we show how learning several lower-dimensional ambiguity tubes instead of a higher dimensional one often reduces the sample and computational complexity by orders of magnitude. For a more thorough research on such techniques, see \cite{chaouach2023structured}.

% Finally, besides approaches that leverage ambiguity sets to hedge against unknown distributions, it is worth mentioning \cite{janson2017monte}, which proposes a motion planning framework for linear systems by leveraging \emph{Monte Carlo} sampling.

We note that \cite{hakobyan2020wasserstein,hakobyan2022distributionally,lathrop2021distributionally,aolaritei2023capture,aolaritei2023wasserstein} and, to our knowledge, all sampling-based planners using Wasserstein sets treat the set size as a tuning parameter rather than computing a size that guarantees containment, likely due to the sample and compute burden of formal guarantees. We address this by learning several lower-dimensional ambiguity tubes rather than one high-dimensional tube, often reducing sample and computational complexity by orders of magnitude \cite{chaouach2023structured}.

\section{Preliminaries}
\label{sec:preliminaries}

We first introduce notation and review the fundamentals of sampling-based planners before formalizing the problem.

\subsection{Basic Notation}
    \label{sec:notations}
    % \section{Basic Notation}

Consider the space $\mathbb R^n$ equipped with the Euclidean distance. Given a set $X\subseteq \mathbb R^n$ and a point $x\in \mathbb R^n$, we denote by $\text{dist}(x, X)$ the minimum Euclidean distance between $x$ and $X$, and by $\text{diam}(X)=\sup_{x,x' \in X} \|x-x'\|$
the diameter of $X$. We also denote by $X+x$ the \emph{Minkowski sum} of the sets $X$ and $\{x\}$, and by $\mathds{1}_X(x)$ the indicator function of $X$, which returns $1$ if $x \in X$ and $0$ otherwise.  
%Let $M \in \mathbb R^{m\times n}$ be a real-valued matrix. We denote by $M(X)$ the image of $X$ through $M$ and
We use bold symbols to denote random variables, e.g., $\boldsymbol{x}$. We define $\mathcal{D}(X)$ to be the set of Borel probability measures over the metric space $X$ such that $\int_X \|x\| \, dP(x) < \infty$. 

Let $\boldsymbol{x}$ be distributed according to $P \in \mathcal{D}(X)$, and $X' \subseteq X$ 
be Borel-measurable. We write $P(X') \equiv P[\boldsymbol{x} \in X']$ 
to denote the measure of set $X'$ with respect to $P$. Given a probability distribution $P\in\mathcal{D}(X)$ and a measurable map $f:X\longrightarrow Y$, we denote by $f_\#P \in \mathcal{D}(Y)$ the \emph{pushforward measure} of $P$ by $f$, i.e., the measure defined as $f_\#P(B) := P(f^{-1}(B))$ for all Borel sets $B \subseteq Y$, with $f^{-1}(B)$ denoting the pre-image of set $B$. When $Y\subseteq \mathbb R^m$ and $f$ is a linear transformation with matrix $M \in \mathbb R^{m\times n}$, we also use the notation $M_\#P\in\mathcal{D}(\mathbb R^n)$.
%
\iffalse
Let $c:X\times X\rightarrow \mathbb{R}_{\ge 0 }$ be a continuous cost function. We denote by $\mathcal{D}(X)$ the set of distributions $P\in\mathcal{D}(X)$ such that $\int_X c(x,x') dP(x) < \infty$ for some $x'\in X$.
%
\fi
%
The $1$-\emph{Wasserstein distance} between distributions $P,P'\in \mathcal{D}(X)$ is then defined as
\begin{align*}
    \mathcal{W}(P,P') = \inf_{\pi\in\Pi(P,P')}\int_{X\times X} \|x-x'\| \, d\pi(x,x'),
\end{align*}
where $\Pi(P,P')$ is the set of probability distributions on $\mathcal{D}(X\times X)$ with marginals $P$ and $P'$.
%
\iffalse
Let $\|\cdot\|$ be the Euclidean norm on $\mathbb R^n$. If $c(x,x') = \|x - x'\|$ for all $x,x'\in X$, we write $\mathcal{D}_1(X)$ and $\mathcal{W}(P,P') := \mathcal{T}_c(P,P')$ 
is the $1$-Wasserstein distance between $P$ and $P'$.
%We let $\mathbb B(\widehat P, \varepsilon) \subset \mathcal{D}_1(X)$ denote the ball in the metric space $(\mathcal{D}_1(X), \mathcal{W})$ with center $\widehat P$ and radius $\varepsilon > 0$.
%
\fi
%
We denote by $\text{supp}(P)$ the support of $P$ and by $\mathcal{M}_q(P) : = \big(\mathbb E_{P}[\|\boldsymbol{x}\|^q]\big)^{1/q}$ the  $q$-th moment of $P$. 
Finally, we let $\delta_{x}\in\mathcal{D}(X)$ be the Dirac measure located at $x\in X$ and we denote by $\mathbb B(\widehat P, \varepsilon) := \{P \in \mathcal{D}(X) : \mathcal{W}(P, \widehat P) \le \varepsilon\}$ the Wasserstein Ball with center (nominal distribution) $\widehat P\in \mathcal{D}(X)$ and radius $\varepsilon > 0 $. Given a set of $N$ i.i.d. samples $\{\hat{\boldsymbol x}^{(i)}\}_{i = 1}^N$ from some distribution $P$, the corresponding \emph{empirical distribution} is
\begin{align*}
    \widehat P := \frac{1}{N}\sum_{i = 1}^N \delta_{\hat{\boldsymbol x}^{(i)}}.
\end{align*}

\subsection{Kinodynamic Sampling-based Planners}

\label{sec:treebasedplanners}
% Single query sampling-based tree search algorithms construct a tree in the search space. For kinodynamic systems without any uncertainty, this search space is the state space. In this work, we aim to develop a method to transform such existing planners into planners that can handle unknown stochastic disturbances and uncertain system initial distributions. To this end, we first present an overview of such planners.

Single-query sampling-based tree search algorithms construct a tree in the search space, which is the state space for kinodynamic systems without uncertainty. We aim to transform such planners into ones that handle unknown stochastic disturbances and uncertain initial distributions.
% , and begin with an overview of them.

Alg.~\ref{alg:treebasedplanners} shows a generic form of a
tree-based planner for kinodynamical systems. It takes state space $X$, input space $U$, goal region $X_{goal} \subset X$, obstacle regions $X_{obs} \subset X$, an initial state $x_{init} \in X$, and a maximum planning time or iteration count (N) as input and returns a near-optimal solution if one is found. Search is performed by growing a motion tree in which states and the connections between them are stored as nodes in $\mathbb{V}$ and edges in $\mathbb{E}$, respectively.
The six main subroutines for the planner are: \texttt{sample}, \texttt{select}, \texttt{extend}, \texttt{validity check} and \texttt{goal check}. In \texttt{sample}, a state is randomly sampled from the state space. Then, \texttt{select} chooses an existing node on the tree based on this sample. The node is extended by sampling a control and propagating the system's dynamics in \texttt{extend}. A \texttt{validity check} on this new state is conducted, at which point the new node is added to the tree. Then, \texttt{goal check} assesses whether or not the new node has reached the goal, in which case the algorithm returns the resulting path. If $\mathcal{X}$ is an asymptotically optimal planner, then the algorithm will terminate in finite time with probability $1$.
%\ml{there is nothing in the description that makes Alg 1 asymptotically optimal.  Should we remove that part and replace the FOR-loop with a WHILE=loop in Alg.1?}

\begin{algorithm}[t]
    \caption{Generic Tree-based Planner $\mathcal{X}$ 
    % ($X, U, X_{goal}, X_{obs}, x_{init}$, N)
    }
    \label{alg:treebasedplanners}
    \SetKwInOut{Input}{Input}
    \SetKwInOut{Output}{Output}
    \Input{$X, U, X_{goal}, X_{obs}, x_{init}$, N}
    \Output{Valid Trajectory $x_{1:T}$ if one is found}
    $G = (\mathbb{V} \leftarrow \{x_{init}$\}, $\mathbb{E} \leftarrow \emptyset)$\\
    \While{$\text{True}$}{
        $x_{rand}, u_{rand} \leftarrow$ \texttt{Sample}()\\
        $n_{select} \leftarrow$ \texttt{Select}($x_{rand}$)\\
        $n_{new} \leftarrow$ \texttt{Extend}($n_{select}, u_{rand}$)\\
        \uIf {\texttt{ValidityCheck}($n_{select}, n_{new}$)}{
            $\mathbb{V} \leftarrow \mathbb{V} \cup \{n_{new}$\}\\
            $\mathbb{E} \leftarrow \mathbb{E} \cup \{edge(n_{select}, n_{new})\}$\\
        }
    \If{\texttt{GoalCheck}($n_{\text{new}}$)}{
            \Return \texttt{ExtractPath}($G, n_{\text{new}}$)\;
        }
    }
\Return Failure
\end{algorithm}
\section{Problem Formulation}
\label{sec:problem_formulation}

% In this section we formalize the class of systems that we consider and state the problem to be solved. 
We focus on robotic systems whose motion can be described by the stochastic linear dynamics
\begin{align}
\label{eq:system}
    \boldsymbol{x}_{t+1} = A\boldsymbol{x}_t + Bu_t + G\boldsymbol{w}_t,
\end{align}
where $\boldsymbol{x}_t \in X \subseteq \mathbb{R}^n$ is the state at time $t\in\mathbb N_0$, $u_t \in U \subseteq \mathbb{R}^m$ is the control input, 
$A$, $B$, and $G$ are real matrices of appropriate dimensions, and
$\boldsymbol{w}_t \in W \subset \mathbb{R}^d$ is an i.i.d. random disturbance (noise) with distribution $\boldsymbol{w}_t \sim P_w \in \mathcal{D}(W)$. The initial state $\boldsymbol{x}_0 \in X_0 \subset X$ is distributed according to $P_0\in \mathcal{D}(X_0)$, i.e., $\boldsymbol{x}_0 \sim P_0$.

In this work, we consider the settings in which distributions of the noise $P_w$ and the initial state $P_0$ are \emph{unknown}, but their supports $W$ and $X_0$ are known. This assumption is realistic and in fact commonly encountered in robotic systems operating under uncertainty. First, the linear time-invariant modeling assumption is justified by the fact that many robotic platforms are control‐affine and hence feedback‐linearizable.
% subjected to noise: first, the LTI assumption is justified by most robotic systems being control-affine, and thus feedback-linearizable. 
Second, our assumptions on $P_0$ and $P_w$ relax those in existing work, which often require not only the knowledge of these distributions but also that they be Gaussian.
% 
% We assume that the initial state 
%$x_0 \sim P_0$ and $w_t \sim P_w$ i.i.d., with $P_0, P_w$ unknown, but we know their support.
%$\phi_0 := \text{diam}(P_0)$ and $\phi_w := \text{diam}(P_w)$ 
% 
% \IG{actually knowing a higher-order moment also works}\IG{Also our approach can be extended to deal with input constraints}.
% 
In lieu of not knowing $P_w$ and $P_0$, we assume that we have access to $N$ i.i.d. trajectories 
$\{(\hat{\boldsymbol{x}}^{(i)}_0, \hat{\boldsymbol{x}}^{(i)}_1, \ldots, \hat{\boldsymbol{x}}^{(i)}_H)\}_{i=1}^N$ of the system for some horizon $H \in \mathbb N_0$.
%\ml{Justify that many robotics systems can be linearized, e.g., through feedback linearization since they are mostly control affine, hence our framework is general for such systems. i.e., our linear assumption on the dynamics is not really a major limitation.}

% We assume, without loss of generality, that the robot is a point that 
The robot operates in workspace $\WS \subset \reals^\text{ws}$, where $n_\text{ws} \in \{2, 3\}$, surrounded by workspace obstacles which it must avoid to reach a goal region. In addition to these obstacles, the state of the robot might be subject to constraints, such as velocity or rate limits. We represent all these workspace obstacles and state limits by the %(possibly time-varying)
state-space \emph{obstacle set} $X_\text{obs} \subset X$. Similarly, the goal region that the robot has to reach in the state space is denoted by $X_\text{goal} \subset X$.  
% , and we let this one and $X_\text{obs}$ have arbitrary shape.
We allow $X_\text{goal}$ and $X_\text{obs}$ to have arbitrary shapes. Our goal is to motion plan for this robot with safety guarantees. 

To enable stable robot motion, similar to \cite{Bry2011}, we consider feedback controllers of the form:
\begin{align}
\label{eq:control_input}
    \boldsymbol{u}_t := -K(\boldsymbol{x}_t - \bar x_t) + \bar u_t,
\end{align}
where $K \in \reals^{m\times n}$ is the feedback control gain, $\bar u_t \in \mathbb{R}^m$ is feedforward control, and $\bar x_t \in X$ is the \emph{reference state}. The latter is described by the reference open-loop dynamics
\begin{align}
\label{eq:reference_dynamics}
    \bar x_{t+1} = A \bar x_t + B\bar u_t,
\end{align}
with arbitrary $\bar x_0$. Defining $A_\text{cl} := A - BK$, the closed-loop dynamics of the system become 
\begin{align}
\label{eq:closed_loop_dynamics}
    \boldsymbol{x}_{t+1} = A_\text{cl} \boldsymbol{x}_t + B(K\bar x_t + \bar u_t) + G \boldsymbol{w}_t.
\end{align}
Intuitively, given a sequence of feedforward controls, a reference trajectory is induced by \eqref{eq:reference_dynamics}, and the controller in \eqref{eq:control_input} enables the robot to follow the nominal trajectory.
% We call a sequence of states $\{x_t\}$ of System~\eqref{eq:system} a \emph{trajectory}. 
% Given a feedforward signal $\{\bar u_t\}$, the dynamics \eqref{eq:closed_loop_dynamics} and \eqref{eq:reference_dynamics}, together with the distributions $P_0, P_w$ induce a unique probability distribution $\text{Pr}$ over the trajectories of the system. 
% For ease of notation we omit the dependence on the feedforward signal, and let $P_t$ denote the marginal of $\text{Pr}$ at time $t$. \IG{Maybe last part goes into the technical discussion}
% Further, for this nominal trajectory, the distributions $P_0$ and $P_w$ induce a unique probability distribution $\text{Pr}$ over the trajectories of the closed-loop system in \eqref{eq:closed_loop_dynamics}. 
We define a \textit{motion plan} for this system to be a sequence of pairs $((\bar u_t,\bar{x}_t))_{t = 0}^T$ for some $T \in \naturals_0$.

%Let $\Pi: X \rightarrow \WS$ be the operator that projects a state into the workspace. We define state space obstacles and goal as
\iffalse
% 
\begin{equation*}
    X_\text{obs} = \bigcup_{i = 1}^{n_\text{obs}} \Pi^{-1}(O_i), \qquad  X_\text{goal} = \Pi^{-1}(\mathrm{G}).
\end{equation*}
% 
\fi

Note that, given a motion plan 
$((\bar u_t,\bar{x}_t))_{t = 0}^T$,
% sequence of feedforward controls, and unlike the reference state $\bar{x}_t$, 
the evolution of system's state $\boldsymbol{x}_t$ is a stochastic process induced by distributions $P_w$ and $P_0$ and dynamics in \eqref{eq:closed_loop_dynamics}. We denote the distribution of $\boldsymbol{x}_t$ by $P_t$, i.e., $\boldsymbol{x}_t \sim P_t$. Similarly, since $\boldsymbol{u}_t$ depends on $\boldsymbol{x}_t$ through the feedback term in~\eqref{eq:control_input}, it is also a random variable. Leveraging this dependence on $\boldsymbol{x}_t$, the control constraint $\boldsymbol{u}_t \in U$ is easily embedded into the collision-avoidance constraint $\boldsymbol{x}_t \notin X_\text{obs}$ by letting
\begin{align}
\label{eq:control_constraint_embed}
    \{ x \in \reals^n : -K(x - \bar x_t ) + \bar u_t \notin U \} \subset X_\text{obs}.
\end{align}
We assume that this is the case unless stated otherwise. In turn, Expression~\eqref{eq:control_constraint_embed} implies that $X_\text{obs}$ depends, in the general case, on $\bar x_t$ and $\bar u_t$. However, to simplify notation, we denote $X_\text{obs} \equiv X_\text{obs}(\bar x_t,\bar u_t)$.

The probabilities of collision and of reaching the goal at time $t$ are given by
\begin{equation}
    \label{eq:coll-goal-probs}
    P_t\big[\boldsymbol{x}_t \in X_i\big] = \int_{\reals^n} \mathds{1}_{X_i}[y] P_t(dy), \qquad i \in \big\{\text{obs}, \text{goal}\big\}.
\end{equation}
%
\iffalse
Similarly, the probability that the control input is within the desired bounds at time $t$ is
% 
\begin{equation}
    \label{eq:control_saturation-prob}
    P_t^u\big[\boldsymbol{u}_t \in U\big] = \int_{\reals^m} \mathds{1}_{U}[v] P_t^u(dv),
\end{equation}
%
where $\boldsymbol u_t \sim P_t^u$.
\fi

% We let $X_\text{obs}^\text{ws} = \bigcup_{i = 1}^{n_\text{obs}} O_i$ and $X_\text{goal}^\text{ws} \subset \mathbb R^{n_\text{ws}}$ denote the workspace obstacles and the goal. Let $\Pi:\mathbb R^n \rightarrow \mathbb R^{n_\text{ws}}$ be the operator that projects a state into the workspace. We define the probability of collision and the probability of reaching the goal at time $t$ as
% %
% \begin{align*}
%     Pr\big[x_t \in X_i\big] = Pr\big[x_t \in \Pi^{-1}(X_i^\text{ws}) \big],
% \end{align*}
% %
% where $i \in\{\text{obs},\text{goal}\}$.

% The motion planing algorithm outputs a \emph{motion plan} $\{(\bar u_t,\bar x_t)\}_{t=0}^T$ for some \emph{planning horizon} $T\in\mathbb N$, which is guaranteed to avoid the obstacles and reach the goal with high probability:

Our objective is to generate a motion plan that guarantees that the robot avoids the obstacles, respects the control constraints, and reaches the goal with high probability.  However, since $P_w$ and $P_0$ are unknown, we need to rely on the sample trajectories $\{(\hat{\boldsymbol{x}}^{(i)}_0, \hat{\boldsymbol{x}}^{(i)}_1, \ldots, \hat{\boldsymbol{x}}^{(i)}_H)\}_{i=1}^N$ to reason about the probabilities in \eqref{eq:coll-goal-probs}.  Such reasoning can be done with some confidence related to the distribution of the sample trajectories.
% The formal statement of the problem is as follows.
We formulate this data driven, confidence-based motion planning problem as follows.
\begin{problem}[Safe Motion Planning]
\label{problem}
    % Given robotic System~\eqref{eq:system}, the controller in \eqref{eq:control_input}, the sets $X_\text{obs}, X_\text{goal}$, the supports $X_0$ and $W$ of $P_0$ and $P_w$, $N$ i.i.d. trajectories $\{\{\hat x^{(i)}_t\}\}_{i = 1}^N$ from \eqref{eq:system} and $p_\text{safe}, \beta\in(0,1)$, obtain a motion plan $\{(\bar u_t,\bar x_t)\}_{t=0}^T$ that is valid, i.e., that guarantees
    % 
    Consider System~\eqref{eq:system} under the controller in \eqref{eq:control_input} with the state space obstacle set $X_\text{obs}$ and goal set $X_\text{goal}$ where $X_\text{obs}$ satisfies \eqref{eq:control_constraint_embed}.
    Given the initial state and noise supports $X_0$ and $W$, $N$ i.i.d. sample trajectories $\{(\hat{\boldsymbol{x}}^{(i)}_0, \hat{\boldsymbol{x}}^{(i)}_1, \ldots, \hat{\boldsymbol{x}}^{(i)}_H)\}_{i = 1}^N$ of \eqref{eq:system}, a safety probability threshold $p_\text{safe} \in (0,1)$, and a confidence $\beta\in(0,1)$, generate a motion plan $((\bar u_t,\bar x_t))_{t=0}^T$ such that
    \begin{subequations}
    % \begin{equation}
    \begin{align}
        & P_t\big[\boldsymbol{x}_t \notin  X_\text{obs}\big] > p_\text{safe} && \forall t \in \{0,\dots,T\},\label{eq:cc_constraints_safety}\\
        & P_t\big[\boldsymbol{x}_T \in X_\text{goal}\big] > p_\text{safe}\label{eq:cc_constraints_goal}
    \end{align}
    % \end{equation}
    \label{eq:cc_constraints}
    \end{subequations}
    %
    % for some $T\in\mathbb N_0$ 
    with confidence $1-\beta$.
    %\IG{Why $\mathbb P$ and not $P_t$...?}\LL{It should be $P_t$}
    %\ml{control constraint $U$ is not previously discussed. It's not even immidiately clear why it's probabilistic.  Instead of putting it in the problem formulation, can put it in a remark?}
\end{problem}
%
% Note that, while the chance-constraints in \eqref{eq:cc_constraints} must hold step-wise, the confidence has to hold over the entire trajectory of System~\eqref{eq:system}.

Note that, unlike the chance constraints in \eqref{eq:cc_constraints} that need to only hold stepwise, the confidence must hold over the entire trajectory of System~\eqref{eq:system}.

\paragraph*{Overview of the Approach}

Problem~\ref{problem} is challenging due to the unknown distributions $P_0$ and $P_w$ and the non-convexity of $X_\text{obs}$ and $X_\text{goal}$. We propose a sampling-based algorithm that grows a probabilistically collision-free tree from start to goal. In Section~\ref{sec:ambiguity_tube}, we use trajectory samples to learn, at each time step, an \emph{ambiguity set} containing the unknown state distribution with high confidence; we call the resulting time sequence an \emph{ambiguity tube} and prove its soundness. To grow the tree, we check constraints~\eqref{eq:cc_constraints} at each node against the worst-case distribution in the corresponding ambiguity set
% ; the validity-checking algorithms and their analysis appear in 
(Section~\ref{sec:validity_checking}). To reduce sample complexity and improve scalability, Section~\ref{sec:lower_dimensional} shows that leveraging several lower-dimensional ambiguity tubes, which reduce conservatism; the corresponding algorithms (Section~\ref{sec:algorithm}) and analysis (Section~\ref{sec:analysis}) retain probabilistic completeness and solution guarantees while substantially lowering computational cost.  Section~\ref{sec:algorithm} presents the full motion planning algorithms, and Section~\ref{sec:analysis} proves their completeness and that every returned plan solves Problem~\ref{problem}. Finally, Section~\ref{sec:evaluation} evaluates our approach in simulation on two systems from the literature and compares it against \cite{summers2018distributionally,ekenberg2023distributionally}.
% , demonstrating its superiority.\footnote{All proofs are in Appendix~\ref{sec:appendix}.}
All proofs are in Appendix~\ref{sec:appendix}.
%\ml{check the order of sections VI and VII.  We first explain Sec. VII and then VI.} 

\begin{remark}[Arbitrary Obstacles]
    Unlike other works in which the obstacles are assumed polytopic and convex \cite{luders2010chance,summers2018distributionally}, our framework is able to handle obstacles of an arbitrary shape. This allows us to effectively find safe paths in cluttered environments and enforce control constraints without the need to employ conservative polytopic approximations. Furthermore, the collision probability that our method yields does not depend on the number of obstacles, unlike \cite{luders2010chance,summers2018distributionally}.
    % , since we treat them as a single, possibly non-convex, obstacle set. %\IG{Last part could be a remark after we formalize our algorithm. Also discuss it in the intro.}{\color{red}LL: How do you formulate the optimization problem if the sets are non-convex and possibly non-connected? Are there some assumptions we need?}\IG{We're able to solve the problem via a tree-based algorithm, which doesn't require to solve the optimization problem}
\end{remark}

\section{Construction and Dynamics of Ambiguity Sets}
\label{sec:ambiguity_tube}

% In this section, we describe how to learn an ambiguity tube for the trajectories of System~\eqref{eq:system}. In particular, we obtain a single ambiguity tube that is independent from the feedforward control and reference state, and show that the effect of the latter is easily accounted for by simply translating the learned ambiguity tube. We then detail how to construct an ambiguity sets from samples of the state $\boldsymbol x_t$. Finally, we obtain a bound on the growth of ambiguity with time, which allows us to obtain an ambiguity tube of infinite-length.

This section describes how to learn an ambiguity tube for trajectories of System~\eqref{eq:system}. Using error dynamics, we first obtain a single tube independent of the feedforward control and reference state, showing that their effect reduces to a translation of the learned tube. We then construct ambiguity sets from samples of $\boldsymbol x_t$ and bound the growth of ambiguity over time, yielding an infinite-length tube.

\subsection{Error Dynamics}

We define the \emph{state error} at time $t$ as $\boldsymbol{e}_t := \boldsymbol{x}_t - \bar x_t$, which yields the error dynamics
\begin{align}
    \label{eq:error dynamics}
    \boldsymbol{e}_{t+1} = A_\text{cl}\boldsymbol{e}_t + G \boldsymbol{w}_t.
\end{align}
Doing this allows us to express $x_t$ as the superposition of the reference state, whose dynamics are deterministic, and the error, which evolves randomly and is independent of the feedforward control and reference dynamics. The distribution $P_t^e$ of $\boldsymbol e_t$ is therefore
\begin{align}
    P_{t}^e &:= A_{\text{cl}\#}^t P_0^e* (A_\text{cl}^{t-1}G)_\#P_w * (A_\text{cl}^{t-2}G)_\#P_w * \dots * G_\#P_w \nonumber \\
    &= A_{\text{cl}\#}^t P_0^e*\Conv_{i=0}^{t-1} (A_\text{cl}^{t-1-i}G)_\#P_w,
    \label{eq: error distribution}
\end{align}
where `$*$' is the convolution operator, and with $P_0^e := P_0*\delta_{-\bar x_0}$ 
by the definition of the error. 
% Note that knowing $P_t^e$, we readily obtain $P_t$ by translating the former, i.e., $P_{t} = P_t^e*\delta_{\bar x_t}$. In the following proposition, which is an immediate consequence of the previous discussion, we describe how to easily derive an ambiguity set for the state $\boldsymbol x_t$ when given another ambiguity set for the error $\boldsymbol e_t$ and the reference state $\bar x_t$:
Note that if $P_t^e$ is known, we can obtain $P_t$ by translation: $P_{t} = P_t^e * \delta_{\bar x_t}$. The following proposition derives an ambiguity set for the state $\boldsymbol x_t$ from one for the error $\boldsymbol e_t$ and the reference state $\bar x_t$.

\iffalse
%
\begin{proposition}
\label{prop:ambiguity_tube_error}
    Let $(\mathcal{P}_t^e := \mathbb B(\widehat P_t^e, \varepsilon_t))_{t \in \mathbb N_0}$ be the \emph{error ambiguity tube} for some $\widehat P_0^e, \widehat P_1^e,\dots \in \mathcal{D}_1(\reals^n)$ and $\varepsilon_0, \varepsilon_1,\dots > 0$, such that $P_t^e \in \mathcal{P}_t^e$ for all $t \in \naturals_0$, and $(\bar x_t)_{t \in \mathbb N_0}$ be the reference state sequence. Define the \emph{state ambiguity tube} as $(\mathcal{P}_t := \mathbb B(\widehat P_t, \varepsilon_t))_{t \in \mathbb N_0}$ with $\widehat P_t := \widehat P_t^e * \delta_{\bar x_t}$ for all $t \in \naturals_0$. Then, $x_t \sim P_t \in \mathcal{P}_t$ for all $t \in \naturals_0$.
\end{proposition}
%
\begin{proof}
    Let $t \in \naturals_0$, and note that
    %
    \begin{align*}
        \mathcal{W}(P_t, \widehat P_t) = \mathcal{W}(P_t^e   * \delta_{\bar x_t}, \widehat P_t^e  * \delta_{\bar x_t}) \le \mathcal{W}(P_t^e, \widehat P_t^e) \le \varepsilon_t.
    \end{align*}
    %
    Thus $P_t \in \mathcal{P}_t$. Since $t$ was chosen arbitrarily, the previous result holds for all $t$, which concludes the proof.
\end{proof}
%
\fi

%
\begin{proposition}
\label{prop:ambiguity_tube_error}
    Let $\mathcal{P}_t^e := \mathbb B(\widehat P_t^e, \varepsilon_t)$ 
    be the \emph{error ambiguity set} at time step $t$ for some $\widehat P_t^e \in \mathcal{D}(\reals^n)$ and $\varepsilon_t > 0$, such that $P_t^e \in \mathcal{P}_t^e$. Furthermore, let $\bar x_t$ be the reference state at time $t$ and define the \emph{state ambiguity set} $\mathcal{P}_t := \mathbb B(\widehat P_t, \varepsilon_t)$, with $\widehat P_t := \widehat P_t^e * \delta_{\bar x_t}$. Then, $\boldsymbol x_t \sim P_t \in \mathcal{P}_t$.
\end{proposition}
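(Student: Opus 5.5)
The plan is to show that translating two distributions by the same vector leaves their $1$-Wasserstein distance unchanged. The claim then follows from the hypothesis $P_t^e \in \mathcal{P}_t^e$.

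First, we identify $P_t$ as a translation of $P_t^e$. Since $\boldsymbol x_t = \boldsymbol e_t + \bar x_t$ with $\bar x_t$ deterministic, $P_t = (\tau_{\bar x_t})_\# P_t^e = P_t^e * \delta_{\bar x_t}$, where $\tau_c(x) := x + c$. This is the identity noted just before the statement. Likewise, $\widehat P_t = \widehat P_t^e * \delta_{\bar x_t} = (\tau_{\bar x_t})_\#\widehat P_t^e$. Both measures remain in $\mathcal{D}(\reals^n)$, because $\int\|x+c\|\,dP \le \int \|x\|\,dP + \|c\| < \infty$.

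Second, we prove $\mathcal{W}(\tau_{c\#}P, \tau_{c\#}P') \le \mathcal{W}(P,P')$ for any $P,P'\in\mathcal{D}(\reals^n)$ and $c\in\reals^n$.
\begin{itemize}
\item Take any coupling $\pi\in\Pi(P,P')$ and push it forward by the map $(x,x')\mapsto(x+c,x'+c)$.
\item The resulting measure $\tilde\pi$ has marginals $\tau_{c\#}P$ and $\tau_{c\#}P'$, so $\tilde\pi\in\Pi(\tau_{c\#}P,\tau_{c\#}P')$.
\item Its cost is $\int\|(x+c)-(x'+c)\|\,d\pi = \int\|x-x'\|\,d\pi$.
\item Taking the infimum over $\pi$ gives the inequality.
\end{itemize}
Applying the same argument with $-c$ gives the reverse inequality, so the distance is in fact preserved. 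Only the inequality is needed here.

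Combining the two steps, $\mathcal{W}(P_t,\widehat P_t) \le \mathcal{W}(P_t^e,\widehat P_t^e) \le \varepsilon_t$. Hence $P_t\in\mathbb B(\widehat P_t,\varepsilon_t)=\mathcal{P}_t$.

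There is no real obstacle in this argument. The only point that needs care is checking that the pushed-forward coupling has the correct marginals and that the cost is invariant under a common shift. Both follow directly from the definition of the pushforward.
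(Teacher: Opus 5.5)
Your proof is correct and follows the paper's own argument: the paper writes $\mathcal{W}(P_t,\widehat P_t)=\mathcal{W}(P_t^e*\delta_{\bar x_t},\widehat P_t^e*\delta_{\bar x_t})\le\mathcal{W}(P_t^e,\widehat P_t^e)\le\varepsilon_t$ and cites the general fact that convolving both measures with a common measure cannot increase $\mathcal{W}$. You instead prove the Dirac special case of that fact directly by translating couplings, which is self-contained and also shows the inequality is in fact an equality.
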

%
% Proposition~\ref{prop:ambiguity_tube_error} implies that we only need to learn an ambiguity tube for the error, which is independent of the feedforward control, and then easily transform it into an ambiguity tube for the state via a translation by the reference state sequence. This decomposition is important for a tree-based motion planning algorithm, since it makes it easy to check whether or not a node of the tree, which corresponds to some state $\boldsymbol x_t$, is in collision with the obstacles: the collision probability is obtained simply by knowing the time step $t$ corresponding to that node, and thus the ambiguity set $\mathcal{P}_t^e$, and the reference state $\bar x_t$.
% 
Proposition~\ref{prop:ambiguity_tube_error} implies that, to obtain an ambiguity tube for the system trajectory, we only need to learn an ambiguity tube for the error $\boldsymbol{e}$ independent of the feedforward control. The tube then can be translated from $\boldsymbol{e}$ to $\boldsymbol{x}$ using the reference state sequence. This decomposition is key for efficient planning since collision-checking a node corresponding to $\boldsymbol x_t$ requires only its time $t$ (and thus $\mathcal{P}_t^e$) and the reference $\bar x_t$.

\subsection{Data-Driven Ambiguity Sets}
\label{sec:data_driven_construction_ambiguity_set}

% We now describe how to obtain an ambiguity set $\mathcal{P}_t$ for $P_t$ from samples $\{\hat{\boldsymbol x}_t^{(i)}\}_{i = 1}^N$ of $\boldsymbol{x}_t$, in such a way that the obtained $\mathcal{P}_t$ contains $P_t$ with high confidence. We refer to such sets as \emph{data-driven} ambiguity sets. For readability, we drop the subscript $t$. For such ambiguity sets there exist results \cite{fournier2015rate,boskos2023high} that guarantee that the unknown distribution belongs to the ambiguity set with user-defined confidence provided that its radius is big enough. In Lemma~\ref{lemma:data_driven_ambiguity_set}, we improve existing work by presenting a tighter bound for the case of the $1$-Wasserstein distance. First, we state the following technical lemma:
We now describe how to construct a \emph{data-driven} ambiguity set $\mathcal{P}_t$ for $P_t$ from samples $\{\hat{\boldsymbol x}_t^{(i)}\}_{i = 1}^N$ of $\boldsymbol{x}_t$ such that $P_t \in \mathcal{P}_t$ with high confidence. For readability, we drop the subscript $t$. Existing results \cite{fournier2015rate,boskos2023high} ensure containment with user-defined confidence given a sufficiently large radius; in Lemma~\ref{lemma:data_driven_ambiguity_set}, we tighten this bound for the $1$-Wasserstein distance. We first state a technical lemma.
\begin{lemma}(\cite[Thm. 1]{fournier2022convergence})
    \label{lemma:concentration_mean}
    Let $\{\hat{\boldsymbol x}^{(i)}\}_{i = 1}^N$ be a set of $N$ i.i.d. samples from a distribution $P$ over $\mathbb R^d$ with $\phi := \text{diam}(\text{supp}(P))$ and $\widehat P$ be their empirical distribution. Then, there exists a function 
    $(d, q, \mathcal{M}_q(P), N) \mapsto g(d, q, \mathcal{M}_q(P), N) \in \reals_{>0}$, 
    such that $\mathbb E[\mathcal{W}(P, \widehat P)] \le g(d, q, \mathcal{M}_q(P), N)$ and $\lim_{N \to \infty} g(d, q, \mathcal{M}_q(P), N) = 0$.
\end{lemma}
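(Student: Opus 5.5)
The plan is to prove a bound of the form $\mathbb E[\mathcal{W}(P,\widehat P)]\le g$ directly, using the multiscale dyadic-partition argument of Dereich et al.\ and Fournier--Guillin, on which \cite[Thm.~1]{fournier2022convergence} builds. The function $g$ will be explicit in $(d,q,\mathcal{M}_q(P),N)$ and will visibly tend to $0$ as $N\to\infty$. Since $\mathcal{W}$ is invariant under translating both measures, I first handle measures supported in the unit cube $[0,1)^d$, then rescale, and finally pass to general $P$ with a finite $q$-th moment ($q>1$) by a dyadic decomposition in space. The bounded-support case with $\phi<\infty$ is a special case, since then every moment is finite.

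\textbf{Step 1 (multiscale transport inequality).} For $\ell\ge0$, let $\mathcal{Q}_\ell$ be the partition of $[0,1)^d$ into $2^{d\ell}$ dyadic cubes of side $2^{-\ell}$. I will show that for any $\mu,\nu$ on $[0,1)^d$,
\begin{align*}
\mathcal{W}(\mu,\nu)\le C_d\sum_{\ell\ge0}2^{-\ell}\sum_{F\in\mathcal{Q}_\ell}|\mu(F)-\nu(F)|.
\end{align*}
The proof is by an explicit coupling built recursively down the tree. At level $\ell$, inside each parent cube, match as much mass as possible between $\mu$ and $\nu$ in each child cube. The unmatched mass is at most the child-level discrepancy, and it moves a distance of at most $\sqrt d\,2^{-\ell}$ within the parent. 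Summing these costs over levels gives the inequality, with $C_d$ of order $\sqrt d$. This is the step I expect to be the main obstacle, because the bookkeeping of the partial matchings must be done carefully to avoid losing more than a constant factor. If the direct construction becomes messy, I will instead prove it through the dual Kantorovich--Rubinstein formulation, approximating each $1$-Lipschitz test function by its cell averages at successive scales.

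\textbf{Step 2 (expectation per cell).} Set $\mu=P$ and $\nu=\widehat P$. Then $N\widehat P(F)$ is Binomial$(N,P(F))$, so
\begin{align*}
\mathbb E|P(F)-\widehat P(F)|\le\min\bigl(2P(F),\sqrt{P(F)/N}\bigr).
\end{align*}
By Cauchy--Schwarz, $\sum_{F\in\mathcal{Q}_\ell}\sqrt{P(F)}\le2^{d\ell/2}$. Combining this with Step 1 gives
\begin{align*}
\mathbb E[\mathcal{W}]\le C_d\sum_\ell2^{-\ell}\min\bigl(2,2^{d\ell/2}N^{-1/2}\bigr).
\end{align*}
Summing the geometric series, with the cutoff at $2^{\ell}\approx N^{1/d}$, yields a bound of order $N^{-1/2}$ for $d=1$, $N^{-1/2}\log N$ for $d=2$, and $N^{-1/d}$ for $d\ge3$. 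By scaling, a support of diameter $\phi$ multiplies the bound by $\phi$.

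\textbf{Step 3 (unbounded support via moments).} Let $B_0$ be the unit ball and $B_k=\{2^{k-1}\le\|x\|<2^k\}$ for $k\ge1$. Markov's inequality gives $P(B_k)\le\mathcal{M}_q(P)^q2^{-q(k-1)}$. I will transport mass between annuli at a cost of order $2^k\,|P(B_k)-\widehat P(B_k)|$. Within each annulus, I apply Steps 1--2 to the normalized restrictions, rescaled by $2^k$, and weight them by $P(B_k)$. The resulting sum,
\begin{align*}
\sum_k2^k\min\bigl(P(B_k),\sqrt{P(B_k)/N}\bigr)\cdot(\text{rate from Step 2}),
\end{align*}
converges when $q>1$ and tends to $0$ as $N\to\infty$ by dominated convergence. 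This defines $g(d,q,\mathcal{M}_q(P),N)$ with the properties claimed in Lemma~\ref{lemma:concentration_mean}.
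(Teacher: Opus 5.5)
Your proposal is correct. It is essentially the dyadic-partition argument behind \cite[Thm.~1]{fournier2022convergence}, which the paper invokes without proof: a multiscale transport bound, binomial variance per cell, and an annulus decomposition with Markov. Your restriction to $q>1$ is precisely the hypothesis the statement leaves implicit, and it cannot be dropped. For $q=1$, the laws $P=(1-\epsilon)\delta_0+\epsilon\delta_{x_\epsilon}$ with $\|x_\epsilon\|=m/\epsilon$ all have $\mathcal{M}_1(P)=m$, yet $\mathbb E[\mathcal{W}(P,\widehat P)]\ge m(1-\epsilon)^N$, which stays near $m$ whenever $\epsilon\ll 1/N$.

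The one thing to repair is the displayed sum in Step~3. The inter-annulus and intra-annulus costs should be added, not multiplied. Up to constants the total is $\sum_k 2^k P(B_k)\min\{1,r_d(NP(B_k))\}$, where $r_d$ is the Step~2 rate evaluated at the effective sample size $NP(B_k)$; you get this by conditioning on the binomial count $N\widehat P(B_k)$, or by working with unnormalized restrictions as Fournier--Guillin do. You must then replace $P(B_k)$ by its Markov bound, using that this expression is nondecreasing in $P(B_k)$, so that $g$ depends on $P$ only through $\mathcal{M}_q(P)$.
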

\begin{lemma}[Data-Driven Ambiguity Set]
    \label{lemma:data_driven_ambiguity_set}
    Let $\{\hat{\boldsymbol x}^{(i)}\}_{i = 1}^N$ be a set of $N$ i.i.d. samples from a distribution $P$ over $\mathbb R^d$ with $\phi := \text{diam}(\text{supp}(P))$, $\widehat P$ be their empirical distribution, and $\beta_1,\beta_2 \in(0,1)$. Let $g(d, q, \mathcal{M}_q(P), N)$ be as in Lemma~\ref{lemma:concentration_mean} with $q\in\mathbb N$
    . Define
    $\varepsilon := g(d, q, \widehat{\mathcal{M}}_q(P), N) + \phi\sqrt{\frac{\log(1/\beta_2)}{2N}}$ and
    \begin{equation}
    \label{eq:data_driven_ambiguity_set}
    \begin{split}
        \widehat{\mathcal{M}}_q(P) &:= \left( \frac{1}{N}\sum_{i=1}^N\|\hat{\boldsymbol x}^{(i)}\|^q + \bigg(\frac{\phi}{2}\bigg)^{\!\!q}\sqrt{\frac{\log(1/\beta_1)}{2N}} \right)^{\!\!1/q}\!\!\!.
    \end{split}
    \end{equation}
     Then, $P \in \mathbb B(\widehat P, \varepsilon)$ with confidence $1-(\beta_1+\beta_2)$.
\end{lemma}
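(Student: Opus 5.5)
Because the support has diameter $\phi$, the Wasserstein distance $\mathcal{W}(P,\widehat P)$ concentrates around its mean, and the mean is controlled by Lemma~\ref{lemma:concentration_mean}. The only gap is that $\mathcal{M}_q(P)$ is unknown, so we replace it with the high-confidence upper bound $\widehat{\mathcal{M}}_q(P)$. The plan has three steps: (i) a moment bound holding with probability $1-\beta_1$; (ii) a concentration bound for $\mathcal{W}(P,\widehat P)$ holding with probability $1-\beta_2$; (iii) a union bound.

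\textbf{Step (i).} Use Hoeffding's inequality on the i.i.d. variables $\|\hat{\boldsymbol x}^{(i)}\|^q$. Their mean is $\mathcal{M}_q(P)^q$, and they take values in a bounded interval. The stated form $(\phi/2)^q\sqrt{\log(1/\beta_1)/(2N)}$ requires an effective range of $(\phi/2)^q$. This would follow if the support lies in a ball of radius $\phi/2$ around the origin, after centering the data; this needs checking, since in general only a range like $\phi^q$ or $(\|c\|+\phi)^q$ is guaranteed. Accepting it, the one-sided Hoeffding bound gives
\[
\mathcal{M}_q(P)^q \le \frac{1}{N}\sum_{i=1}^N \|\hat{\boldsymbol x}^{(i)}\|^q + \Big(\frac{\phi}{2}\Big)^{q}\sqrt{\frac{\log(1/\beta_1)}{2N}}
\]
with probability at least $1-\beta_1$, that is, $\mathcal{M}_q(P)\le \widehat{\mathcal{M}}_q(P)$. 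Next use that Fournier's bound $g$ is nondecreasing in the moment argument; in \cite{fournier2022convergence} it is linear in $\mathcal{M}_q$. Then on this event $g(d,q,\mathcal{M}_q(P),N)\le g(d,q,\widehat{\mathcal{M}}_q(P),N)$.

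\textbf{Step (ii).} View $F(x^{(1)},\dots,x^{(N)}) := \mathcal{W}(P,\frac1N\sum_i\delta_{x^{(i)}})$ as a function of the samples. Replacing one sample $x^{(i)}$ by $x'^{(i)}$ in the support changes the empirical measure by moving mass $1/N$ over distance at most $\phi$. By the triangle inequality for $\mathcal{W}$, $F$ therefore has bounded differences $c_i=\phi/N$. McDiarmid's inequality then gives
\[
\mathcal{W}(P,\widehat P)\le \mathbb E[\mathcal{W}(P,\widehat P)] + \phi\sqrt{\frac{\log(1/\beta_2)}{2N}}
\]
with probability at least $1-\beta_2$. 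Combining with Lemma~\ref{lemma:concentration_mean}, on this event $\mathcal{W}(P,\widehat P)\le g(d,q,\mathcal{M}_q(P),N)+\phi\sqrt{\log(1/\beta_2)/(2N)}$.

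\textbf{Step (iii).} Intersect the two events. By the union bound, the intersection has probability at least $1-\beta_1-\beta_2$. On it,
\[
\mathcal{W}(P,\widehat P)\le g(d,q,\widehat{\mathcal{M}}_q(P),N)+\phi\sqrt{\frac{\log(1/\beta_2)}{2N}}=\varepsilon,
\]
so $P\in\mathbb B(\widehat P,\varepsilon)$. Note that $\widehat{\mathcal{M}}_q$ is random, but no independence between the events is needed: the monotonicity substitution is pointwise on the first event.

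The main obstacle is Step (i). We must justify the $(\phi/2)^q$ range constant for Hoeffding and the monotonicity of $g$ in its moment argument. Fournier's explicit constant is linear in $\mathcal{M}_q$, so monotonicity is fine. For the range, I would first argue that, without loss of generality, the samples are centered so the support lies in a ball of radius $\phi/2$ about the origin. If that fails, I would weaken the constant accordingly.
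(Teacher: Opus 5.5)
Your plan follows the paper's proof step for step. Both use Hoeffding on $\|\hat{\boldsymbol x}^{(i)}\|^q$ for the moment, then McDiarmid with bounded differences $\phi/N$, then Lemma~\ref{lemma:concentration_mean}, monotonicity of $g$ in its moment argument, and a union bound. Your coupling bound $\mathcal{W}(\widetilde P,\widetilde P')\le\|x_i-x_i'\|/N\le\phi/N$ is a cleaner justification than the paper's ``geometric inspection''. The caveat you raise in Step (i) is a genuine gap, and the paper's proof shares it: it invokes Hoeffding with range $(\phi/2)^q$ without justifying that range.

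The constant $(\phi/2)^q$ is valid when $\text{supp}(P)\subseteq\{x:\|x\|\le\phi/2\}$. This holds, for instance, whenever the support is centrally symmetric, since $x,-x\in\text{supp}(P)$ gives $2\|x\|\le\phi$.

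Without such a hypothesis the moment step fails. Take $P=\tfrac12(\delta_0+\delta_\phi)$ on $\reals$ and $q=1$, so $\|\boldsymbol x\|$ takes the values $0$ and $\phi$. By the CLT, the probability of the event $\mathcal{M}_1(P)>\widehat{\mathcal{M}}_1(P)$ tends to $\Phi(-\sqrt{\log(1/\beta_1)/2})$, where $\Phi$ is the standard normal CDF. For $\beta_1=10^{-3}$ this is about $0.03$, not $10^{-3}$.

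Your ``center without loss of generality'' repair does not close the gap, for two reasons.
\begin{itemize}
\item $\mathcal{W}$ is translation invariant, but $\mathcal{M}_q$ and the estimator in \eqref{eq:data_driven_ambiguity_set} are not. Centering therefore proves a statement with a different $\varepsilon$.
\item Even at the best center, Jung's theorem only guarantees an enclosing ball of radius $\phi\sqrt{d/(2(d+1))}$. This exceeds $\phi/2$ for every $d\ge 2$; an equilateral triangle of side $\phi$ already needs radius $\phi/\sqrt{3}$.
\end{itemize}

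Your fallback is the right fix. Either add the hypothesis $\sup_{x\in\text{supp}(P)}\|x\|\le\phi/2$, or replace $(\phi/2)^q$ by $\sup_{x\in\text{supp}(P)}\|x\|^q$ (or by the range $\sup-\inf$ of $\|x\|^q$ over the support). With either change, the rest of your argument goes through unchanged.
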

\begin{remark}
    This bound is tighter than the ones on \cite{boskos2023high,gracia2024data}, as it relies on a one-sided McDiarmid inequality and leverages the $q$-th moment of the distribution instead of just its support. We also note that recent results \cite{figueiredo2025efficient} might produce tighter ambiguity sets. Given that our approach is independent of the way the ambiguity sets have been constructed, we leave incorporating the bounds of  \cite{figueiredo2025efficient} for future research.
\end{remark}

% In the next section we show how to use data-driven ambiguity sets to derive an ambiguity tube for the system's trajectories.

\subsection{Ambiguity Dynamics and Error Ambiguity Tube}

We now learn an ambiguity tube for the trajectories of $\boldsymbol{e}_t$ governed by \eqref{eq:error dynamics}. Even when the ambiguity set for $P_w$ is small, directly propagating it via \eqref{eq: error distribution} can cause it to grow by orders of magnitude \cite{boskos2023high}. A naive alternative is to build a data-driven ambiguity set at each time step from samples of $\boldsymbol e_t$, but this has three issues: (i) it requires an a priori upper bound on the planning horizon, which is unknown in practice; (ii) storing samples at every time step is infeasible for realistic horizons, especially given the data needed for tight guarantees; and (iii) Lemma~\ref{lemma:data_driven_ambiguity_set} provides only pointwise confidences, so combining them via a union bound yields a trajectory-wide confidence that shrinks as the horizon grows.

% Our approach can be seen as an intermediate one between the two previous alternatives, which combines the best of each one. We obtain data-driven ambiguity sets from state (error) observations only for some time steps. For the remaining time steps, we infer ambiguity sets from the data-driven ones taking into account the system dynamics. In this way, we obtain an ambiguity set for every time step, i.e., an ambiguity tube. 

% We refer to an ambiguity set that has been derived from a data-driven method as \emph{derived} ambiguity set. To obtain a derived ambiguity set at time step $t$, we reuse the center of some data-driven one, e.g., at time $\tau \neq t$, and quantify and account for how much the distribution of the error changes between $\tau$ and $t$. Lemma~\ref{lemma:ambiguity_dynamics} quantifies this change in the distribution of the error between two time steps:

Our approach is an intermediate one that combines the best of both approaches: we obtain data-driven ambiguity sets at only some time steps and infer the rest from them via the system dynamics, yielding an ambiguity tube. We call such an inferred set a \emph{derived} ambiguity set. To construct one at time $t$, we reuse the center of a data-driven set at some $\tau \neq t$ and quantify how the error distribution shifts between $\tau$ and $t$; Lemma~\ref{lemma:ambiguity_dynamics} gives this quantification.

%
\iffalse

In practice, we sample from some $P_\tau^e$ with $\tau$ large enough so that $\mathcal{W}(P_\tau^e, P_\infty^e) \approx 0$, and construct its empirical distribution $\widehat P_\tau^e$ and the ambiguity set $\mathbb B_{\varepsilon_\tau}(\widehat P_\tau^e)$ for $P_\tau^e$. Then, we reuse $\widehat P_\tau^e$ by defining $\widehat P_t^e := \widehat P_\tau^e$ and obtain $\varepsilon_t$ as a function of $\varepsilon_\tau$, for all $t$. 
\fi
%
\begin{lemma}[Ambiguity Dynamics]
\label{lemma:ambiguity_dynamics}
    Let the closed-loop dynamics~\eqref{eq:closed_loop_dynamics} be stable, $\tau \in \mathbb N_0$, and $P_\tau^e \in \mathbb B(\widehat P_\tau^e, \varepsilon_\tau)$ for some $\widehat P_\tau^e \in \mathcal{D}_1(\mathbb R^n)$ and $\varepsilon_\tau > 0$. Then,
    \begin{enumerate}[label=(\roman*)]
    \item $\mathcal{W}(\widehat P_\tau^e, P_t^e) \le f_\tau(t)$ for all $t \in \mathbb N_0$, with 
    % $f_\tau(t)$ given by%$f_\tau(t) := \varepsilon_\tau + \|A_\text{cl}^\tau - A_\text{cl}^t\| \mathcal{M}_p(P_0) + \mathcal{M}_p(P_w)\sum_{i=t}^{\tau-1} \|A_\text{cl}^iG\|$ if $t \le \tau$ and $f_\tau(t) := \varepsilon_\tau + \|A_\text{cl}^t - A_\text{cl}^\tau\| \mathcal{M}_p(P_0) + \mathcal{M}_p(P_w)\sum_{i=\tau}^{t-1} \|A_\text{cl}^iG\|$ otherwise.
    %\iffalse
    %
    \begin{align}
    \label{eq:ambiguity_radius}
    \!\!\!\!
    f_\tau(t) = 
        \begin{cases}
                \varepsilon_\tau + \|A_\text{cl}^\tau - A_\text{cl}^t\| \mathcal{M}_p(P_0) + & \\
                \quad \quad \mathcal{M}_p(P_w)\sum_{i=t}^{\tau-1} \|A_\text{cl}^iG\|
                & \text{if } t < \tau\\
                \varepsilon_\tau + \|A_\text{cl}^t - A_\text{cl}^\tau\| \mathcal{M}_p(P_0) + & \\
                \quad \quad \mathcal{M}_p(P_w)\sum_{i=\tau}^{t-1} \|A_\text{cl}^iG\|
                & \text{otherwise}
        \end{cases}
    \end{align}
    %\fi
    %
    %\item $\mathcal{W}(\widehat P_\tau^e, P_t^e)$ is uniformly bounded for all $t, \tau\in\naturals_0$.

    \item $\mathcal{W}(\widehat P_\tau^e, P_t^e)$ is uniformly bounded for all $t\in\naturals_0$.
    % Then, $P_t^e \in \mathbb B(\widehat P_t^e, \varepsilon_t)$.
    \end{enumerate}
\end{lemma}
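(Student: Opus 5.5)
The plan is to split off the data-driven error with the triangle inequality for $\mathcal{W}$:
\[
\mathcal{W}(\widehat P_\tau^e, P_t^e) \le \mathcal{W}(\widehat P_\tau^e, P_\tau^e) + \mathcal{W}(P_\tau^e, P_t^e) \le \varepsilon_\tau + \mathcal{W}(P_\tau^e, P_t^e).
\]
The first inequality is just the triangle inequality. The second uses the hypothesis $P_\tau^e \in \mathbb B(\widehat P_\tau^e, \varepsilon_\tau)$. What remains is to bound the distance between the \emph{true} error distributions at two times. To do this I will bound $\mathcal{W}$ from above by $\mathbb E\|\boldsymbol e_\tau - \boldsymbol e_t\|$ under one explicit coupling. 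This is valid because $\mathcal{W}$ is an infimum over all couplings.

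The coupling comes from the convolution formula \eqref{eq: error distribution}. Unrolling \eqref{eq:error dynamics} and reversing the order of the i.i.d.\ noise gives equality in law:
\[
\boldsymbol e_t \overset{d}{=} A_\text{cl}^t \boldsymbol e_0 + \sum_{j=0}^{t-1} A_\text{cl}^{j} G \boldsymbol w_j ,
\]
and the same holds with $t$ replaced by $\tau$. Here I use the same $\boldsymbol e_0 \sim P_0^e$ and the same independent noise sequence $\boldsymbol w_0,\boldsymbol w_1,\dots \sim P_w$ for both times. This is legitimate because only the marginals $P_t^e$ and $P_\tau^e$ matter. The resulting joint law of the two sums therefore lies in $\Pi(P_\tau^e,P_t^e)$.

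Take $t\ge\tau$. The first $\tau$ noise terms cancel, leaving
\[
\boldsymbol e_t-\boldsymbol e_\tau = (A_\text{cl}^t - A_\text{cl}^\tau)\boldsymbol e_0 + \sum_{j=\tau}^{t-1} A_\text{cl}^j G \boldsymbol w_j .
\]
The norm triangle inequality and submultiplicativity then give
\[
\mathbb E\|\boldsymbol e_t-\boldsymbol e_\tau\| \le \|A_\text{cl}^t - A_\text{cl}^\tau\|\,\mathcal{M}_1 + \mathcal{M}_1(P_w)\sum_{j=\tau}^{t-1}\|A_\text{cl}^jG\|,
\]
where $\mathcal{M}_1$ in the first term is the first moment of the initial (error) distribution. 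Jensen's inequality gives $\mathcal{M}_1\le\mathcal{M}_p$ for $p\ge 1$, which yields the second case of \eqref{eq:ambiguity_radius}. The case $t<\tau$ is identical with the roles of $t$ and $\tau$ swapped, giving the sum over $i=t,\dots,\tau-1$.

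For (ii), stability of $A_\text{cl}$ (spectral radius $\rho<1$) gives constants $c\ge1$ and $\rho'\in(\rho,1)$ with $\|A_\text{cl}^j\|\le c\rho'^j$. From this:
\begin{itemize}
\item $\|A_\text{cl}^t - A_\text{cl}^\tau\|\le 2c$ for all $t$;
\item $\sum_{j}\|A_\text{cl}^jG\| \le c\|G\|/(1-\rho')$, so every partial sum in $f_\tau(t)$ is bounded by this constant.
\end{itemize}
Hence $f_\tau(t)\le \varepsilon_\tau + 2c\,\mathcal{M}_p(P_0) + \mathcal{M}_p(P_w)\,c\|G\|/(1-\rho')$ for every $t$. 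This is finite because $P_0,P_w\in\mathcal{D}$ have finite moments, or have bounded support, which implies finite $p$-th moments. Part (i) then gives the uniform bound.

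The step I expect to need the most care is the coupling: checking that reindexing the i.i.d.\ noise reproduces exactly the marginals in \eqref{eq: error distribution}, so that sharing $\boldsymbol w_0,\dots,\boldsymbol w_{\min(t,\tau)-1}$ is admissible. A second point needing care is the moment term: the statement writes $\mathcal{M}_p(P_0)$, whereas the coupling naturally produces the moment of $P_0^e$, i.e.\ of $\boldsymbol x_0-\bar x_0$. I would state the bound with the moment of the initial error distribution and note that it coincides with the stated one under the paper's convention for $P_0$.
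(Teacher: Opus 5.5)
Your proposal is correct. For part (i) it has the same skeleton as the paper's proof. The triangle inequality peels off $\varepsilon_\tau$, and $\mathcal{W}(P_\tau^e,P_t^e)$ is then bounded through couplings; the difference is packaging. The paper inserts the intermediate law $A^t_{\text{cl}\#}P_0*\Conv_{i=0}^{\tau-1}(A_\text{cl}^{\tau-1-i}G)_\#P_w$ and then chains cited Wasserstein facts: non-expansiveness under convolution, the pushforward coupling $(A_\text{cl}^\tau\times A_\text{cl}^t)_\#P_0$, and $\mathcal{W}(Q,\delta_0)=\mathcal{M}_1(Q)$. Your single synchronous coupling, with shared $\boldsymbol e_0$ and reindexed shared noise, yields the same bound in one step and is fully self-contained. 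Your explicit Jensen step is justified, since the paper's chain actually ends with $\mathcal{M}_1$. Your remark that the natural moment is that of $P_0^e$ rather than $P_0$ is also justified: the paper's proof writes $P_0$ where the error recursion has $P_0^e$.

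Part (ii) genuinely differs. The paper uses a Jordan--Chevalley decomposition and a binomial expansion to get $\|A_\text{cl}^iG\|\le C\,|\lambda|^i i^n$, and bounds $\|A_\text{cl}^t-A_\text{cl}^\tau\|\le\max_{t'}\|A_\text{cl}^{t'}-I\|\,\|A_\text{cl}^\tau\|$. You instead use the Gelfand-type estimate $\|A_\text{cl}^j\|\le c\rho'^j$. Your route is shorter, and it avoids the factor $\Lambda^{-k}$ in the paper's bound, which is undefined when $A_\text{cl}$ has a zero eigenvalue. The paper's argument targets a stronger property, used in the remark after the lemma: $f_\tau(t)-\varepsilon_\tau$ becomes arbitrarily small, uniformly over $t\ge\tau$, once $\tau$ is large. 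Your estimate gives this too with one more line. For $t\ge\tau$ one has $\|A_\text{cl}^t-A_\text{cl}^\tau\|\le 2c\rho'^\tau$ and $\sum_{j\ge\tau}\|A_\text{cl}^jG\|\le c\|G\|\rho'^\tau/(1-\rho')$.
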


Note that the $p$-th moments of $P_0$ and $P_w$ can be bounded empirically as we describe in Proposition~\ref{prop:moments}.
%\ml{Per Luca's comment, is there a need for this note?}
%
\begin{proposition}[Estimation of the $p$-th moment]
\label{prop:moments}
Let $\{\hat{\boldsymbol x}^{(i)}\}_{i=1}^N$ be $N$ i.i.d. samples from a distribution $P$, whose support has diameter $\phi$, and $\beta \in (0,1)$. We obtain via Hoeffding's inequality that
\begin{align*}
%\label{eq:bound_Lp_moment}
    \mathcal{M}_p(P) \le \frac{1}{N}\sum_{i=1}^N\|\hat x^{(i)}\|^p + \bigg(\frac{\phi}{2}\bigg)^p\sqrt{\frac{\log(1/\beta)}{2N}}
\end{align*}
holds with confidence $1-\beta$.\footnote{Note that it does not matter that this is a statistical bound, i.e., it holds with some confidence, since $P_\tau^e \in \mathbb B(\widehat P_\tau^e, \varepsilon_\tau)$ also holds with some confidence according to Lemma~\ref{lemma:data_driven_ambiguity_set}. Therefore, we can use a union bound argument merge both confidences into a single one, and thus Lemma~\ref{lemma:ambiguity_dynamics} holds with the combined confidence.} %\IG{Merge with Lemma~\ref{lemma:data_driven_ambiguity_set}.}
%{\color{red}LL: Why do you care about the moments if you already have the ambiguity sets and your goal is to compute probabilities?}\IG{We have ambiguity sets for the distribution of the state at some time steps, for example $\tau_1=1$ and $\tau_2=5$. In order to obtain the radius of the ambiguity sets at $t \in (\tau_1,\tau_5)$, we make use of Lemma 2, which requires knowing the moments}
\end{proposition}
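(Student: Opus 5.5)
The plan is a direct application of the one-sided Hoeffding inequality to the scalar random variables $\boldsymbol z^{(i)} := \|\hat{\boldsymbol x}^{(i)}\|^p$, $i=1,\dots,N$. These are i.i.d. because the $\hat{\boldsymbol x}^{(i)}$ are, and their common mean is $\mathbb E_P[\|\boldsymbol x\|^p] = \mathcal{M}_p(P)^p$. I read the displayed inequality as a bound on this $p$-th power of the moment. This matches how the same quantity enters $\widehat{\mathcal{M}}_q(P)$ in Lemma~\ref{lemma:data_driven_ambiguity_set}, where the $1/q$-th root is taken only afterwards. A bound on $\mathcal{M}_p(P)$ itself then follows by taking the $1/p$-th root of both sides, since $s\mapsto s^{1/p}$ is monotone.

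The steps, in order, are as follows.
\begin{enumerate}
\item Show that each $\boldsymbol z^{(i)}$ lies almost surely in an interval $[a,b]$ of length $b-a \le (\phi/2)^p$.
\item Apply Hoeffding's inequality in its one-sided form,
\begin{align*}
\Pr\Big[\mathcal{M}_p(P)^p - \tfrac{1}{N}\textstyle\sum_{i=1}^N \boldsymbol z^{(i)} \ge s\Big] \le \exp\!\Big(-\tfrac{2Ns^2}{(b-a)^2}\Big).
\end{align*}
\item Set the right-hand side equal to $\beta$ and solve for $s$. This gives $s = (b-a)\sqrt{\log(1/\beta)/(2N)} \le (\phi/2)^p\sqrt{\log(1/\beta)/(2N)}$.
\item Conclude that with probability at least $1-\beta$ the empirical $p$-th moment plus this $s$ upper-bounds $\mathcal{M}_p(P)^p$. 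Only the one-sided tail is needed, so there is no factor of $2$ inside the logarithm.
\end{enumerate}

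The main obstacle is step 1, the range bound. A diameter bound alone does not control $\|x\|^p$: the norm is measured from the origin, not from a point of the support. For the statement to hold exactly as written, I will use that the supports in question are contained in the ball of radius $\phi/2$ centred at the origin. Then $0 \le \|x\|^p \le (\phi/2)^p$, so $[a,b]=[0,(\phi/2)^p]$.

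This containment is the natural situation for the error and noise distributions to which Proposition~\ref{prop:moments} is applied. Examples are zero-centred or symmetric supports such as $W$, and $X_0$ translated by $-\bar x_0$; for a support symmetric about the origin, the farthest point lies at distance exactly $\phi/2$. If this fails, I would replace $(\phi/2)^p$ by $(\sup_{x\in\text{supp}(P)}\|x\|)^p$, or by $\max\|x\|^p - \min\|x\|^p$ over the support. The Hoeffding argument itself goes through unchanged, so the proof is otherwise a single routine concentration step.

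Finally, as the footnote indicates, this confidence $1-\beta$ is later combined with that of Lemma~\ref{lemma:data_driven_ambiguity_set} via a union bound. That combination needs nothing beyond the bound proved here.
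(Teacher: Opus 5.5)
Your proposal is correct and uses the same argument the paper relies on: a one-sided Hoeffding bound for the i.i.d. variables $\|\hat{\boldsymbol x}^{(i)}\|^p$, with the $1/p$-th root taken only afterwards, exactly as in the first step of the proof of Lemma~\ref{lemma:data_driven_ambiguity_set} and in the definition of $\widehat{\mathcal{M}}_q(P)$ there. The two caveats you flag are genuine implicit assumptions of the paper, not gaps in your proof: the display must be read as a bound on $\mathcal{M}_p(P)^p$, and the range $(\phi/2)^p$ requires the support to lie in the ball of radius $\phi/2$ centred at the origin (as for the zero-centred noise and error supports), since a diameter bound alone does not control $\|x\|^p$.
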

%

% From \eqref{eq:ambiguity_radius}, we observe that $f_\tau(t)$ has a minimum at $t = \tau$, and monotonically increases with $|t - \tau|$. The intuition behind this shape is that the bigger the difference in time, the bigger the discrepancy between $\widehat P_\tau^e$ and $P_t^e$. In Section~\ref{sec:lower_dimensional}, we show how to significantly reduce the values of the quantities obtained in Lemma~\ref{lemma:ambiguity_dynamics} by employing lower dimensional ambiguity tubes, this is, tubes for the projection of the state on a lower-dimensional subspace.
%the distribution of the error, and the bigger the ambiguity set $\mathcal{P}_t$ needs to be if we want to reuse $\widehat P_\tau^e$ at time $t$.

From \eqref{eq:ambiguity_radius}, $f_\tau(t)$ has a minimum at $t = \tau$ and increases monotonically with $|t - \tau|$: the larger the time gap, the greater the discrepancy between $\widehat P_\tau^e$ and $P_t^e$. In Section~\ref{sec:lower_dimensional}, we significantly reduce these quantities using lower-dimensional ambiguity tubes, i.e., tubes for projections of the state onto a lower-dimensional subspace.

\begin{remark}
    % From Lemma~\ref{lemma:ambiguity_dynamics}, if $A_\text{cl}$ is stable, the values in $\{\mathcal{W}(\widehat P_\tau^e, P_t^e) : t \ge \tau \}$ can be made as small as desired by picking $\tau$ big enough. This implies that if we have an ambiguity set $\mathcal{P}_\tau^e = \mathbb B(\widehat P_\tau^e, \varepsilon_\tau)$ for some $\tau$ big enough, the subsequent variation in $P_t^e$ with respect to $\widehat P_\tau^e$ becomes very close to $\varepsilon_\tau$. This allows us to reuse $\widehat P_\tau^e$ as the center of the ambiguity sets $\mathcal{P}_t = \mathbb B(\widehat P_\tau^e, \varepsilon_t)$ for all $t \ge \tau$ without the need for a big $\varepsilon_t$ to account for said variation. Hence, we only need to construct a finite number of data-driven ambiguity sets, and derive the rest from these ones, making our approach suitable for problems in which the planning horizon cannot be estimated a priory. Furthermore, as we describe later, this result is also key to obtain an ambiguity tube for trajectories of arbitrary length with a confidence that is independent from its length.
    By Lemma~\ref{lemma:ambiguity_dynamics}, if $A_\text{cl}$ is stable, $\mathcal{W}(\widehat P_\tau^e, P_t^e)$ for $t \ge \tau$ can be made arbitrarily small by choosing $\tau$ large enough. Hence, given an ambiguity set $\mathcal{P}_\tau^e = \mathbb B(\widehat P_\tau^e, \varepsilon_\tau)$ for sufficiently large $\tau$, the subsequent variation of $P_t^e$ around $\widehat P_\tau^e$ stays close to $\varepsilon_\tau$, and we can reuse $\widehat P_\tau^e$ as the center of $\mathcal{P}_t = \mathbb B(\widehat P_\tau^e, \varepsilon_t)$ for all $t \ge \tau$ without inflating $\varepsilon_t$. We thus need only finitely many data-driven sets, deriving the rest, which makes the approach suitable when the planning horizon is unknown a priori. This is also key, as shown later, to obtaining an ambiguity tube of arbitrary length with confidence independent of the horizon.
\end{remark}

In Alg.~\ref{alg:tube} we formalize the construction of the ambiguity tube $(\mathcal{P}_t^e)_{t \in \naturals_0}$. Let $\tau_1, \tau_2,\dots, \tau_J \le H$ be a set
%\ml{it's not a set; it's a sequence.  If you want a set, then use $\{ \tau_i\}_{i=1}^J$}
of $J < \infty$ time steps that Alg.~\ref{alg:tube} takes as inputs. For each $\tau_j$, Lines 2-4 construct a data-driven ambiguity set $\mathcal{P}_{\tau_j}^e : =\mathbb B(\widehat P_{\tau_j}^e, \varepsilon_{\tau_j})$ from the samples of $e_{\tau_j}$, each with confidence $1-\beta/J$, as described in Section~\ref{sec:data_driven_construction_ambiguity_set}. Note that to this end we need to know $\text{diam}(\text{supp}(P_{\tau_j}^e))$, which we obtain by propagating the support of $P_0$ and $P_w$ through the dynamics \footnote{This is a well-understood problem for linear systems \cite{althoff2015introduction}}. Then, for each $t\in\mathbb N_0\setminus\{\tau_j\}_{j = 1}^J$, Lines 6-9 derive the ambiguity set $\mathcal{P}_t^e$ from the already computed data-driven ambiguity set at some $\tau \in \{\tau_j\}_{j = 1}^J$: we define $\mathcal{P}_t^e := \mathbb B(\widehat P_t^e, \varepsilon_t)$ with $\widehat P_t^e := \widehat P_\tau^e$ and $\varepsilon_t := f_\tau(t)$ as in \eqref{eq:ambiguity_radius}.
%, thus ensuring that $P_t^e \in \mathcal{P}_t^e$ with the same confidence $1-\beta/J$. 
As a result, $\varepsilon_t$ is defined as the pointwise minimum of $J$ functions where the minimum is attained at each $\tau_j$, leading to the ``sawtooth'' shape shown in Fig.~\ref{fig:sawtooth}.

%
% \begin{algorithm}
% \caption{Obtain Ambiguity Tube}\label{alg:tube}
% \begin{algorithmic}[1]
% \Require{$\mathcal{M}_p(P_w), \mathcal{M}_p(P_0), \beta, \{\tau_j\}_{j = 1}^J, A_\text{cl}, G, X_0, W,\newline \{\hat{\boldsymbol e}^{(i)}_{\tau_1} \: \hat{\boldsymbol e}^{(i)}_{\tau_2} \ldots \hat{\boldsymbol e}^{(i)}_{\tau_J} \}_{i=1}^N$}
% \Ensure{Ambiguity tube $(\mathcal{P}_t^e)_{t \in \mathbb N_0}$}
% \State Construct $\mathbb B(\widehat P_{\tau_j}^e, \varepsilon_{\tau_j})$ from $\{\hat{\boldsymbol e}^{(i)}_{\tau_j}\}_{i=1}^N$ and with $\varepsilon_{\tau_j}$ as in Lemma~\ref{lemma:data_driven_ambiguity_set} with confidence $1 - \beta/J$ for all $j \in \{1, \dots, J\}$
% \State $\mathcal{P}_{\tau_j}^e \gets \mathbb B(\widehat P_{\tau_j}^e, \varepsilon_{\tau_j})$ for all $j \in \{1, \dots, J\}$
% %
% \For{$t \in \mathbb N_0\setminus\{\tau_j\}_{j=1}^J$}
% \State $\tau \gets \arg\min\big\{ f_{\tau_j}(t) : j \in \{1,\dots, J\} \big\}$ with $f_{\tau_j}(t)$ as given in \eqref{eq:ambiguity_radius}
% \State $\widehat P_{t}^e \gets \widehat P_\tau^e$
% \State $\varepsilon_t \gets f_\tau(t)$
% \State $\mathcal{P}_t^e \gets \mathbb B(\widehat P_t^e, \varepsilon_t)$
% \EndFor
% %
% \end{algorithmic}
% \end{algorithm}
%
\begin{algorithm}
\caption{Construct Ambiguity Tube}
\label{alg:tube}
\KwIn{$\mathcal{M}_p(P_w), \mathcal{M}_p(P_0), \beta, \{\tau_j\}_{j = 1}^J, A_\text{cl}, G, X_0, W,$
$\{\hat{\boldsymbol e}^{(i)}_{\tau_1}, \hat{\boldsymbol e}^{(i)}_{\tau_2}, \ldots, \hat{\boldsymbol e}^{(i)}_{\tau_J} \}_{i=1}^N$}
\KwOut{Ambiguity tube $(\mathcal{P}_t^e)_{t \in \mathbb{N}_0}$}

\For{$j \in \{1, \dots, J\}$}{
    Construct $\mathbb{B}(\widehat{P}_{\tau_j}^e, \varepsilon_{\tau_j})$ from $\{\hat{\boldsymbol{e}}^{(i)}_{\tau_j}\}_{i=1}^N$ 
    with $\varepsilon_{\tau_j}$ as per Lemma~\ref{lemma:data_driven_ambiguity_set}, with confidence $1 - \beta/J$\;
    $\mathcal{P}_{\tau_j}^e \gets \mathbb{B}(\widehat{P}_{\tau_j}^e, \varepsilon_{\tau_j})$\;
}

\For{$t \in \mathbb{N}_0 \setminus \{\tau_j\}_{j=1}^J$}{
    $\tau \gets \arg\min\big\{ f_{\tau_j}(t) : j \in \{1,\dots, J\} \big\}$\;
    $\widehat{P}_t^e \gets \widehat{P}_\tau^e$\;
    $\varepsilon_t \gets f_\tau(t)$\;
    $\mathcal{P}_t^e \gets \mathbb{B}(\widehat{P}_t^e, \varepsilon_t)$\;
}

\end{algorithm}

In practice, we fix $J$ and choose $\{\tau_j\}_{j = 1}^J$ so that the upper bound on $(\varepsilon_t)_{t\in\naturals_0}$ is as small as possible. 
\begin{figure}
    \centering
    \includegraphics[width=0.7\linewidth]{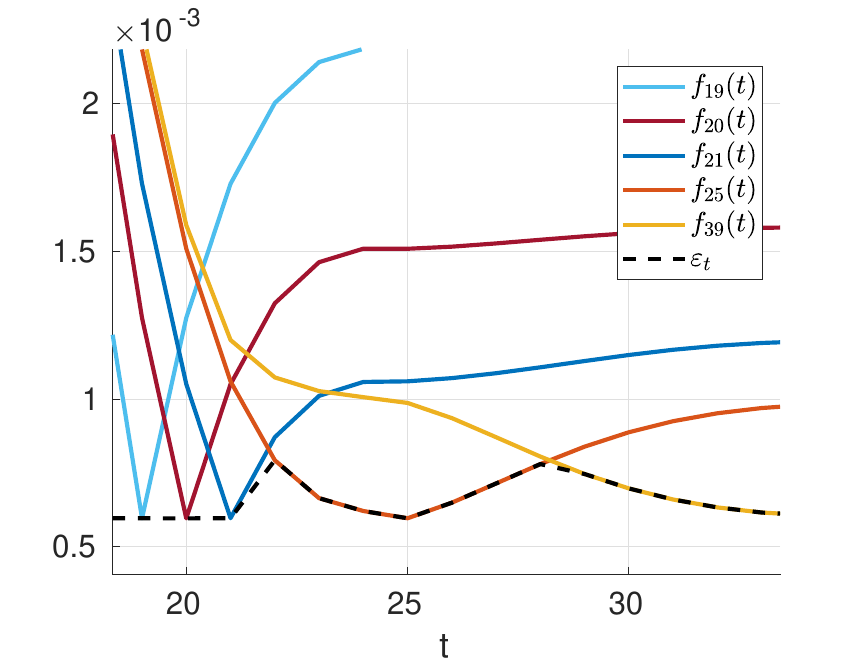}
    \caption{
    Visual depiction of $\varepsilon_t$ as the pointwise minimum of the functions $\{f_{\tau_j}(\cdot):j\in\{1,\dots, J\}\}$. In this case, the data-driven ambiguity sets are computed for $t \in \{19,20,21,25,39\}$, leading to the five solid lines, whereas the ambiguity sets for all other time steps $t \ge 19$ are derived from the former. It is observed that for $\tau_j$ big enough, $f_{\tau_j}(\cdot)$ grows slowly with $t$, thus the other values $\tau_i$ for $i\neq j$ need not be close to $\tau_j$ to guarantee that $\varepsilon_t$ remains small. In fact, only two values of $\tau_{J-1} = 25$ and $\tau_{J} = 39$ are used to obtain $\varepsilon_t$ for all $t \ge 22$.
    }
    \label{fig:sawtooth}
\end{figure}

The following theorem ensures that the constructed ambiguity tube contains the trajectories of the error's distribution.
\begin{theorem}[Soundness of the Ambiguity Tube]
\label{thm:ambiguity_tube}
    The ambiguity tube $(\mathcal{P}_t^e)_{t \in \mathbb N_0}$ obtained from Alg.~\ref{alg:tube} contains the trajectory of the distribution of the error with confidence $1-\beta$, i.e.,
    %
    % \begin{align*}
    %     P_t^e \in \mathcal{P}_t^e \qquad \forall\:t\in\mathbb N_0
    % \end{align*}
    %
    $P_t^e \in \mathcal{P}_t^e$ for all $t\in\naturals_0$
    with confidence $1-\beta$.
\end{theorem}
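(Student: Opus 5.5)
The plan is to reduce the infinite family of containment events $\{P_t^e \in \mathcal{P}_t^e\}_{t\in\naturals_0}$ to the finitely many data-driven events at $\tau_1,\dots,\tau_J$, and then apply a union bound over these $J$ events. Define the good event
\begin{align*}
E := \bigcap_{j=1}^J \big\{ P_{\tau_j}^e \in \mathbb B(\widehat P_{\tau_j}^e, \varepsilon_{\tau_j}) \big\}.
\end{align*}
The claim then follows from two facts: $E$ has probability at least $1-\beta$ over the sampled trajectories, and on $E$ every $P_t^e$ lies in $\mathcal{P}_t^e$ deterministically.

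\textbf{Step 1: probability of $E$.} By Lines 2--3 of Alg.~\ref{alg:tube}, each $\varepsilon_{\tau_j}$ is computed via Lemma~\ref{lemma:data_driven_ambiguity_set} with confidence $1-\beta/J$. Here $\beta/J$ is split into the two parts $\beta_1+\beta_2$, and the moment-estimation failure from Proposition~\ref{prop:moments} is absorbed into the same budget, as the footnote there suggests. The samples at different $\tau_j$ come from the same trajectories and are therefore dependent. This does not matter, because the union bound requires no independence:
\begin{align*}
\Pr[E^c] \le \sum_{j=1}^J \Pr\big[P_{\tau_j}^e \notin \mathbb B(\widehat P_{\tau_j}^e, \varepsilon_{\tau_j})\big] \le J\cdot \beta/J = \beta.
\end{align*}

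\textbf{Step 2: deterministic propagation on $E$.} Fix an outcome in $E$.
\begin{enumerate}[label=(\alph*)]
\item For $t=\tau_j$, containment is immediate from the definition of $E$.
\item For $t\notin\{\tau_j\}$, Alg.~\ref{alg:tube} sets $\widehat P_t^e=\widehat P_\tau^e$ and $\varepsilon_t=f_\tau(t)$ for the minimizing $\tau\in\{\tau_j\}$. Since $P_\tau^e\in\mathbb B(\widehat P_\tau^e,\varepsilon_\tau)$ holds on $E$, the hypothesis of Lemma~\ref{lemma:ambiguity_dynamics}(i) is met with this $\tau$. The lemma then gives $\mathcal{W}(\widehat P_\tau^e,P_t^e)\le f_\tau(t)=\varepsilon_t$, that is, $P_t^e\in\mathcal{P}_t^e$.
\end{enumerate}
This argument holds simultaneously for all $t\in\naturals_0$ and consumes no further confidence. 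So on $E$ the entire tube is sound, which proves the theorem.

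\textbf{Main obstacle.} The delicate point is that $f_\tau(t)$ contains the quantities $\mathcal{M}_p(P_0)$ and $\mathcal{M}_p(P_w)$. If these are replaced by the empirical upper bounds of Proposition~\ref{prop:moments}, the derivation holds only on an additional high-probability event. I would handle this in one of two ways:
\begin{enumerate}[label=(\roman*)]
\item assume the moments are given as inputs to Alg.~\ref{alg:tube}, as its input list indicates; or
\item include the two moment-bound events in $E$ and rescale the per-event confidences, so that the total failure probability over $J+2$ events is still at most $\beta$.
\end{enumerate}
A second subtlety is that the minimizing index $\tau$ depends on the data. This is harmless, since Step 2 holds for every $\tau\in\{\tau_j\}$ on $E$.
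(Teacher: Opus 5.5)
Your proposal is correct and follows essentially the same route as the paper. The paper also groups the time steps into the sets $I_j$ derived from each $\tau_j$, uses Lemma~\ref{lemma:ambiguity_dynamics}(i) to transfer containment from $\tau_j$ to all of $I_j$, and takes a union bound over the $J$ data-driven events of confidence $1-\beta/J$ each; your single good event $E$ packages this more cleanly, making explicit that the data-dependent choice of $\tau$ and the dependence across the $\tau_j$ are harmless, and the paper, like your option (i), treats $\mathcal{M}_p(P_0)$ and $\mathcal{M}_p(P_w)$ as given inputs.
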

We illustrate  Theorem~\ref{thm:ambiguity_tube} in Fig.~\ref{fig:ambiguity_tube}.
\begin{figure}
    \centering
    \includegraphics[width=0.6\linewidth]{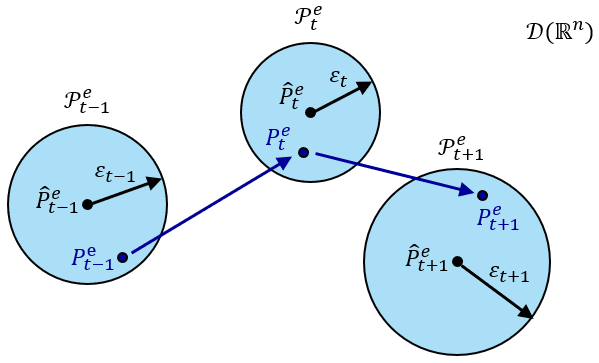}
    \caption{Visual interpretation of Theorem~\ref{thm:ambiguity_tube}. It can be observed that the ambiguity tube is sound, i.e., the ambiguity set $\mathcal{P}_t^e$ contains the true distribution $P_t^e$ at every time step $t$.
    }
    \label{fig:ambiguity_tube}
\end{figure}

\begin{remark}
    Theorem~\ref{thm:ambiguity_tube} guarantees that the confidence on the tube containing the entire trajectories of the distribution of the error does not depend on the planning horizon, but only on $J$. This makes it possible to obtain a tube of infinite length while keeping the confidence finite.
\end{remark}

It is easy to observe that as the number of samples goes to infinity, each ambiguity set $\mathcal{P}_t^e$ shrinks to the singleton $\{P_t^e\}$ with confidence $1-\beta$, thus eliminating the conservativeness of our approach. This is an advantage of using Wasserstein ambiguity sets over, for example, moment ambiguity sets  \cite{mohajerin2018data}.

\begin{remark}[Clustering Empirical Distributions]
\label{rem:cluster}
    By Lemma~\ref{lemma:data_driven_ambiguity_set}, the number of samples $N$ needed for $\mathcal{P}_t^e = \mathbb B(\widehat P_t^e, \varepsilon_t)$ to contain $P_t^e$ is inversely proportional to $\varepsilon_t$, so tight ambiguity sets require many samples, which are costly to store and compute with. To alleviate this, we apply scenario reduction \cite{rujeerapaiboon2022scenario} to cluster the samples into a weighted set $\{(\hat{\boldsymbol x}^{(i)}, \boldsymbol a_i)\}_{i = 1}^{N_c}$ and redefine the center of the ambiguity set as the discrete distribution
    % \begin{align*}
    %     \widehat P_t^{e,c} := \sum_{i = 1}^{N_c} \boldsymbol a_i \delta_{\boldsymbol{\hat x}^{(i)}}.
    % \end{align*}
    $\widehat P_t^{e,c} := \sum_{i = 1}^{N_c} \boldsymbol a_i \delta_{\boldsymbol{\hat x}^{(i)}}.$
    Quantifying the discrepancy $\mathcal{W}(\widehat P_t^e, \widehat P_t^{e,c})$ between the empirical and clustered distributions and adding it to the radius, the triangle inequality yields
    \begin{align*}
        P_t^e \in \mathcal{P}_t^{e,c} := \mathbb B(\widehat P_t^{e,c}, \varepsilon_t + \mathcal{W}(\widehat P_t^e, \widehat P_t^{e,c})).
    \end{align*}
    We use k-means clustering \cite{lloyd1982least}, which minimizes the $2$-Wasserstein distance between $\widehat P_t^e$ and $\widehat P_t^{e,c}$ \cite{rujeerapaiboon2022scenario}.
\end{remark}

% To simplify the notation, in the rest of this paper, we assume that all data-driven ambiguity sets have been clustered without explicitly indicating so.

For notational simplicity, we henceforth assume that all data-driven ambiguity sets are already clustered.
\section{Validity Check}
\label{sec:validity_checking}

% As discussed in the preliminary Section~\ref{sec:treebasedplanners}, there are three main steps in sampling-based motion planning: select, extend (propagate), and validity check (see Alg.~\ref{alg:treebasedplanners}). While in Section~\ref{sec:ambiguity_tube} we obtained an ambiguity tube for the robot's trajectory, here, In this section we formulate the validity-checking algorithm, which leverages the ambiguity tube to decide whether state $\boldsymbol{x}_t$ satisfies the chance constraints of \eqref{eq:cc_constraints_safety}. We begin by formulating Alg.~\ref{alg:collision}; the basic validity checking algorithm. Then, in Subsection~\ref{sec:lazy} we propose a more conservative, but significantly faster validity checking algorithm, and in Subsection~\ref{sec:combined_validity_checkers} we propose to combine both to achieve good empirical performance.

As discussed in Section~\ref{sec:treebasedplanners}, sampling-based motion planning has three main steps: select, extend, and validity check (see Alg.~\ref{alg:treebasedplanners}). Having obtained an ambiguity tube in Section~\ref{sec:ambiguity_tube}, we now use it to decide whether $\boldsymbol{x}_t$ satisfies the chance constraints \eqref{eq:cc_constraints_safety}. We first formulate the basic validity checker in Alg.~\ref{alg:collision}; Subsection~\ref{sec:lazy} presents a more conservative but significantly faster variant, and Subsection~\ref{sec:combined_validity_checkers} combines the two for strong empirical performance.

% The basic validity checking method follows this logic: first, given the ambiguity set $\mathcal{P}^e_t$ for $\boldsymbol{e}_t$ and the reference state $\bar x_t$, we obtain the state ambiguity set $\mathcal{P}_t$ as described in Proposition~\ref{prop:ambiguity_tube_error}.
%
%
% Next, we determine whether or not the robot is not in collision with the obstacles by checking the distributionally robust inequality:
% %
% \begin{align}
% \label{eq:dr_cc_obs}
%     \min_{P\in \mathbb B(\widehat P_t, \varepsilon_t)} P\big[ \boldsymbol x_t \notin X_\text{obs} \big] > p_\text{safe}.
% \end{align}
% %
% Since, by Thm.~\ref{thm:ambiguity_tube} $P_t^e \in \mathcal{P}_t^e$, it is evident that the previous inequality implies that the robot is not in collision. Similarly, we conclude that the robot has reached the goal if
% %
% \begin{align}
% \label{eq:dr_cc_goal}
%     \min_{P\in \mathbb B(\widehat P_t, \varepsilon_t)} P\big[ \boldsymbol x_t \in X_\text{goal} \big] > p_\text{safe}.
% \end{align}
%

The basic validity checker proceeds as follows. Given $\mathcal{P}^e_t$ and the reference state $\bar x_t$, we form the state ambiguity set $\mathcal{P}_t$ via Proposition~\ref{prop:ambiguity_tube_error} and check collision chance constraint by checking the distributionally robust inequality
\begin{align}
\label{eq:dr_cc_obs}
    \min_{P\in \mathbb B(\widehat P_t, \varepsilon_t)} P\big[ \boldsymbol x_t \notin X_\text{obs} \big] > p_\text{safe}.
\end{align}
Since $P_t^e \in \mathcal{P}_t^e$ by Theorem~\ref{thm:ambiguity_tube}, \eqref{eq:dr_cc_obs} implies the robot is collision-free. Goal condition is checked analogously via
\begin{align}
\label{eq:dr_cc_goal}
    \min_{P\in \mathbb B(\widehat P_t, \varepsilon_t)} P\big[ \boldsymbol x_t \in X_\text{goal} \big] > p_\text{safe}.
\end{align}

\iffalse
Note that the problems of finding the worst-case probabilities in \eqref{eq:dr_cc_obs} and \eqref{eq:dr_cc_goal} are equivalent, since we can write
%
\begin{align*}
    \min_{P\in \mathbb B(\widehat P_t, \varepsilon_t)} P\big[ \boldsymbol x_t \in X_\text{goal} \big] = 1 - \max_{P\in \mathbb B(\widehat P_t, \varepsilon_t)} P\big[ \boldsymbol x_t \in X_\text{goal}\setminus\reals^n \big].
\end{align*}
%
Therefore,
\fi

Due to the similarities in \eqref{eq:dr_cc_obs} and \eqref{eq:dr_cc_goal}, we only describe the algorithm to check Condition~\eqref{eq:dr_cc_obs}. Adapting it for the goal check is straightforward. 
% 
% When the center of the ambiguity set is an empirical distribution, a fast method to solve the uncertainty quantification Problem~\eqref{eq:dr_cc_obs} is provided in \cite[Example 7]{gao2023distributionally}. Said method does not involve solving expensive optimization problems, relying instead on computing the distances between the points in the support of the empirical distribution and the set. Note that when the center of the ambiguity set, $\widehat P_t$, has been clustered to reduce the number of points in its support (see Remark~\ref{rem:cluster}), it is no longer an empirical distribution, but a discrete distribution with rational weights $\boldsymbol a_i \in [0,1]$. As such, we can interpret it as an empirical distribution where some points are repeated, enabling us to employ the same approach when $\widehat P_t$, has been clustered.
% 
When the center of the ambiguity set is an empirical distribution, \cite[Example 7]{gao2023distributionally} provides a fast, optimization-free method based on the distances between the support points and the set. After clustering to reduce the support size (see Remark~\ref{rem:cluster}), $\widehat P_t$ is no longer empirical but a discrete distribution with rational weights $\boldsymbol a_i \in [0,1]$. Interpreting it as an empirical distribution with repeated points lets us apply the same method.

The uncertainty quantification method relies on the fact that there exists a finitely-supported worst-case distribution $P^*_t$ that attains the minimum in \eqref{eq:dr_cc_obs}, i.e.,
\begin{align*}
    P^*_t\big[ \boldsymbol x_t \notin X_\text{obs} \big] = \min_{P\in \mathbb B(\widehat P_t,\varepsilon_t)} P \big[ \boldsymbol x_t \notin X_\text{obs} \big],
\end{align*}
and which is easily constructed by transporting mass from $\widehat P_t$ to $X_\text{obs}$ in a greedy fashion, i.e., transporting first the mass from the atoms of $\widehat P_t$ that are closer to $X_\text{obs}$. 
The method starts by computing $\{\text{dist}(\hat{\boldsymbol x}^{(i)}, X_\text{obs})\}_{i = 1}^{N_c}$ and sorting $\{\hat{\boldsymbol x}^{(i)}\}_{i = 1}^{N_c}$ according to $\text{dist}(\hat{\boldsymbol x}^{(i)}, X_\text{obs})$, in an increasing fashion. Let $i_0-1$ be the maximum number of atoms that can be transported to $X_\text{obs}$ while satisfying the transport cost constraint $\mathcal{W}(\widehat P_t, P^*_t) \le \varepsilon_t$, and $m_0$ be the maximum mass from $\hat{\boldsymbol x}^{(i_0)}_t$that we can transport to $X_\text{obs}$ without violating this constraint, i.e.,
$i_0 := \arg\max \left\{j \le {N_c} : \sum_{i = 1}^{j - 1} \boldsymbol a_i \text{dist}(\hat{\boldsymbol x}^{(i)}, X_\text{obs}) \le \varepsilon_t \right\}$,
\begin{equation}
\begin{split}
\label{eq:collision_probability0}
    % i_0 &:= \arg\max \bigg\{j \le {N_c} : \sum_{i = 1}^{j - 1} \boldsymbol a_i \text{dist}(\hat{\boldsymbol x}^{(i)}, X_\text{obs}) \le \varepsilon_t \bigg\}\\
    m_0 &:= \frac{\varepsilon_t - \sum_{i = 1}^{i_0 - 1} \boldsymbol a_i \text{dist}(\hat{\boldsymbol x}^{(i)}, X_\text{obs})}{\text{dist}(\hat{\boldsymbol x}^{(i_0)}_t, X_\text{obs})}.
    \end{split}
\end{equation}
The worst-case distribution $P^*_t$ is obtained as the one supported on the atoms after the perturbation, which yields
\begin{align}
\label{eq:collision_probability}
    \min_{P\in \mathbb B(\widehat P_t,\varepsilon_t)} P \big[\boldsymbol x_t \notin X_\text{obs}\big] = P^*_t \big[\boldsymbol x_t \notin X_\text{obs}\big] = \sum_{i = i_0}^{N_c} \boldsymbol a_i -  m_0.
\end{align}
If this probability is higher than $p_\text{safe}$, we conclude that $\boldsymbol{x}_t$ is not in collision. Otherwise, the state is denoted valid. The pseudocode for the validity checking algorithm is given in Alg.~\ref{alg:collision} and depicted in Fig.~\ref{fig:dr_uq}.
%\ml{give a quick overview of the steps in the alg.}\IG{is it necessary? We already explained the steps in english and the pseudocode is fairly simple}
%
% \begin{algorithm}
% \caption{Validity Checking Via Probability Mass Transport}\label{alg:collision}
% \begin{algorithmic}[1]
% \Require{$X_\text{obs}, (\mathcal{P}_t)_{t \in \mathbb N_0}, p_\text{safe}, t, \bar x_t$}
% \Ensure{$\text{isvalid}$}
% \State Obtain $\mathcal{P}_t = \mathbb B(\widehat P_t, \varepsilon_t)$ via Proposition~\ref{prop:ambiguity_tube_error}
% \State Obtain $\{(\hat{\boldsymbol x}^{(i)}, \boldsymbol a_i)\}_{i = 1}^{N_c}$ from $\widehat P_t$
% \State Compute $\{\text{dist}(\hat{\boldsymbol x}^{(i)}, X_\text{obs})\}_{i = 1}^{N_c}$
% \State Sort $\{\hat{\boldsymbol x}^{(i)}\}_{i=1}^{N_c}$ according to $\text{dist}(\hat{\boldsymbol x}^{(i)}, X_\text{obs})$ in an increasing fashion
% \State Obtain $P^*_t\big[ \boldsymbol x_t \notin X_\text{obs} \big]$ via \eqref{eq:collision_probability0} and
% \eqref{eq:collision_probability}
% \If{$P^*_t\big[ \boldsymbol x_t \notin X_\text{obs} \big] > p_\text{safe}$}
% \State $\text{isvalid} \gets \text{true}$
% \Else
% \State $\text{isvalid} \gets \text{false}$
% \EndIf
% %
% \end{algorithmic}
% \end{algorithm}
%
\begin{algorithm}[t]
\caption{Validity Checking Via Probability Mass Transport}
\label{alg:collision}
\KwIn{$X_\text{obs}, (\mathcal{P}_t^e)_{t \in \mathbb{N}_0}, p_\text{safe}, t, \bar{x}_t$}
\KwOut{$\text{isvalid}$}

Obtain $\mathcal{P}_t = \mathbb{B}(\widehat{P}_t, \varepsilon_t)$ via Proposition~\ref{prop:ambiguity_tube_error}\;

Obtain $\{(\hat{\boldsymbol{x}}^{(i)}, \boldsymbol{a}_i)\}_{i = 1}^{N_c}$ from $\widehat{P}_t$\;

Compute $\{\text{dist}(\hat{\boldsymbol{x}}^{(i)}, X_\text{obs})\}_{i = 1}^{N_c}$\;

Sort $\{\hat{\boldsymbol{x}}^{(i)}\}_{i=1}^{N_c}$ according to $\text{dist}(\hat{\boldsymbol{x}}^{(i)}, X_\text{obs})$ in increasing order\;

Obtain $P^*_t\left[ \boldsymbol{x}_t \notin X_\text{obs} \right]$ via \eqref{eq:collision_probability0} and \eqref{eq:collision_probability}\;

\If{$P^*_t\left[ \boldsymbol{x}_t \notin X_\text{obs} \right] > p_\text{safe}$}{
    $\text{isvalid} \gets \text{true}$\;
}
\Else{
    $\text{isvalid} \gets \text{false}$\;
}

\end{algorithm}

\begin{figure}[t]
    \centering
    \begin{subfigure}{0.46\columnwidth}
        \includegraphics[width=\linewidth]{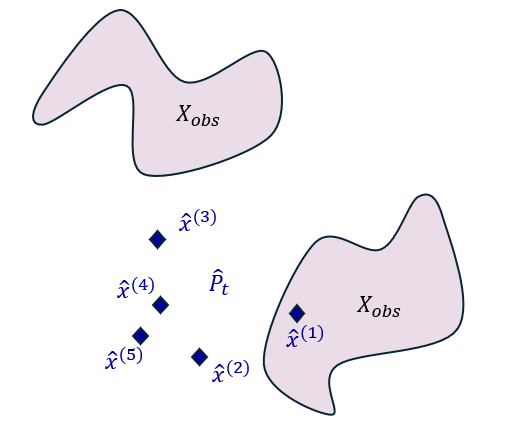}
        \caption{}
    \end{subfigure}
    ~~~
    \begin{subfigure}{0.43\columnwidth}
        \includegraphics[width=\linewidth]{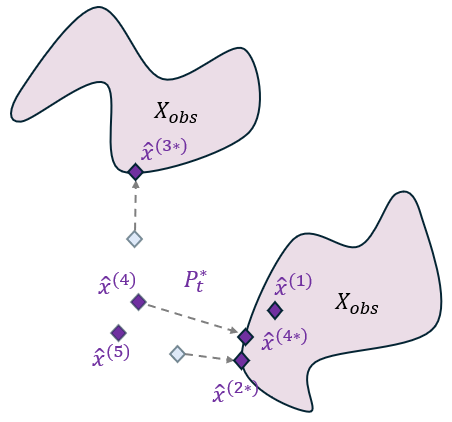}
        \caption{}
    \end{subfigure}
    \caption{Graphical explanation of Alg.~\ref{alg:collision}. On the left, the the nominal distribution $\widehat P_t$ and the atoms $\{\hat{x}^{(1)},\dots, \hat{x}^{(5)}\}$ in its support. On the right, the worst-case distribution $P^*$, supported on $\{\hat{x}^{(1)},\hat{x}^{(2*)},\hat{x}^{(3*)},\hat{x}^{(4)},\hat{x}^{(4*)}, \hat{x}^{(5)}\}$, and which minimizes the probability of $\boldsymbol x_t$ belonging to $X_\text{obs}$. In this example $\hat x^{(1)}\in X_\text{obs}$, so it is not perturbed, and $\varepsilon_t$ is such that it allows to fully transport $\hat x^{(2)}$ and $\hat x^{(3)}$ to the boundary of $X_\text{obs}$, but only the fraction $m_0$ of mass from $\hat x^{(4)}$, and no mass from $\hat x^{(5)}$. Note that the set $X_\text{obs}$ has arbitrary shape.}
\label{fig:dr_uq}
\end{figure}

\iffalse
\begin{figure}[h]

\begin{subfigure}{0.5\textwidth}
\includegraphics[width=0.9\linewidth]{dr_uq_1.png} 
\caption{Nominal distribution}
\label{fig:dr_uq_1}
\end{subfigure}
\begin{subfigure}{0.5\textwidth}
\includegraphics[width=0.9\linewidth]{dr_uq_2.png}
\caption{Worst-case distribution}
\label{fig:dr_uq_2}
\end{subfigure}

\caption{Graphical explanation of Alg.~\ref{alg:collision}. On the left, the the nominal distribution $\widehat P_t$ and the atoms $\{\hat{x}^{(1)},\dots, \hat{x}^{(5)}\}$ in its support. On the right, the worst-case distribution $P^*$, supported on $\{\hat{x}^{(1)},\hat{x}^{(2*)},\hat{x}^{(3*)},\hat{x}^{(4)},\hat{x}^{(4*)}, \hat{x}^{(5)}\}$, and which minimizes the probability of $\boldsymbol x_t$ belonging to $X_\text{obs}$. In this example $\hat x^{(1)}\in X_\text{obs}$, so it is not perturbed, and $\varepsilon_t$ is such that it allows to fully transport $\hat x^{(2)}$ and $\hat x^{(3)}$ to the boundary of $X_\text{obs}$, but only the fraction $m_0$ of mass from $\hat x^{(4)}$, and no mass from $\hat x^{(5)}$. Note that the set $X_\text{obs}$ has arbitrary shape.}
\label{fig:dr_uq}
\end{figure}
\fi

Note that Alg.~\ref{alg:collision} computes the exact worst-case probability of safety w.r.t. all distributions in the ambiguity set, instead of obtaining a lower bound. Furthermore, it is optimization-free, and only requires computing $N_c$ distances, making it relatively fast. Additionally, since it scales with $N_c$, namely, the number of atoms in $\widehat P_t$, clustering these samples can greatly improve the efficiency of the algorithm.
%Furthermore, when the center of the ambiguity set is an empirical distribution, the number of variables in the resulting optimization problem is proportional to $N_c$ \cite{aolaritei2022uncertainty,hakobyan2020wasserstein}. Moreover, when using other ambiguity sets like moment ambiguity \cite{summers2018distributionally}, a fast collision checking algorithm can be used, but its result is an overapproximation of the probability of unsafety.

\subsection{Lazy Validity Check}
\label{sec:lazy}

% Here, we propose a way to obtain an upper bound in the worst-case probability of collision in \ref{eq:dr_cc_obs}. This is faster than Alg.~\ref{alg:collision}, which solves the uncertainty quantification problem~\ref{eq:dr_cc_obs} exactly. If this upper bound is smaller than $p_\text{safe}$, we conclude that the robot is not in collision with that obstacle. Otherwise, we make use of Alg.~\ref{alg:collision}. Making use of the method proposed in this section to determine if the goal has been reached is straightforward, and we omit that discussion for the sake of brevity.

We propose an upper bound on the worst-case collision probability in \eqref{eq:dr_cc_obs} that is faster to compute than the exact solution of Alg.~\ref{alg:collision}. If the bound is below $p_\text{safe}$, we conclude the robot is collision-free; otherwise, we fall back on Alg.~\ref{alg:collision}. 
% The corresponding goal check is analogous and omitted for brevity.

The algorithm is as follows. We rely on having computed, in an offline fashion, a sequence of sets $(S^e_t)_{t\in\naturals_0}$ such that
\begin{align}
\label{eq:confidence_balls}
    \min_{P\in\mathcal{P}_t} P\big[ \boldsymbol x_t \in S^e_t\big] > p_\text{safe}, \quad\text{for all}\: t\in\naturals_0.
\end{align}
For simplicity, we restrict ourselves to balls centered at the origin. In fact, we approximately compute the smallest balls that satisfy \eqref{eq:confidence_balls} by optimizing their radius via the bisection algorithm. Then, at time-step $t$ of the planning phase, if the ball $S^e_t + \bar x_t$ does not intersect with any obstacle, we conclude that $x_t$ is not in collision. %Otherwise, we compute the exact probability of collision via Alg.~\ref{alg:collision} (\IG{erase this after bandit algorithm is included}).
We denote the tube $(S^e_t)_{t\in\naturals_0}$ as \emph{confidence tube}, since it contains more than $p_\text{safe}$ probability mass from the trajectories of $P_t^e$ with high confidence, as we state in Lemma~\ref{lemma:confidence_balls}. The pseudocode for obtaining the confidence tube is described in Alg.~\ref{alg:confidence_balls}.
%
% \begin{algorithm}
% \caption{Obtain Confidence Tube}\label{alg:confidence_balls}
% \begin{algorithmic}[1]
% \Require{$\mathcal{M}_p(P_w), \mathcal{M}_p(P_0), \beta,  \{\tau_j\}_{j = 1}^J, (\mathcal{P}_t^e = \mathbb B(\widehat P_t^e, \varepsilon_t))_{t \in \naturals_0}$ from Alg.~\ref{alg:tube}, $A_\text{cl}, G$}
% \Ensure{Confidence tube $(S^e_t)_{t \in \mathbb N_0}$}
% %
% \For{$j \in \{1,\dots, J\}$}
% \State $I_j \gets \big\{t \in \naturals_0 : \nexists k \in \{1,\dots, J\} \:\:\text{s.t.}\: f_{\tau_k}(t) < f_{\tau_j}(t) \big\}$
% \State $\bar \varepsilon_{\tau_j} \gets \max\{ f_{\tau_j}(t) : t \in I_j\}$ 
% \State Obtain $S^e_{\tau_j}$ s.t. $\min\big\{P\big[ \boldsymbol e_t \in S^e_{\tau_j}\big] : P\in\mathbb B(\widehat P_{\tau_j}^e, \bar \varepsilon_{\tau_j}) \big\} > p_\text{safe}$
% \State $S^e_t \gets S^e_{\tau_j}$ for all $t \in I_j$
% \EndFor
% %

% \end{algorithmic}
% \end{algorithm}

\begin{algorithm}
\caption{Obtain Confidence Tube}
\label{alg:confidence_balls}
\KwIn{$\mathcal{M}_p(P_w), \mathcal{M}_p(P_0), \beta, \{\tau_j\}_{j = 1}^J$, $(\mathcal{P}_t^e = \mathbb{B}(\widehat{P}_t^e, \varepsilon_t))_{t \in \mathbb{N}_0}$ from Alg~\ref{alg:tube},
$A_\text{cl}, G$}
\KwOut{Confidence tube $(S^e_t)_{t \in \mathbb{N}_0}$}

\For{$j \in \{1,\dots, J\}$}{
    $I_j \gets \big\{t \in \mathbb{N}_0 : \nexists k \in \{1,\dots, J\} \text{ s.t. } f_{\tau_k}(t) < f_{\tau_j}(t) \big\}$\;

    $\bar{\varepsilon}_{\tau_j} \gets \max\{ f_{\tau_j}(t) : t \in I_j \}$\;

    Obtain $S^e_{\tau_j}$ such that\newline
    $\min\big\{ P\big[ \boldsymbol{e}_t \in S^e_{\tau_j} \big] : P \in \mathbb{B}(\widehat{P}_{\tau_j}^e, \bar{\varepsilon}_{\tau_j}) \big\} > p_\text{safe}$\;
    $S^e_t \gets S^e_{\tau_j}$ for all $t \in I_j$\;
}
\end{algorithm}
\begin{algorithm}
\caption{Lazy Validity Checking}
\label{alg:lazy_check}
\KwIn{$X_\text{obs}, (S^e_t)_{t \in \mathbb{N}_0}, t, \bar{x}_t$}
\KwOut{$\text{isvalid}$}

\If{$X_\text{obs} \cap (S^e_t + \bar{x}_t) = \emptyset$}{
    $\text{isvalid} \gets \text{true}$\;
}
\Else{
    $\text{isvalid} \gets \text{false}$\;
}
\end{algorithm}

\begin{lemma}[Soundness of the Confidence Tube]
\label{lemma:confidence_balls}
    Let
    %$(\mathcal{P}_t)_{t\in\naturals_0}$ be the ambiguity tube obtained from Alg.~\ref{alg:tube}, and
    $(S^e_t)_{t\in\naturals_0}$ be the confidence tube obtained from Alg.~\ref{alg:confidence_balls}. Then, $P_t^e[\boldsymbol e_t \in S^e_t] > p_\text{safe}$ for all $t\in\naturals_0$ with confidence $1-\beta$.
\end{lemma}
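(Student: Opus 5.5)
The argument conditions on the single high-probability event of Theorem~\ref{thm:ambiguity_tube}. On that event we show the inequality deterministically for every $t$ by combining the monotonicity of Wasserstein balls with the construction in Alg.~\ref{alg:confidence_balls}.

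\textbf{Step 1: the good event.} Let $E$ be the event that $P_t^e \in \mathcal{P}_t^e = \mathbb B(\widehat P_t^e, \varepsilon_t)$ for all $t \in \naturals_0$. By Theorem~\ref{thm:ambiguity_tube}, $E$ holds with confidence $1-\beta$. It is built from the union bound over the $J$ data-driven sets, each with confidence $1-\beta/J$, together with the moment bounds of Proposition~\ref{prop:moments} merged into the same budget. No further randomness enters Alg.~\ref{alg:confidence_balls}, so it suffices to prove the claim on $E$.

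\textbf{Step 2: every $t$ is covered.} Fix $t\in\naturals_0$. The index sets satisfy $\bigcup_j I_j = \naturals_0$, since for each $t$ some $j$ attains $\min_k f_{\tau_k}(t)$ and hence belongs to $I_j$. Choose $j$ with $t \in I_j$. Alg.~\ref{alg:tube} selects the argmin to define the tube, and ties can be resolved consistently with the same $j$. So at this $t$ the tube uses center $\widehat P_t^e = \widehat P_{\tau_j}^e$ and radius $\varepsilon_t = f_{\tau_j}(t)$. This also holds at $t=\tau_j$ itself, with $f_{\tau_j}(\tau_j)=\varepsilon_{\tau_j}$ by \eqref{eq:ambiguity_radius}. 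Alg.~\ref{alg:confidence_balls} sets $S^e_t = S^e_{\tau_j}$.

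\textbf{Step 3: nesting of balls.} By definition, $\varepsilon_t = f_{\tau_j}(t) \le \max\{f_{\tau_j}(s): s\in I_j\} = \bar\varepsilon_{\tau_j}$. Hence on $E$,
\begin{align*}
P_t^e \in \mathbb B(\widehat P_{\tau_j}^e, \varepsilon_t) \subseteq \mathbb B(\widehat P_{\tau_j}^e, \bar\varepsilon_{\tau_j}).
\end{align*}
The defining property of $S^e_{\tau_j}$ in Alg.~\ref{alg:confidence_balls} then gives
\begin{align*}
P_t^e[\boldsymbol e_t \in S^e_t] \ge \min_{P\in\mathbb B(\widehat P_{\tau_j}^e,\bar\varepsilon_{\tau_j})} P[\boldsymbol e \in S^e_{\tau_j}] > p_\text{safe}.
\end{align*}
Since $t$ was arbitrary, the claim holds for all $t$ on $E$, that is, with confidence $1-\beta$.

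\textbf{Main obstacle.} The delicate point is that $\bar\varepsilon_{\tau_j}$ must be finite and the maximum well defined, which matters because $I_j$ may be infinite (typically for $j=J$). I would invoke Lemma~\ref{lemma:ambiguity_dynamics}(ii): uniform boundedness in $t$, guaranteed by stability of $A_\text{cl}$ through the convergent series $\sum_i\|A_\text{cl}^iG\|$, gives a finite supremum. If the maximum is not attained, I would replace it by the supremum, or by the closed-form limit of $f_{\tau_j}$ as $t\to\infty$, which keeps the nesting in Step~3 valid. A further check is that the existence of a ball $S^e_{\tau_j}$ with the required property is assumed by the algorithm. It is guaranteed for large enough radius because $p_\text{safe}<1$ and mass at large distance is controlled by the transport budget $\bar\varepsilon_{\tau_j}$.
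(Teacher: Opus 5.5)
Your argument is correct and is essentially the paper's proof: condition on the event that all $J$ data-driven balls contain $P^e_{\tau_j}$ (confidence $1-\beta$), obtain $P_t^e\in\mathbb B(\widehat P_{\tau_j}^e,f_{\tau_j}(t))\subseteq\mathbb B(\widehat P_{\tau_j}^e,\bar\varepsilon_{\tau_j})$ for $t\in I_j$, and then invoke the defining property of $S^e_{\tau_j}$. One small refinement: do not route through Theorem~\ref{thm:ambiguity_tube} and assert that the argmin in Alg.~\ref{alg:tube} picks the same $j$ as $I_j$, since this can fail at ties, or at a data-driven time $\tau_{j'}$ lying in some $I_k$ with $k\neq j'$, where Alg.~\ref{alg:tube} keeps the data-driven set; instead apply Lemma~\ref{lemma:ambiguity_dynamics}(i) directly with $\tau=\tau_j$ on your event $E$, as the paper does, which gives the nesting for every $t\in I_j$ independently of that selection, and your replacement of the max defining $\bar\varepsilon_{\tau_j}$ by a finite supremum correctly handles a point the paper glosses over.
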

Lemma~\ref{lemma:confidence_balls} is illustrated in Fig.~\ref{fig:confidence_ball}.
\begin{figure}
    \centering
    \begin{subfigure}{0.39\columnwidth}
        \includegraphics[width=\linewidth]{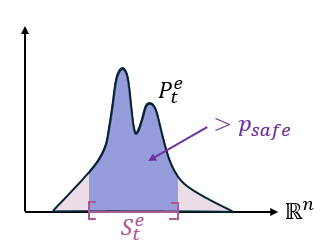} 
        \caption{Depiction of Lemma~\ref{lemma:confidence_balls}.}
        \label{fig:confidence_ball}
    \end{subfigure}
    \begin{subfigure}{0.59\columnwidth}
        \includegraphics[width=\linewidth]{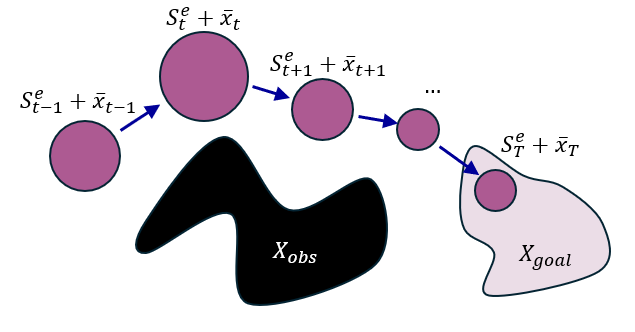}
        \caption{Planning using the confidence tube. 
        % Alg.~\ref{alg:lazy_check} determines that the trajectory is safe at eveery time step and reaches the goal at $t=T$ because the confidence tube do not intersect $X_\text{obs}$ and the ball for time step $T$ is completely contained in $X_\text{goal}$.
        }
        \label{fig:confidence_balls}
    \end{subfigure}
    \caption{Planning with lazy validity checker. (a) Confidence region at time $t$ per Lemma~\ref{lemma:confidence_balls}.
    (b) Illustration of a trajectory that is safe at every time step and reaches the goal at $t=T$ since the confidence tube does not intersect $X_\text{obs}$ and the ball for time step $T$ is completely contained in $X_\text{goal}$}
\end{figure}
Note that although the sequence $(S_t^e)_{t\in\naturals_0}$ has infinite length, we avoid having to compute an infinite number of confidence regions by leveraging Lemma~\ref{lemma:ambiguity_dynamics}. The pseudocode for the lazy check is given in Alg.~\ref{alg:lazy_check}. Additionally, in Fig.~\ref{fig:confidence_balls} we show an example of a trajectory that is valid at every time step and that reaches the goal at $t=T$, all according to Alg.~\ref{alg:lazy_check}.

%
% \begin{algorithm}
% \caption{Lazy Validity Checking}\label{alg:lazy_check}
% \begin{algorithmic}[1]
% \Require{$X_\text{obs}, (S^e_t)_{t \in \mathbb N_0}, t, \bar x_t$}
% \Ensure{$\text{isvalid}$}
% \If{$X_\text{obs}\bigcap\big( S^e_t + \bar x_t \big)  = \emptyset$}
% \State $\text{isvalid} \gets \text{true}$
% \Else
% \State $\text{isvalid} \gets \text{false}$
% \EndIf
% %
% \end{algorithmic}
% \end{algorithm}
%

\subsection{Hybrid Validity Checkers}
\label{sec:combined_validity_checkers}

% Whether to perform validity checking via probability mass transport (Alg.~\ref{alg:collision}) or via the lazy check in Alg.~\ref{alg:lazy_check}, it naturally gives rise to conflicting trade-offs between efficiency and accuracy. The lazy validity checker is faster but more conservative than Alg.~\ref{alg:collision}. Only using Alg.~\ref{alg:lazy_check} may lead to faster solution times for ``simpler" problems but may suffer for problems which require less conservatism. Therefore, we propose two validity checking algorithms that automatically combine both methods to improve efficiency while maintaining accuracy.

% Whether validity checking is performed via probability mass transport (Alg.~\ref{alg:collision}) or with the lazy check (Alg.~\ref{alg:lazy_check}), a natural trade-off emerges between efficiency and accuracy. The lazy procedure is faster but more conservative than Alg.~\ref{alg:collision}. Relying solely on Alg.~\ref{alg:lazy_check} can yield shorter solution times for “simpler’’ problems, but it may underperform in scenarios that demand reduced conservatism. To address this, we propose two hybrid validity-checking algorithms that automatically integrate both methods, achieving improved efficiency while preserving accuracy.

Validity checking via probability mass transport (Alg.~\ref{alg:collision}) and the lazy check (Alg.~\ref{alg:lazy_check}) trade off efficiency and accuracy: the lazy procedure is faster but more conservative. Relying solely on Alg.~\ref{alg:lazy_check} can speed up ``simpler" problems but underperform when less conservatism is needed, e.g., narrow passages. We therefore propose two hybrid algorithms that combine both methods, improving efficiency while preserving accuracy.

% On one hand, Alg.~\ref{alg:collision} is more accurate but slow. On the other hand,
\subsubsection{Naive Hybrid}

The naive method of improving efficiency is to first use the lazy validity checker, followed by Alg.~\ref{alg:collision} if Alg.~\ref{alg:lazy_check} returns \emph{invalid}. This allows the validity checker to quickly find solutions for ``simpler" problems but also allowing it to find solutions for more ``complex" problems.

Since this Naive Hybrid validity checker first uses the more conservative Alg.~\ref{alg:lazy_check} before Alg.~\ref{alg:collision}, it is also sound. Its conservativeness follows that of Alg.~\ref{alg:collision}.

\subsubsection{Bandit-based Validity Checker}

% The above Naive Hybrid method improves on using either Alg.~\ref{alg:collision} or Alg.~\ref{alg:lazy_check} alone. However, it can be inefficient for nodes in which both Alg.~\ref{alg:collision} and Alg.~\ref{alg:lazy_check} are both invalid. 

% We propose a bandit-based algorithm for validity checking that is able to decide on whether to use Alg.~\ref{alg:collision} when Alg.~\ref{alg:lazy_check} is invalid. The algorithm is presented in Alg.~\ref{alg:bandit_check}. The main idea is to formulate this decision problem as a Bernoulli bandit by estimating the relative conservatism between the two collision checking methods over partitions of the workspace %\qh{please proof read}.

The Naive Hybrid method improves on either Alg.~\ref{alg:collision} or Alg.~\ref{alg:lazy_check} alone, but is inefficient on nodes where both return invalid. We therefore propose Alg.~\ref{alg:bandit_check}, a bandit-based checker that decides whether to invoke Alg.~\ref{alg:collision} after Alg.~\ref{alg:lazy_check} returns invalid. The idea is to cast this decision as a Bernoulli bandit that estimates the relative conservatism of the two checkers over partitions of the workspace.

We first partition the interval $[0, 1]$. Then, for a state $\bar x_t$ determined invalid by Alg.~\ref{alg:lazy_check}, we estimate the volume of the confidence ball centered at $\bar x_t$ that intersects with obstacles. Then, we use a Bernoulli bandit to decide on whether to pull the other arm (use Alg.~\ref{alg:collision}). In this way, each partition has a bandit arm with number of successes (and failures), namely, the frequency of deciding to use Alg.~\ref{alg:collision} after Alg.~\ref{alg:lazy_check} returns invalid, and (not) being successful.

At each iteration, if Alg.~\ref{alg:collision} returns invalid, we first calculate the volume ratio
and determine into which partition of $[0,1]$ this percentage falls. Then, we sample from a beta distribution $p \sim Beta(1 + \alpha, 1+ \beta)$, where $\alpha$ is the number of successes and $\beta$ is the number of failures for that partition. Then, $p$ is compared to a random sample from a uniform distribution in $[0,1]$ in order to decide on whether to call Alg.~\ref{alg:collision}. Finally, if Alg.~\ref{alg:collision} is called, we update the arm's (partition) success or failure count based on the result of the validity checker.

The Bandit-based Validity Checker stochastically chooses between Alg.~\ref{alg:collision} and Alg.~\ref{alg:lazy_check}. Since both of these methods are sound, the Bandit-based checker is also sound. In the worst case, it is as conservative as Alg.~\ref{alg:collision}.

% \begin{algorithm}
% \caption{Bandit-based Validity Checking}\label{alg:bandit_check}
% \begin{algorithmic}[1]
% \Statex \LeftComment{Pre-processing Step}
% \For{$i \in 1:n$ Partitions} 
%     \State $N^{succ}(i) \gets 1$ 
%     \State $N^{fail}(i) \gets 1$
% \EndFor

% \Statex \LeftComment{Validity Checking Step}
% \Require{$X_\text{obs}, (S^e_t)_{t \in \mathbb N_0}, (\mathcal{P}_t)_{t \in \mathbb N_0}, p_\text{safe}, t, \bar x_t$}
% \State $\text{isvalid} \gets $ LazyValidityChecker()
% \If{$\neg\text{isvalid}$}
%     \State $V_{\%} \gets Vol(\big( S^e_t + \bar x_t \big))\setminus\big( S^e_t + \bar x_t \big))\bigcap X_\text{obs}/Vol(\big( S^e_t + \bar x_t \big))$
%     \State $i \gets \lfloor n\cdot V_{\%} \rfloor$
%     \State $r \sim Unif(0,1), p \sim Beta(N^{succ}(i), N^{fail}(i))$
%     \If{r < p}
%     \State $\text{isvalid} \gets $validity checking with Alg.~\ref{alg:collision}
%     \If{isvalid}
%         $N^{succ} \gets N^{succ} + 1$
%     \Else
%         $N^{fail} \gets N^{fail} + 1$
%     \EndIf
%     \EndIf
% \EndIf
% \end{algorithmic}
% \end{algorithm}

\begin{algorithm}[t]
\caption{Bandit-based Validity Checking}
\label{alg:bandit_check}

\KwIn{$X_\text{obs}, (S^e_t)_{t \in \mathbb{N}_0}, (\mathcal{P}_t)_{t \in \mathbb{N}_0}, p_\text{safe}, t, \bar{x}_t$}
\KwOut{$\text{isvalid}$}

% Pre-processing Step
\SetKwComment{LeftComment}{}{\hfill //}

% \LeftComment{Pre-processing Step}
\For{$i \in 1:n$ partitions}{
    $N^{\text{succ}}(i) \gets 1$; \quad    $N^{\text{fail}}(i) \gets 1$\;
}

% Validity Checking Step
% \LeftComment{Validity Checking Step}
$\text{isvalid} \gets$ LazyValidityChecker($X_\text{obs}, (S^e_t)_{t \in \mathbb{N}_0}, t, \bar{x}_t$)\;

\If{$\neg\text{isvalid}$}{
    $V_{\%} \gets {\text{Vol}((S^e_t + \bar{x}_t) \cap X_\text{obs})} / {\text{Vol}(S^e_t)}$\;

    $i \gets \lfloor n \cdot V_{\%} \rfloor$\;

    $r \sim \text{Unif}(0,1),\quad p \sim \text{Beta}(N^{\text{succ}}(i), N^{\text{fail}}(i))$\;

    \If{$r < p$}{
        $\text{isvalid} \gets$ validity checking with Alg.~\ref{alg:collision}\;

        \If{$\text{isvalid}$}{
            $N^{\text{succ}}(i) \gets N^{\text{succ}}(i) + 1$\;
        }
        \Else{
            $N^{\text{fail}}(i) \gets N^{\text{fail}}(i) + 1$\;
        }
    }
}
\end{algorithm}

\section{Lower-Dimensional Ambiguity sets}
\label{sec:lower_dimensional}

The sample complexity of learning an $n$-dimensional distribution scales as $\varepsilon \propto N^{-n}$ \cite{fournier2015rate}, making tight guarantees impractical in high dimensions \cite{chaouach2023structured}. Employing scenario reduction is similarly inefficient on large high-dimensional datasets, and mitigating methods such as clustering are not effective since the inflation factor needed to account for clustering error grows with dimension. Fortunately, in many cases it suffices to learn the projection of the state distribution onto a lower-dimensional subspace. For example, in a high-dimensional problem with a $2$-D workspace where $X_\text{obs}$ and $X_\text{goal}$ are defined by workspace obstacles and a goal region, only the robot's $2$-D position matters for the chance constraints in \eqref{eq:cc_constraints}; the resulting ambiguity sets then scale as $N^{-2}$ rather than $N^{-n}$. More generally, when $X_\text{obs}$ and $X_\text{goal}$ involve multiple constraints (e.g., on control or velocity), one can use several lower-dimensional ambiguity sets (one per constraint space) to improve sample and computational complexity while preserving safety. This section formalizes this idea, starting with the following generalization of Lemma~\ref{lemma:ambiguity_dynamics}.
\begin{lemma}[Lower-Dimensional Ambiguity Dynamics]
\label{lemma:ambiguity_dynamics_low_dimensional}
    Let $\tau, t \in \mathbb N_0$, $M \in\mathbb R^{l\times n}$ and $M_\#P_\tau^e \in \mathbb B(\widehat P_{M,\tau}^e, \varepsilon_\tau)$ for some $\widehat P_{M,\tau}^e \in \mathcal{D}_1(\mathbb R^l)$, $\varepsilon_\tau > 0$. Then $\mathcal{W}(M_\#P_t^e, \widehat P_{M,\tau}^e) \le f_\tau^M(t)$, with $f_\tau^M(t)$ given by
    \begin{align}
    \label{eq:ambiguity_radius_lower_dimensional}
        \begin{cases}
            \varepsilon_\tau + \|M(A^\tau - A^t)\| \mathcal{M}_p(P_0) + \mathcal{M}_p(P_w)\sum_{i=t}^{\tau-1} \|MA^iG\| \\
            \qquad \text{if}\:\: t \le \tau\\
            \varepsilon_\tau + \|M(A^t - A^\tau)\| \mathcal{M}_p(P_0) + \mathcal{M}_p(P_w)\sum_{i=\tau}^{t-1} \|MA^iG\|\\
            \qquad \text{otherwise}
        \end{cases}
    \end{align}
    % Then, $M_\#P_t^e \in \mathbb B(M_\#\widehat P_t^e, \varepsilon_t)$.
\end{lemma}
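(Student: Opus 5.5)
The plan is to use the triangle inequality for $\mathcal{W}$ on $\mathcal{D}(\reals^l)$:
\[
\mathcal{W}(M_\#P_t^e, \widehat P_{M,\tau}^e) \le \mathcal{W}(M_\#P_\tau^e, \widehat P_{M,\tau}^e) + \mathcal{W}(M_\#P_t^e, M_\#P_\tau^e).
\]
The first term is at most $\varepsilon_\tau$ by hypothesis. All remaining work is to bound $\mathcal{W}(M_\#P_t^e, M_\#P_\tau^e)$ by the moment terms in \eqref{eq:ambiguity_radius_lower_dimensional}. This mirrors Lemma~\ref{lemma:ambiguity_dynamics}, with the matrix $M$ inserted before every norm. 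The matrix $A$ in \eqref{eq:ambiguity_radius_lower_dimensional} is read as $A_\text{cl}$, since $P_t^e$ follows the error dynamics \eqref{eq:error dynamics}.

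To bound $\mathcal{W}(M_\#P_t^e, M_\#P_\tau^e)$, I will build an explicit coupling. Treat the case $t\ge\tau$; the case $t<\tau$ follows by swapping the roles of $t$ and $\tau$. On one probability space, take $\boldsymbol e_0\sim P_0^e$ and i.i.d. $\boldsymbol w_0,\boldsymbol w_1,\dots\sim P_w$. Unrolling \eqref{eq:error dynamics}, I will write the two errors as
\[
\boldsymbol e_t = A_\text{cl}^t\boldsymbol e_0+\sum_{i=0}^{t-1}A_\text{cl}^iG\boldsymbol w_i, \qquad \boldsymbol e_\tau = A_\text{cl}^\tau\boldsymbol e_0+\sum_{i=0}^{\tau-1}A_\text{cl}^iG\boldsymbol w_i.
\]
These expressions use the re-indexed, time-reversed noise. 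This is legitimate because the noises are i.i.d., so the laws are given exactly by the convolution \eqref{eq: error distribution}. The key step is to share both the same $\boldsymbol e_0$ and the same $\boldsymbol w_i$ across the two expressions. The resulting joint law of $(M\boldsymbol e_t, M\boldsymbol e_\tau)$ is then a valid coupling of $M_\#P_t^e$ and $M_\#P_\tau^e$, so
\[
\mathcal{W}(M_\#P_t^e, M_\#P_\tau^e)\le \mathbb E\|M(\boldsymbol e_t-\boldsymbol e_\tau)\| \le \|M(A_\text{cl}^t-A_\text{cl}^\tau)\|\,\mathbb E\|\boldsymbol e_0\|+\sum_{i=\tau}^{t-1}\|MA_\text{cl}^iG\|\,\mathbb E\|\boldsymbol w_i\|.
\]
Finally, I will use Jensen's inequality, $\mathbb E\|\cdot\|\le\mathcal{M}_p(\cdot)$ for $p\ge1$, to pass to $p$-th moments.

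The main obstacle is that the bound involves $\mathcal{M}_p(P_0)$, whereas the coupling naturally produces the moment of $P_0^e=P_0*\delta_{-\bar x_0}$. I will follow Lemma~\ref{lemma:ambiguity_dynamics} here: either identify these moments under the convention used there (e.g.\ $\bar x_0$ absorbed, or $\mathcal{M}_p(P_0)$ understood as the moment of the initial error), or bound $\mathcal{M}_p(P_0^e)$ through $P_0$ and the support. If a mismatch persists, I will state the result with $\mathcal{M}_p(P_0^e)$, which is what Alg.~\ref{alg:tube} actually estimates via Proposition~\ref{prop:moments}. Apart from this point, the argument is a direct transcription, and it needs no stability assumption because only finite sums appear.
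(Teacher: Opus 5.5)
Your proposal is correct and is essentially the paper's argument: the triangle inequality at time $\tau$, then a synchronous coupling of the time-$t$ and time-$\tau$ errors after re-indexing the i.i.d.\ noise, then first-moment bounds. The paper packages the coupling step as an intermediate triangle inequality, non-expansiveness of convolution, and the coupling $(A_\text{cl}^\tau\times A_\text{cl}^t)_\#P_0$; you should also state explicitly that $\boldsymbol e_0$ is independent of the noise. Your caveats are well founded: $A$ must be read as $A_\text{cl}$, and the paper's chain ends with $\mathcal{M}_1$, so your Jensen step is what produces $\mathcal{M}_p$. The paper also silently uses $P_0$ as the initial error law, so the general statement should carry $\mathcal{M}_p(P_0^e)$: with deterministic $\boldsymbol x_0=0$, no noise, $M=I$, $\bar x_0=c\neq 0$ and the ball centered at $P_\tau^e$, the $\mathcal{M}_p(P_0)$ version would claim $\|(A_\text{cl}^t-A_\text{cl}^\tau)c\|\le\varepsilon_\tau$ for arbitrarily small $\varepsilon_\tau$.
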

\begin{proof}
   Proof follows the same reasoning as that of Lemma~\ref{lemma:ambiguity_dynamics}.
\end{proof}
To provide intuition on Lemma~\ref{lemma:ambiguity_dynamics_low_dimensional}, let $M$ be the projection matrix which maps each state $x\in\reals^n$ into its workspace components. This lemma shows that we do not need to learn the $n$-dimensional distribution of the error $e_t$ to reason about how its projection $M_\# P_t^e$ evolves over time, which is way more sample efficient when $l < n$.

We now make the following assumption on the obstacle and goal sets, which allows us to use the lower-dimensional ambiguity tubes:

\begin{assumption}
\label{ass:obs_goal}
    The obstacle and goal sets can be expressed as $X_\text{obs} := \bigcup_{l=1}^L X_\text{obs}^l$ and $X_\text{goal} := \bigcap_{l=1}^L X_\text{goal}^l$, where $X_\text{obs}^l := \{x\in\mathbb R^n : M_l x  \in Y_\text{obs}^l\}$ and $X_\text{goal}^l := \{x\in\mathbb R^n : M_l x  \in Y_\text{goal}^l\}$ for some matrix $M_l \in\mathbb R^{n_l\times n}$, $n_l\le n$, and (possibly time-dependent) $Y_\text{obs}^l,Y_\text{goal}^l \subseteq \mathbb R^{n_l}$ for all $l \in \{1,\dots, L\}$.
\end{assumption}
%

% Assumption~\ref{ass:obs_goal} states that the obstacle (and goal) set can be decomposed into $L$ sets, each of them related to a set $Y_\text{obs}^l$ in some space $\reals^{n_l}$, typically of smaller dimension than $X_\text{obs}^l$. It is easy to check that Assumption~\ref{ass:obs_goal} does not reduce generality, since obstacle and goal sets of any shape can be expressed in that way.

Assumption~\ref{ass:obs_goal} states that the obstacle (and goal) set decomposes into $L$ sets, each related to a set $Y_\text{obs}^l$ in some space $\reals^{n_l}$, typically of smaller dimension than $X_\text{obs}^l$. This assumption does not reduce generality, as obstacle and goal sets of any shape can be expressed in this form.

\begin{example}
\label{ex:example}
    As an illustrative example, consider a robotic system surrounded by workspace obstacles $O_1,\dots,  O_{n_\text{obs}} \subset \reals^\text{ws}$, and where the control is constrained to $U \subset \reals^{n_u}$. Therefore,
    \begin{multline*}
        X_\text{obs} = \{x\in\mathbb R^n : M_1 x \in \cup_{j = 1}^{n_{obs}} O_j\} \ \cup \\
        \{x\in\mathbb R^n : -K(x - \bar x_t ) + \bar u_t \notin U\},
    \end{multline*}
    with $M_1$ being the projection matrix that maps the system's state into its workspace components. Here, we let the goal region represent a physical location $\text{Goal} \subset \reals^\text{ws}$ in the workspace. It is therefore possible to express $X_\text{obs}$ and $X_\text{goal}$ as required in Assumption~\ref{ass:obs_goal} with $L=2$, $M_2 = -K$, $Y_\text{obs}^1 = \bigcup_{j = 1}^{n_{obs}} O_j$, $Y_\text{obs}^2 = \reals^{n_u}\setminus (U - \bar u_t -K\bar x_t)$, $Y_\text{goal}^1 = \text{Goal}$ and $Y_\text{goal}^2 = \reals^{n_u}$. Fig.~\ref{fig:lower_dimensional} illustrates how to leverage the first lower-dimensional ambiguity tube for this example.

    \iffalse
\begin{figure}[h]

\begin{subfigure}[t]{0.5\textwidth}
\includegraphics[width=\linewidth]{projection.png} 
\caption{Projection of the state into the workspace.}
\label{fig:projection}
\end{subfigure}
\begin{subfigure}[t]{0.5\textwidth}
\includegraphics[width=\linewidth]{lower_dimensional.png}
\caption{$l$-th lower-dimensional ambiguity tube of Example~\ref{ex:example}, with $l = 1$. The spheres constitute the ambiguity tube for $P_t^e$, whereas the light-blue balls are the ambiguity tube for $M_{1\#}P_t^e$.}
\label{fig:lower_dimensional}
\end{subfigure}

\caption{Exploiting lower-dimensional ambiguity tubes in Example~\ref{ex:example}. Since the workspace obstacles only constrain the workspace position of the system, we learn an ambiguity tube directly for the position. Note that lower-dimensional ambiguity sets are not the projections of the higher-dimensional ones. In fact, they are way smaller.}
\label{fig:}
\end{figure}
\fi
\end{example}

\begin{figure}[h]
    \centering
    \begin{subfigure}{0.49\linewidth}
        \includegraphics[width=\linewidth]{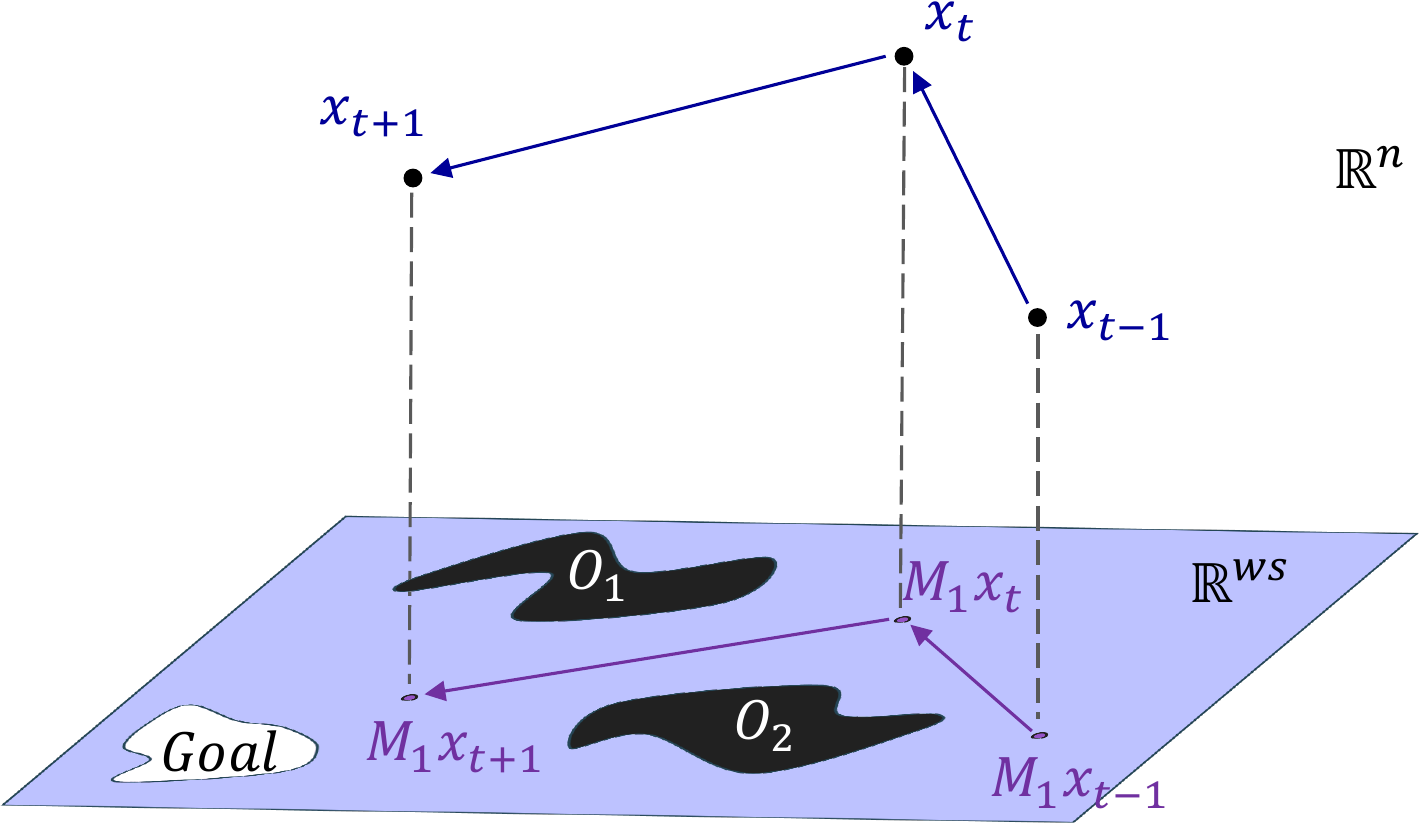}    
        \caption{Projection of $X$ into $\WS$}
    \end{subfigure}
    \begin{subfigure}{0.49\linewidth}
        \includegraphics[width=\linewidth]{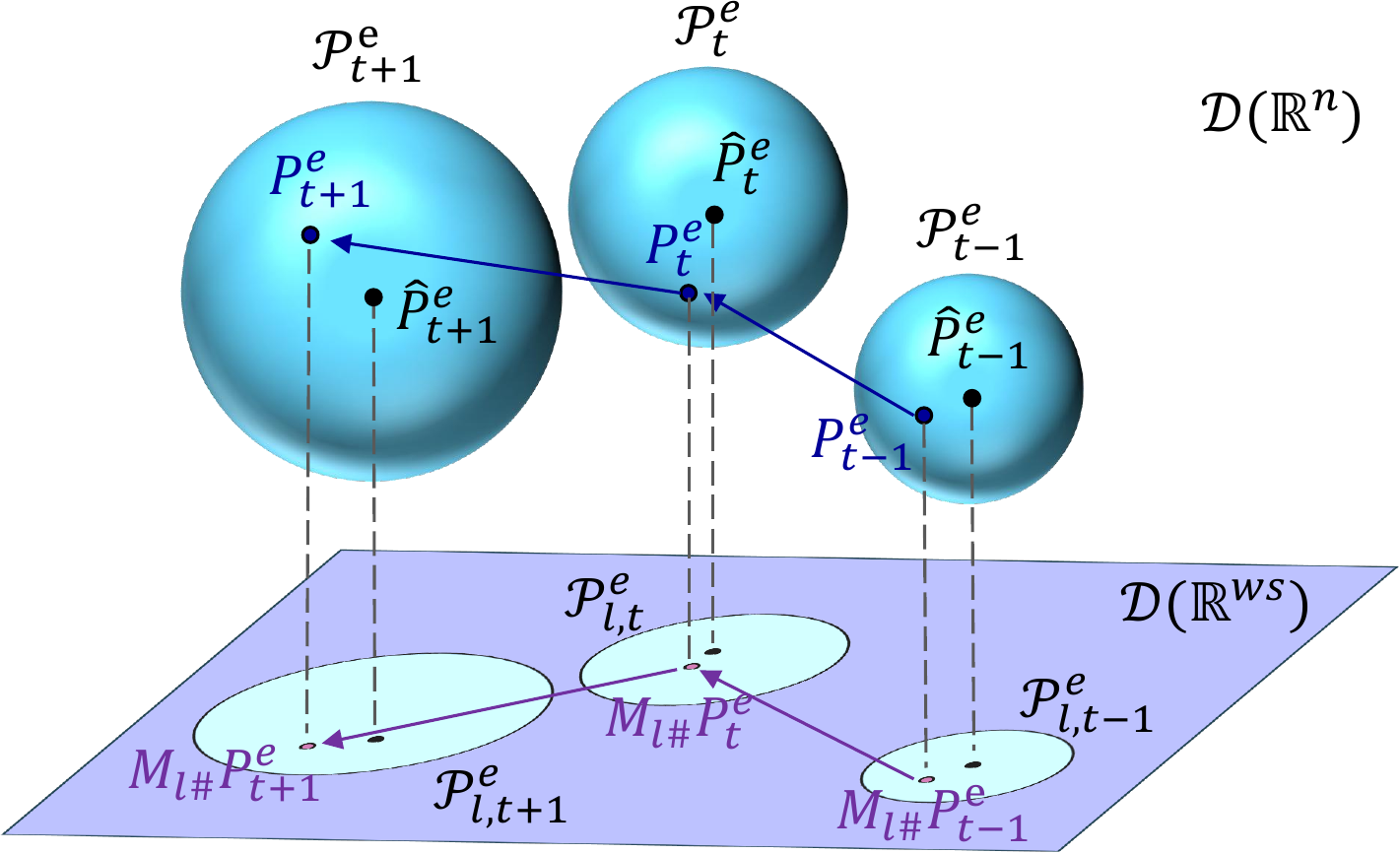}
        \caption{Lower-dim ambiguity tube}
    \end{subfigure}
    \caption{
    Exploiting lower-dimensional ambiguity tubes in Example~\ref{ex:example}. 
    % Left: projection of the state into the workspace. 
    % Right: the $l$-th lower-dimensional ambiguity tube of with $l = 1$. 
    In (b), $l = 1$ and the spheres constitute the ambiguity tube for $P_t^e$, while the light-blue balls are the ambiguity tube for $M_{1\#}P_t^e$. 
    Since the workspace obstacles only constrain the workspace position of the system, we learn an ambiguity tube directly for the position. Note that lower-dimensional ambiguity sets are not the projections of the higher-dimensional ones. In fact, they are way smaller.}
    \label{fig:lower_dimensional}
\end{figure}

%with $M_1$ as previously stated and with $M_2 := K$, $Y_1 := M_1(\mathbb R^n\setminus \bigcup_{j = 1}^{n_{obs}} O_j)$ and $Y_2 := U + K\bar x_t - \bar u_t$. Therefore, according to Theorem~\ref{thm:lower_dimensional_ambiguity_sets}, learning $2$ ambiguity tubes for $2$-dimensional probabilities is enough to guarantee safety.

Next, we formally define the \emph{lower-dimensional ambiguity tubes} that we leverage in this section:
\begin{definition}[Lower-Dimensional Ambiguity Tubes]
    Let $L \in \naturals$ and $M_l \in\reals^{n_l \times n}$ for all $l \in\{1,\dots,L\}$. We refer by lower-dimensional ambiguity tubes to $L$ sets $(\mathcal{P}_{l,t})_{t \in \naturals_0}$, $l \in\{1,\dots,L\}$, of the form $\mathcal{P}_{l,t} := \mathbb B(\widehat P_{l,t}, \varepsilon_{l,t}) \subset \mathcal{D}_1(\reals^{n_l})$, with $\widehat P_{l,t} \in \mathcal{D}_1(\reals^{n_l})$ and $\varepsilon_{l,t}$, for all $t \in \naturals_0$, $l \in\{1,\dots,L\}$.
\end{definition}

We formalize the construction of the tubes $(\mathcal{P}_{l,t}^e)_{t \in \naturals_0}$, $l \in\{1,\dots,L\}$ in such a way that the $l$-th tube contains $(M_{l,\#}P_t^e)_{t\in\naturals_0}$, namely, the pushforward (through $M_l$) of the trajectory of the error's distribution, with high confidence. Let $\tau_1, \tau_1,\dots, \tau_J \le H$ be a set of $J < \infty$ time steps at which we construct the ambiguity sets $\mathcal{P}_{l,\tau_j}^e : =\mathbb B(\widehat P_{l,\tau_j}^e, \varepsilon_{l,\tau_j})$, with $\widehat P_{l,\tau_j}^e := M_{l\#} \widehat  P_{\tau_j}^e$, for all $l \in\{1,\dots,L\}$  from the samples of $\boldsymbol e_{\tau_j}$ and each of them with confidence $1-\beta/(JL)$ as described in Section~\ref{sec:data_driven_construction_ambiguity_set}. Then, for each $t\in\mathbb N_0\setminus\{\tau_j\}_{j = 1}^J$, we pick some $\tau \in \{\tau_j\}_{j = 1}^J$ and define $\mathcal{P}_{l,t}^e := \mathbb B(\widehat P_{l,t}^e, \varepsilon_{l,t})$ with $\widehat P_{l,t}^e := \widehat P_{l,\tau}^e$ and $\varepsilon_{l,t} := f_\tau^{M_l}(t)$ as in \eqref{eq:ambiguity_radius_lower_dimensional}, thus ensuring that $M_{l,\#}P_t^e \in \mathcal{P}_{l,t}^e$ with the same confidence $1-\beta/J$. The pseudocode for obtaining the ambiguity tube is in Alg.~\ref{alg:tubes_low_dimensional}.

%
% \begin{algorithm}
% \caption{Obtain Lower Dimensional Ambiguity Tubes}
% \label{alg:tubes_low_dimensional}
% \begin{algorithmic}[1]
% \Require{$\mathcal{M}_p(P_w), \mathcal{M}_p(P_0), \beta, \{\tau_j\}_{j = 1}^J, A_\text{cl}, \{M_l\}_{l = 1}^L G, \newline \{\hat{\boldsymbol e}^{(i)}_{\tau_1} \: \hat{\boldsymbol e}^{(i)}_{\tau_2} \ldots \hat{\boldsymbol e}^{(i)}_{\tau_J} \}_{i=1}^N$}
% \Ensure{Lower-Dimensional Ambiguity tubes $(\mathcal{P}_{l,0}^e, \mathcal{P}_{l,1}^e, \dots )_{l = 1}^L$}
% \State Construct $\mathbb B(M_{l,\#}\widehat P_{\tau_j}^e, \varepsilon_{\tau_j}^l)$ from $\{\hat {\boldsymbol e}^{(i)}_{\tau_j}\}_{i=1}^N$, with $\varepsilon_{\tau_j}^l$ as in Lemma~\ref{lemma:data_driven_ambiguity_set} and with confidence $1 - \beta/(JL)$ for all $j \in \{1, \dots, J\}$, $l \in \{1, \dots, L\}$
% \State $\mathcal{P}_{l,\tau_j}^e \gets \mathbb B(M_{l,\#}\widehat P_{\tau_j}^e, \varepsilon_{\tau_j}^l)$ for all $j \in \{1, \dots, J\}$, $l \in \{1, \dots, L\}$
% \For{$t \in \mathbb N_0\setminus\{\tau_j\}_{j=1}^J$}
% \For{$l \in \{1, 2, \dots, L\}$}
% \State $\tau_l \gets \arg\min\big\{ f_{\tau_j}^{M_l}(t) : j \in \{1,\dots, J\} \big\}$ with $f_{\tau_j}^{M_l}(t)$ as given in \eqref{eq:ambiguity_radius_lower_dimensional}
% \State $\widehat P_{l,t}^e \gets \widehat P_{\tau_l}^e$
% \State $\varepsilon_t^l \gets f_{\tau_l}^{M_l}(t)$
% \State $\mathcal{P}_{l,t}^e \gets \mathbb B(\widehat P_{l,t}^e, \varepsilon_t^l)$
% \EndFor
% \EndFor
% \end{algorithmic}
% \end{algorithm}

\begin{algorithm}
\caption{Obtain Lower Dimensional Ambiguity Tubes}
\label{alg:tubes_low_dimensional}
\KwIn{$\mathcal{M}_p(P_w), \mathcal{M}_p(P_0), \beta, \{\tau_j\}_{j = 1}^J, A_\text{cl}, \{M_l\}_{l = 1}^L, G,$
$\{\hat{\boldsymbol e}^{(i)}_{\tau_1}, \hat{\boldsymbol e}^{(i)}_{\tau_2}, \ldots, \hat{\boldsymbol e}^{(i)}_{\tau_J} \}_{i=1}^N$}
\KwOut{Lower-Dimensional Ambiguity Tubes $(\mathcal{P}_{l,0}^e, \mathcal{P}_{l,1}^e, \dots )_{l = 1}^L$}

\For{$j \in \{1, \dots, J\},\ l \in \{1, \dots, L\}$}{
    Construct $\mathbb{B}(M_{l,\#}\widehat{P}_{\tau_j}^e, \varepsilon_{\tau_j}^l)$ from $\{\hat{\boldsymbol{e}}^{(i)}_{\tau_j}\}_{i=1}^N$ \\
    where $\varepsilon_{\tau_j}^l$ is from Lemma~\ref{lemma:data_driven_ambiguity_set}, with confidence $1 - \beta/(JL)$\;
    $\mathcal{P}_{l,\tau_j}^e \gets \mathbb{B}(M_{l,\#}\widehat{P}_{\tau_j}^e, \varepsilon_{\tau_j}^l)$\;
}

\For{$t \in \mathbb{N}_0 \setminus \{\tau_j\}_{j=1}^J$}{
    \For{$l \in \{1, 2, \dots, L\}$}{
        $\tau_l \gets \arg\min\big\{ f_{\tau_j}^{M_l}(t) : j \in \{1,\dots, J\} \big\}$\;
        $\widehat{P}_{l,t}^e \gets \widehat{P}_{\tau_l}^e$\;
        $\varepsilon_t^l \gets f_{\tau_l}^{M_l}(t)$\;
        $\mathcal{P}_{l,t}^e \gets \mathbb{B}(\widehat{P}_{l,t}^e, \varepsilon_t^l)$\;
    }
}
\end{algorithm}

In Theorem~\ref{thm:ambiguity_tubes_low_dimensional}, we formalize an important property of these ambiguity tubes.
\begin{theorem}[Soundness of the Lower-Dimensional Ambiguity Tubes]\label{thm:ambiguity_tubes_low_dimensional}
    Let $L \in \naturals$ and $M_l \in\reals^{n_l \times n}$ for all $l \in\{1,\dots,L\}$. Let $(\mathcal{P}_{l,t}^e)_{t \in \mathbb N_0}$, $l \in \{1, \dots, L\}$, be the lower-dimensional ambiguity tubes obtained from Alg.~\ref{alg:tubes_low_dimensional}. Then, each $l$-th tube contains the trajectory of the distribution of $M_l e_t$, and this holds with overall confidence $1-\beta$, i.e., 
    %\begin{align*}
        $M_{l\#}P_t^e \in \mathcal{P}_{l,t}^e, \:\forall\:t\in\naturals_0, l \in \{1,\dots,L\}$, 
%    \end{align*}
    %
    with confidence $1-\beta$.
\end{theorem}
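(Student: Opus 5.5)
The proof reduces to a union bound over the $JL$ data-driven events, after which every remaining containment follows deterministically from Lemma~\ref{lemma:ambiguity_dynamics_low_dimensional}.

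\emph{Events.} For each pair $(j,l)$, define
\[
E_{j,l} := \{M_{l\#}P_{\tau_j}^e \in \mathbb B(M_{l\#}\widehat P_{\tau_j}^e, \varepsilon^l_{\tau_j})\}.
\]
This event concerns the pushforward distribution $M_{l\#}P_{\tau_j}^e$ on $\reals^{n_l}$. Its samples $\{M_l\hat{\boldsymbol e}^{(i)}_{\tau_j}\}_{i=1}^N$ are i.i.d. from that distribution, and their empirical distribution is exactly $M_{l\#}\widehat P_{\tau_j}^e$. The support diameter needed in Lemma~\ref{lemma:data_driven_ambiguity_set} is that of the projected support, which is obtained by propagating $X_0$ and $W$ through the dynamics and then projecting by $M_l$. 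Alg.~\ref{alg:tubes_low_dimensional} builds the radius $\varepsilon^l_{\tau_j}$ with confidence $1-\beta/(JL)$, so Lemma~\ref{lemma:data_driven_ambiguity_set} applied in dimension $n_l$ gives $\Pr(E_{j,l}^c)\le \beta/(JL)$.

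If the moment bounds $\mathcal M_p(P_0)$ and $\mathcal M_p(P_w)$ are themselves estimated via Proposition~\ref{prop:moments}, their confidence must also be folded into this budget, as the footnote to that proposition indicates. I would state the theorem with these moments given as inputs, as Alg.~\ref{alg:tubes_low_dimensional} takes them.

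\emph{Union bound.} Let $E=\bigcap_{j,l}E_{j,l}$. A union bound over the $JL$ events gives
\[
\Pr(E)\ge 1-\sum_{j,l}\beta/(JL)=1-\beta .
\]
The key point is that the number of events is finite and independent of $t$. No per-time-step confidence is ever spent, so the bound holds uniformly over the infinite horizon.

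\emph{Deterministic step on $E$.} Fix $l$ and $t$.
\begin{itemize}
\item If $t=\tau_j$ for some $j$, then $\mathcal P^e_{l,t}$ is exactly the data-driven ball, and containment is $E_{j,l}$ itself.
\item Otherwise, Alg.~\ref{alg:tubes_low_dimensional} selects some $\tau_l\in\{\tau_j\}$, sets the center $M_{l\#}\widehat P^e_{\tau_l}$, and sets the radius $f^{M_l}_{\tau_l}(t)$. On $E$ the hypothesis of Lemma~\ref{lemma:ambiguity_dynamics_low_dimensional} holds with $M=M_l$, $\tau=\tau_l$ and $\varepsilon_\tau=\varepsilon^l_{\tau_l}$. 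That lemma then gives $\mathcal W(M_{l\#}P_t^e, M_{l\#}\widehat P^e_{\tau_l})\le f^{M_l}_{\tau_l}(t)$, which is exactly $M_{l\#}P_t^e\in\mathcal P^e_{l,t}$.
\end{itemize}
The argmin choice of $\tau_l$ is irrelevant to soundness, since any $\tau_j$ would work. Because this holds for all $t$ and $l$ simultaneously on $E$, the claim follows.

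\emph{Main obstacle.} The only delicate point is making sure Lemma~\ref{lemma:ambiguity_dynamics_low_dimensional} is invoked consistently. Its proof decomposes $\boldsymbol e_t$ and $\boldsymbol e_\tau$ through \eqref{eq: error distribution} and couples them; it should be written with the closed-loop matrix $A_\text{cl}$, not $A$ as printed in \eqref{eq:ambiguity_radius_lower_dimensional}. I would also check that the moment quantities used in the radius are valid upper bounds, either given or lying within the same confidence event. With those two checks in place, the argument is just the union bound plus the triangle-inequality content already packaged in the lemma, mirroring the proof of Theorem~\ref{thm:ambiguity_tube}.
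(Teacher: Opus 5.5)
Your proof is correct and follows the paper's argument. The paper unions over the $J$ data-driven events for each fixed $l$, obtaining confidence $1-\beta/L$ per tube as in the proof of Theorem~\ref{thm:ambiguity_tube} but with Lemma~\ref{lemma:ambiguity_dynamics_low_dimensional}, and then unions over $l$; this is the same as your single union bound over the $JL$ events $E_{j,l}$, and your caveats about $A_\text{cl}$ versus $A$ in \eqref{eq:ambiguity_radius_lower_dimensional} and about folding any moment-estimation confidence into the budget are valid points the paper leaves implicit.
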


\subsection{Validity Checking using Lower-Dim. Ambiguity Tubes}

% In this section, we show how we can leverage Assumption~\ref{ass:obs_goal} and the lower-dimensional ambiguity tubes obtained via Alg.~\ref{alg:tubes_low_dimensional} to determine if a node in the planning tree is valid. Determining if the tree has reached the goal region follows an analogous logic. First, given a node of the tree corresponding to time step $t$ and reference state $\bar x_t$, we obtain the state ambiguity sets $\mathcal{P}_{l,t} := \mathbb B(\widehat P_{l,t}, \varepsilon_{l,t})$, with with $\widehat P_{l,t} = \widehat P_{l,t}^e * \delta_{M_l\bar x_t}$ for all $l\in\{1\dots,L\}$, for this node. Next, we determine whether or not the robot is not in collision with the obstacles by obtaining the worst-case probabilities
% %
% $\min_{P\in \mathcal{P}_{l, t}} P(\reals^{n_l}\setminus Y_\text{obs}^l)$ of not colliding with the $l$-th obstacle, for all $l\in\{1\dots,L\}$. If the sum of the previous probabilities is bigger than $ p_\text{safe}-1+L$, we conclude that the node is not in collision. Otherwise, we declare the node invalid. Similarly, we conclude that the robot has reached the goal if by computing $\min_{P\in \mathcal{P}_{l, t}} P(\reals^{n_l}\setminus Y_\text{goal}^l)$, for all $l\in\{1\dots,L\}$, and determining if the sum of these probabilities is greater than $p_\text{safe} -1 +L$. The worst-case probabilities are computed through the approach described in Section~\ref{sec:validity_checking}. The pseudocode to perform validity checking employing the lower-dimensional ambiguity sets is given in Alg.~\ref{alg:collision_lower_dimensional}.
% 
We now use Assumption~\ref{ass:obs_goal} and the lower-dimensional ambiguity tubes from Alg.~\ref{alg:tubes_low_dimensional} to check node validity (the goal check is analogous). For a node at time step $t$ with reference state $\bar x_t$, we form the state ambiguity sets $\mathcal{P}_{l,t} := \mathbb B(\widehat P_{l,t}, \varepsilon_{l,t})$, where $\widehat P_{l,t} = \widehat P_{l,t}^e * \delta_{M_l\bar x_t}$ for $l\in\{1,\dots,L\}$. We then compute the worst-case non-collision probabilities $\min_{P\in \mathcal{P}_{l, t}} P(\reals^{n_l}\setminus Y_\text{obs}^l)$ via the method of Section~\ref{sec:validity_checking}; the node is collision-free if their sum exceeds $p_\text{safe}-1+L$, and invalid otherwise. The goal is declared reached when $\sum_l \min_{P\in \mathcal{P}_{l, t}} P(\reals^{n_l}\setminus Y_\text{goal}^l) > p_\text{safe}-1+L$. Alg.~\ref{alg:collision_lower_dimensional} gives the pseudocode.
%
% \begin{algorithm}
% \caption{Validity Checking Via Probability Mass Transport and Lower-Dimensional Ambiguity Tubes}
% \label{alg:collision_lower_dimensional}
% \begin{algorithmic}[1]
% \Require{$M_l, Y_\text{obs}^l, (\mathcal{P}_{l,t})_{t \in \mathbb N_0} l\in\{1,\dots,L\}, p_\text{safe}, t, \bar x_t$}
% \Ensure{$\text{isvalid}$}
% \For{$l\in\{1\dots, L\}$}
% \State Obtain $\mathcal{P}_{l,t} = \mathbb B(\widehat P_{l,t}, \varepsilon_{l, t})$, with $\widehat P_{l,t} = \widehat P_{l,t}^e * \delta_{M_l\bar x_t}$
% \State Obtain $\{(\hat{\boldsymbol x}^{(i)}, \boldsymbol a_i)\}_{i = 1}^{N_c}$ from $\widehat P_{l,t}$
% \State Compute $\{\text{dist}(\hat{\boldsymbol x}^{(i)}, Y_\text{obs}^l)\}_{i = 1}^{N_c}$
% \State Sort $\{\hat{\boldsymbol x}^{(i)}\}_{i=1}^{N_c}$ according to $\text{dist}(\hat{\boldsymbol x}^{(i)}, Y_\text{obs}^l)$ in an increasing fashion
% \State Obtain $\alpha_l := \min_{P\in \mathcal{P}_{l, t}} P(\reals^{n_l}\setminus Y_\text{obs}^l)$ via \eqref{eq:collision_probability0} and
% \eqref{eq:collision_probability}
% \EndFor
% \If{$1 - L + \sum_{l = 1}^L \alpha_l > p_\text{safe}$}
% \State $\text{isvalid} \gets \text{true}$
% \Else
% \State $\text{isvalid} \gets \text{false}$
% \EndIf
% %
% \end{algorithmic}
% \end{algorithm}
%
\begin{algorithm}
\caption{Validity Checking via Probability Mass Transport and Lower-Dimensional Ambiguity Tubes}
\label{alg:collision_lower_dimensional}

\KwIn{$M_l,\ Y_\text{obs}^l,\ (\mathcal{P}_{l,t})_{t \in \mathbb{N}_0,\ l \in \{1,\dots,L\}},\ p_\text{safe},\ t,\ \bar{x}_t$}
\KwOut{$\text{isvalid}$}

\For{$l \in \{1, \dots, L\}$}{
    Obtain $\mathcal{P}_{l,t} = \mathbb{B}(\widehat{P}_{l,t}, \varepsilon_{l,t})$, where $\widehat{P}_{l,t} = \widehat{P}_{l,t}^e * \delta_{M_l \bar{x}_t}$\;

    Obtain $\{(\hat{\boldsymbol{x}}^{(i)}, \boldsymbol{a}_i)\}_{i = 1}^{N_c}$ from $\widehat{P}_{l,t}$\;

    Compute $\{\text{dist}(\hat{\boldsymbol{x}}^{(i)}, Y_\text{obs}^l)\}_{i = 1}^{N_c}$\;

    Sort $\{\hat{\boldsymbol{x}}^{(i)}\}_{i=1}^{N_c}$ by increasing $\text{dist}(\hat{\boldsymbol{x}}^{(i)}, Y_\text{obs}^l)$\;

    Compute $\alpha_l := \min_{P \in \mathcal{P}_{l,t}} P\left( \mathbb{R}^{n_l} \setminus Y_\text{obs}^l \right)$ using \eqref{eq:collision_probability0} and \eqref{eq:collision_probability}\;
}

\If{$1 - L + \sum_{l = 1}^L \alpha_l > p_\text{safe}$}{
    $\text{isvalid} \gets \text{true}$\;
}
\Else{
    $\text{isvalid} \gets \text{false}$\;
}

\end{algorithm}

\subsubsection{Lazy Check using Lower-dim Ambiguity Tubes}

We adapt the lazy check in Section~\ref{sec:lazy} to the lower-dimensional ambiguity tubes as described in Alg.~\ref{alg:tubes_low_dimensional}. 
% The approach consists on obtaining $L$ tubes, which we denote \emph{lower-dimensional confidence tubes}, where the $l$-th tube contains more than $p_\text{safe}$ probability mass from the random variable $M_l \boldsymbol e_t$, for all time $t\in\naturals_0$, with overall confidence $1-\beta$. Using these tubes we obtain an upper bound in the probability of collision in an efficient way. If this upper bound is smaller than $1 - p_\text{safe}$, we conclude that the robot is not in collision with that obstacle. Otherwise, we make use of Alg.~\ref{alg:collision_lower_dimensional}. Again, making use of this approach to determine if the goal has been reached is straightforward, and we omit that discussion for the sake of brevity.
We compute $L$ \emph{lower-dimensional confidence tubes}, where the $l$-th tube contains more than $p_\text{safe}$ probability mass of $M_l \boldsymbol e_t$ for all $t\in\naturals_0$, with overall confidence $1-\beta$. These tubes yield an efficient upper bound on the collision probability: if the bound falls below $1 - p_\text{safe}$, the node is collision-free; otherwise, we fall back on Alg.~\ref{alg:collision_lower_dimensional}. 
% The goal check is analogous and omitted.

% The algorithm is as follows. Similarly to Alg.~\ref{alg:lazy_check}, we rely on having already computed, in an offline fashion, the sequences of sets $(S^e_{l,t})_{t\in\naturals_0}$, with $l\in\{1,\dots,L\}$, such that
% %
% \begin{align}
% \label{eq:confidence_balls_lower_dimensional}
%     \min_{P\in\mathcal{P}_{l,t}^e} P(S^e_{l,t}) > p_\text{safe}/L, \;\;\forall\: t\in\naturals_0,\; l\in\{1,\dots,L\}.
% \end{align}
% %
% Again, we restrict ourselves to balls centered at the origin, and compute the (approximately) smallest balls that satisfy \eqref{eq:confidence_balls} via bisection on their radius. Then, at time-step $t$ of the planning phase, if no ball $S^e_{l,t} + M_l\bar x_t$ intersects $Y_\text{obs}^l$, we conclude that the node is not in collision. Otherwise, we compute the exact probability of collision via Alg.~\ref{alg:collision_lower_dimensional}.
% %We denote the balls $(S^e_t)_{t\in\naturals_0}$ as \emph{confidence balls}, since they are guaranteed to contain at least $p_\text{safe}$ probability mass from $P_t$ with high confidence, as we state in Lemma~\ref{lemma:confidence_balls}.
% The pseudocode for obtaining the lower-dimensional confidence tubes is described in Alg.~\ref{alg:confidence_balls_lower_dimensional}.

The pseudocode is in Alg.~\ref{alg:confidence_balls_lower_dimensional}.
As in Alg.~\ref{alg:lazy_check}, we precompute sequences $(S^e_{l,t})_{t\in\naturals_0}$ for $l\in\{1,\dots,L\}$ such that
\begin{align}
\label{eq:confidence_balls_lower_dimensional}
    \min_{P\in\mathcal{P}_{l,t}^e} P(S^e_{l,t}) > p_\text{safe}/L, \;\;\forall\: t\in\naturals_0,\; l\in\{1,\dots,L\},
\end{align}
restricting to balls centered at the origin and finding the (approximately) smallest valid radius by bisection. At time $t$, if no ball $S^e_{l,t} + M_l\bar x_t$ intersects $Y_\text{obs}^l$, the node is collision-free. 
% otherwise, we invoke Alg.~\ref{alg:collision_lower_dimensional}. 

%
% \begin{algorithm}
% \caption{Obtain Lower-Dimensional Confidence Tubes}\label{alg:confidence_balls_lower_dimensional}
% \begin{algorithmic}[1]
% \Require{$\mathcal{M}_p(P_w), \mathcal{M}_p(P_0), \beta,  \{\tau_j\}_{j = 1}^J, (\mathcal{P}_{l,t}^e = \mathbb B(\widehat P_{l,t}^e, \varepsilon_{l,t}))_{t \in \naturals_0}$ with $l\in\{1,\dots,L\}$ from Alg.~\ref{alg:tubes_low_dimensional}, $A_\text{cl}, G, M_l$ with $l\in\{1,\dots,L\}$}
% \Ensure{Lower-dimensional confidence tubes $(S^e_{l,t})_{t \in \mathbb N_0}$, $l\in\{1,\dots,L\}$}
% %
% \For{$l \in \{1,\dots, L\}$}
% \For{$j \in \{1,\dots, J\}$}
% \State $I_j \gets \big\{t \in \naturals_0 : \nexists k \in \{1,\dots, J\} \:\:\text{s.t.}\: f_{\tau_k}^{M_l}(t) < f_{\tau_j}^{M_l}(t) \big\}$
% \State $\bar \varepsilon_{l,\tau_j} \gets \max\{ f_{\tau_j}^{M_l}(t) : t \in I_j\}$ 
% \State Obtain $S^e_{l, \tau_j}$ s.t. $\min\big\{P(S^e_{l, \tau_j}) : P\in\mathbb B(\widehat P_{l, \tau_j}^e, \bar \varepsilon_{l, \tau_j}) \big\} > p_\text{safe}/L$
% \State $S^e_{l, t} \gets S^e_{l, \tau_j}$ for all $t \in I_j$
% \EndFor
% \EndFor
% %

% \end{algorithmic}
% \end{algorithm}

\begin{algorithm}
\caption{Construct Lower-Dim. Confidence Tubes}
\label{alg:confidence_balls_lower_dimensional}

\KwIn{$\mathcal{M}_p(P_w),\ \mathcal{M}_p(P_0),\ \beta,\ \{\tau_j\}_{j=1}^J$, $(\mathcal{P}_{l,t}^e = \mathbb{B}(\widehat{P}_{l,t}^e, \varepsilon_{l,t}))_{t \in \mathbb{N}_0,\ l \in \{1,\dots,L\}}$ from Alg.~\ref{alg:tubes_low_dimensional},
$A_\text{cl},\ G,\ M_l$ for $l \in \{1,\dots,L\}$}
\KwOut{Lower-dimensional confidence tubes $(S^e_{l,t})_{t \in \mathbb{N}_0}$ for $l \in \{1,\dots,L\}$}

\For{$l \in \{1,\dots,L\}$}{
    \For{$j \in \{1,\dots,J\}$}{
        $I_j \gets \big\{ t \in \mathbb{N}_0 : \nexists k \in \{1,\dots,J\} \text{ s.t. } f_{\tau_k}^{M_l}(t) < f_{\tau_j}^{M_l}(t) \big\}$\;
        $\bar{\varepsilon}_{l,\tau_j} \gets \max \big\{ f_{\tau_j}^{M_l}(t) : t \in I_j \big\}$\;

        Obtain $S^e_{l, \tau_j}$ such that
        $\min \big\{ P(S^e_{l, \tau_j}) : P \in \mathbb{B}(\widehat{P}_{l, \tau_j}^e, \bar{\varepsilon}_{l, \tau_j}) \big\} > \dfrac{p_\text{safe}}{L}$\;

        $S^e_{l, t} \gets S^e_{l, \tau_j}$ for all $t \in I_j$\;
    }
}
\end{algorithm}

\begin{lemma}[Lower-Dimensional Confidence Tubes]
\label{lemma:confidence_balls_lower_dimensional}
    Let
    %$(\mathcal{P}_t)_{t\in\naturals_0}$ be the ambiguity tube obtained from Alg.~\ref{alg:tube}, and
    $(S^e_{l,t})_{t\in\naturals_0}$, $l\in\{1,\dots,L\}$, be the confidence tubes obtained from Alg.~\ref{alg:confidence_balls_lower_dimensional}
    %and denote $e_{l,t} := M_l e_t$ for all $t,l$.
    . Then, $M_{l\#}P_t^e(S^e_{l,t}) > p_\text{safe}/L$ for all $t\in\naturals_0$, $l\in\{1, \dots, L\}$, with confidence $1-\beta$.
\end{lemma}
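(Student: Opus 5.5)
The argument will condition on the high-confidence event supplied by Theorem~\ref{thm:ambiguity_tubes_low_dimensional} and then show, deterministically on that event, that every ball $S^e_{l,t}$ carries more than $p_\text{safe}/L$ of the mass of $M_{l\#}P_t^e$. Concretely, let $E$ be the event that $M_{l\#}P_t^e \in \mathcal{P}_{l,t}^e$ for all $t\in\naturals_0$ and all $l\in\{1,\dots,L\}$. By Theorem~\ref{thm:ambiguity_tubes_low_dimensional}, $E$ has probability at least $1-\beta$ over the sampled trajectories. This already contains the union bound over the $JL$ data-driven sets and the moment estimates of Proposition~\ref{prop:moments}. It therefore suffices to fix an outcome in $E$, fix $l$ and $t$, and prove $M_{l\#}P_t^e(S^e_{l,t}) > p_\text{safe}/L$.

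Fix $l$ and $t$. I first identify which $j$ assigns $S^e_{l,t}$ in Alg.~\ref{alg:confidence_balls_lower_dimensional}. Because $J$ is finite, the minimum of $\{f^{M_l}_{\tau_k}(t)\}_{k}$ is attained, so $t\in I_j$ for some $j$ with $f^{M_l}_{\tau_j}(t)=\min_k f^{M_l}_{\tau_k}(t)$. I take this to be the same index the argmin in Alg.~\ref{alg:tubes_low_dimensional} selects. If several $j$ tie, all the tied values equal the same minimum, so the argument below goes through for whichever index is used. It also goes through when the tied centers differ, because each of them gives a valid bound.

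The core step is the following. By Lemma~\ref{lemma:ambiguity_dynamics_low_dimensional}, on $E$ (in particular $M_{l\#}P^e_{\tau_j}\in\mathcal{P}^e_{l,\tau_j}$, with the moment bounds holding), we have
\[
\mathcal{W}(M_{l\#}P_t^e,\widehat P^e_{l,\tau_j}) \le f^{M_l}_{\tau_j}(t) \le \max\{f^{M_l}_{\tau_j}(s): s\in I_j\} = \bar\varepsilon_{l,\tau_j}.
\]
Hence $M_{l\#}P_t^e\in\mathbb B(\widehat P^e_{l,\tau_j},\bar\varepsilon_{l,\tau_j})$. By construction, $S^e_{l,\tau_j}$ satisfies $\min\{P(S^e_{l,\tau_j}) : P\in \mathbb B(\widehat P^e_{l,\tau_j},\bar\varepsilon_{l,\tau_j})\}>p_\text{safe}/L$, and $S^e_{l,t}=S^e_{l,\tau_j}$. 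It follows that $M_{l\#}P_t^e(S^e_{l,t})>p_\text{safe}/L$. Since $l$ and $t$ were arbitrary, the claim holds on all of $E$, which gives confidence $1-\beta$. The case $t=\tau_j$ fits into the same chain, since $f^{M_l}_{\tau_j}(\tau_j)=\varepsilon_{l,\tau_j}$.

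The step I expect to be the main obstacle is showing that $\bar\varepsilon_{l,\tau_j}$ is finite and attained, because $I_j$ can be infinite; this happens for the last $\tau_J$, whose set $I_J$ covers all large $t$. I would argue that $f^{M_l}_{\tau_J}(t)$ is nondecreasing in $t$ for $t\ge\tau_J$ and bounded. Boundedness follows from stability of $A_\text{cl}$, exactly as in Lemma~\ref{lemma:ambiguity_dynamics}(ii): the norms $\|M_l A_\text{cl}^i G\|$ are summable and $\|M_l(A_\text{cl}^t-A_\text{cl}^{\tau})\|$ is bounded. I would then interpret the max as a supremum, and note that the argument only uses $f^{M_l}_{\tau_j}(t)\le\bar\varepsilon_{l,\tau_j}$, which a supremum provides. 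The resulting ball $\mathbb B(\widehat P^e_{l,\tau_j},\bar\varepsilon_{l,\tau_j})$ is closed, so it still contains $M_{l\#}P_t^e$. The existence of a ball $S^e_{l,\tau_j}$ meeting the requirement in Alg.~\ref{alg:confidence_balls_lower_dimensional} is taken as part of the algorithm's output and is not proved here.
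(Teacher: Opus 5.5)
Your proposal is correct and follows essentially the same route as the paper. Both condition on the high-confidence containment event, use Lemma~\ref{lemma:ambiguity_dynamics_low_dimensional} to place $M_{l\#}P_t^e$ in $\mathbb B(\widehat P^e_{l,\tau_j},\bar\varepsilon_{l,\tau_j})$ for $t\in I_j$, and then invoke the defining property of $S^e_{l,\tau_j}$; the only bookkeeping difference is that the paper conditions per $l$ with confidence $1-\beta/L$ and union-bounds, whereas you take the joint event of Theorem~\ref{thm:ambiguity_tubes_low_dimensional} directly. Your treatment of ties and of $\bar\varepsilon_{l,\tau_j}$ as a supremum over a possibly infinite $I_j$ is a refinement the paper skips; only boundedness of $f^{M_l}_{\tau_J}$ is needed there, and the monotonicity you mention need not hold, since $\|M_l(A_\text{cl}^t-A_\text{cl}^{\tau_J})\|$ can oscillate in $t$.
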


\section{WDR-$\mathcal{X}$ Algorithm}
\label{sec:algorithm}

% \begin{algorithm}
%     \caption{Precomputation Phase}
%     \label{alg:precomputation}
%     \begin{algorithmic}[1]
%     \State \IG{Fill in precomputation phase}
%     \end{algorithmic}
% \end{algorithm}

% \begin{algorithm}
% \caption{Precomputation Phase}
% \label{alg:precomputation}

% \BlankLine
% % Placeholder for the actual algorithm steps
% \IG{Fill in precomputation phase}

% \end{algorithm}

 % A pseudocode for an algorithm for the first phase is shown in Alg.~\ref{alg:precomputation}.

% In this section, we present our sampling-based motion planning algorithmic framework, Wasserstein Distributionally Robust-$\mathcal{X}$ (WDR-$\mathcal{X}$), where $\mathcal{X}$ is any tree-based sampling algorithm per Section~\ref{sec:treebasedplanners}, to solve Prob.~\ref{problem}. A pseudocode of the framework is shown in Alg.~\ref{alg:wdr-rrt}\footnote{Note: If a lower-dimensional ambiguity set is used, the corresponding algorithms in Section~\ref{sec:lower_dimensional} will be applied instead.}.

In this section, we present our general motion planning framework, \textit{Wasserstein Distributionally Robust-$\mathcal{X}$} (WDR-$\mathcal{X}$), to solve Problem~\ref{problem}. Here, $\mathcal{X}$ is any tree-based kinodynamic  sampling-based algorithm from Section~\ref{sec:treebasedplanners}. Pseudocode is given in Alg.~\ref{alg:wdr-rrt}.\footnote{If lower-dimensional ambiguity sets are used, the corresponding algorithms from Section~\ref{sec:lower_dimensional} apply instead.}

% The framework operates in two phases: construction of ambiguity sets and planning. In the first phase, we learn and construct ambiguity sets from data, as discussed in Section~\ref{sec:data_driven_construction_ambiguity_set}. For each System~\eqref{eq:system}, we only need to compute the ambiguity and confidence tubes once, using Algs. $2$ and $4$ (Lines $1-2$). After this construction phase, we can use (and re-use) the ambiguity tubes in a typical tree search motion planner in the second phase for different environments.

The framework operates in two phases: ambiguity set construction and planning. In the first phase (Lines 1–2), we learn ambiguity and confidence tubes from data via Algs. 2 and 4 (Section~\ref{sec:data_driven_construction_ambiguity_set}). Note that this phase needs to be done only once per system; the resulting tubes can be reused across planning queries and environments.

% In the second phase, we search for a motion plan using the dynamics of the system (Lines $4-10$). The search for a solution is performed by growing a motion tree, in a manner similar to Alg.~\ref{alg:treebasedplanners}. A node is the nominal state of the system, which follow Eq.~\eqref{eq:reference_dynamics}, and the Sample (Line $5$), Select (Line $6$), and Extend (Line $7$) subroutines are performed on the nominal state via a randomly sampled feedforward control. A new node is added to the tree if it satisfies Eq.~\eqref{eq:cc_constraints_safety}, which is checked using the validity checkers in Section~\ref{sec:validity_checking} on the ambiguity tube $(\mathcal{P}_t^e)_{t \in \mathbb{N}_0}$ and confidence tube $(S^e_t)_{t \in \mathbb{N}_0}$ (Line $8$). A solution is found if a valid node satifies Eq.~\eqref{eq:cc_constraints_goal}, and the solution is extracted (Line $12$). 

In the second phase (Lines 4–10), we grow a motion tree as in Alg.~\ref{alg:treebasedplanners}. Each node is a nominal state evolving per \eqref{eq:reference_dynamics}, and Sample (Line 5), Select (Line 6), and Extend (Line 7) act on the nominal state via a randomly sampled feedforward control. A new node is added if it satisfies Constraint \eqref{eq:cc_constraints_safety}, checked using the validity checkers of Section~\ref{sec:validity_checking} on $(\mathcal{P}_t^e)_{t \in \mathbb{N}_0}$ and $(S^e_t)_{t \in \mathbb{N}_0}$ (Line 8). A solution is returned once a valid node satisfies Constraint \eqref{eq:cc_constraints_goal} (Line 12).

% The planner performs the same subroutines as Alg.~\ref{alg:treebasedplanners} for a deterministic linear system, except for the validity check subroutine. An appealing aspect of our algorithm is that we only need to modify the validity checking subroutine in Alg.~\ref{alg:treebasedplanners}, and all other subroutine utilizes the nominal state and (feedforward) dynamics of the nominal system. Therefore, we only propagate the nominal dynamics instead of propagating uncertainty during tree search. This allows much faster planning (and re-planning) for the same system in different environments, as we can re-use the ambiguity tube.

The planner performs the same subroutines as Alg.~\ref{alg:treebasedplanners} for a deterministic linear system except in the validity check: all other subroutines act on the nominal state and feedforward dynamics, so we propagate only the nominal dynamics during tree search rather than the uncertainty. This enables much faster planning and replanning across environments for the same system, since the ambiguity tube is reused.

% In this work, we focus on the Rapidly Exploring Random Trees (RRT) \cite{lavelle1998rrt} as the base algorithm, but any kinodynamic sampling-based motion planning algorithm can be used in principle.

% \begin{algorithm}
%     \caption{WDR-RRT ($\mathcal{X}, \mathcal{U}, \mathcal{X}_{goal}, \mathcal{X}_{obs}, x_{init}$, N) \qh{this might be too generic?}}
%     \label{alg:wdr-rrt}
%     \begin{algorithmic}[1]
%     \State $G = (\mathbb{V} \leftarrow \{x_{init}$\}, $\mathbb{E} \leftarrow \emptyset)$
%     \For{N iterations}
%         \State $x_{rand} \leftarrow$ Sample()
%         \State $n_{select} \leftarrow$ Select($x_{rand}$)
%         \State $n_{new} \leftarrow$ Extend($n_{selected}$)
%         \If {isDistributionallyRobustValid($n_{new}$)}
%             \State $\mathbb{V} \leftarrow \mathbb{V} \cup \{n_{new}$\}
%             \State $\mathbb{E} \leftarrow \mathbb{E} \cup \{edge(n_{select}, n_{new})\}$
%     \EndIf
%     \EndFor
%         \State Prune($\mathbb{V}, \mathbb{E}$)
%     \State \Return $G=(\mathbb{V}, \mathbb{E})$
%     \end{algorithmic}
% \end{algorithm}
%
\begin{algorithm}
\caption{WDR-$\mathcal{X}$
}
% $(\mathcal{X}, \mathcal{U}, \mathcal{X}_{\text{goal}}, \mathcal{X}_{\text{obs}}, x_{\text{init}}, N)$
\label{alg:wdr-rrt}

\KwIn{State space $\mathcal{X}$, input space $\mathcal{U}$, $X_{\text{obs}}$, $X_{\text{goal}}$, $\mathcal{M}_p(P_w), \mathcal{M}_p(P_0), \beta, \{\tau_j\}_{j = 1}^J$, $A_\text{cl}, G, X_0, W,$
$\{\hat{\boldsymbol e}^{(i)}_{\tau_1}, \hat{\boldsymbol e}^{(i)}_{\tau_2}, \ldots, \hat{\boldsymbol e}^{(i)}_{\tau_J} \}_{i=1}^N$, $p_{safe}$, iterations $N$}
\KwOut{Tree $G = (\mathbb{V}, \mathbb{E})$}

\vspace{0.5em}
\nlnonumber
\KwPhase{\textbf{1: Offline Tube Construction}}

Ambiguity Tubes $(\mathcal{P}_t^e)_{t \in \mathbb{N}_0}$ using Alg.~\ref{alg:tube}\;
Confidence Tubes $(S^e_t)_{t \in \mathbb{N}_0} using Alg.~\ref{alg:confidence_balls}$\;
\vspace{0.5em}
\nlnonumber
\KwPhase{\textbf{2: Online Planning}}

Initialize $G = (\mathbb{V} \gets \{x_{\text{init}}\},\ \mathbb{E} \gets \emptyset)$\;

%\For{$i = 1$ \KwTo $N$}
\While{$\text{True}$}{
    $\bar{x}_{\text{rand}}, \bar{u}_{rand} \gets$ \texttt{Sample}()\;
    $n_{\text{select}} \gets$ \texttt{Select}($x_{\text{rand}}$)\;
    $n_{\text{new}} \gets$ \texttt{Extend}($n_{\text{select}}, \bar{u}_{rand}$)\;
    
    \If{\texttt{ValidityCheck}($n_{\text{new}}, (\mathcal{P}_t^e)_{t \in \mathbb{N}_0}, (S^e_t)_{t \in \mathbb{N}_0})$}{
        $\mathbb{V} \gets \mathbb{V} \cup \{n_{\text{new}}\}$\;
        
        $\mathbb{E} \gets \mathbb{E} \cup \{\texttt{edge}(n_{\text{select}}, n_{\text{new}})\}$\;

        \If{$n_{\text{new}}$ satifies Eq.~\eqref{eq:cc_constraints_goal}}{
            \Return \texttt{ExtractPath}($G, n_{\text{new}}$)\;
        }
    }
}
\Return Failure 
% $G = (\mathbb{V}, \mathbb{E})$\;
\end{algorithm}

% In order to plan leveraging lower-dimensional ambiguity tubes, the algorithm follows the same logic as Alg.~\ref{alg:wdr-rrt}  but with a few differences: first, in Line $1$ one should obtain the lower-dimensional ambiguity tubes $(\mathcal{P}_{l,t}^e)_{t \in \mathbb{N}_0}$ for all $l \in\{1, \dots, L\}$ according to Alg.~\ref{alg:tubes_low_dimensional}. The same should be done in Line $2$ for the confidence tubes $(S_{l,t}^e)_{t\in \mathbb{N}_0}$ with Alg.~\ref{alg:confidence_balls_lower_dimensional}. Finally, Alg.s~\ref{alg:collision_lower_dimensional} and \ref{alg:lazy_check_lower_dimensional}, or a combination of them in the spirit of Section~\ref{sec:combined_validity_checkers}, should be used for validity check in Line $9$.

Planning with lower-dimensional ambiguity tubes follows the same logic as Alg.~\ref{alg:wdr-rrt}, with a few differences: in Line 1, the lower-dimensional ambiguity tubes $(\mathcal{P}_{l,t}^e)_{t \in \mathbb{N}_0}$ for $l \in\{1, \dots, L\}$ are obtained via Alg.~\ref{alg:tubes_low_dimensional}; in Line 2, the confidence tubes $(S_{l,t}^e)_{t\in \mathbb{N}_0}$ are obtained via Alg.~\ref{alg:confidence_balls_lower_dimensional}; and in Line 9, validity is checked using Alg.~\ref{alg:collision_lower_dimensional}, lazy check, or a hybrid of them as in Section~\ref{sec:combined_validity_checkers}.

\subsection{Theoretical Analysis}
\label{sec:analysis}

In this section, we analyze the theoretical properties of our algorithmic framework. All proofs can be found in the Appendix. We first show that Alg.~\ref{alg:wdr-rrt} is sound.
\begin{theorem}[Soundness]
\label{thm:soundness}
    Let Alg.~\ref{alg:wdr-rrt} be equipped with Algs.~\ref{alg:collision}, \ref{alg:lazy_check} or any of the hybrid validity checkers in Section~\ref{sec:combined_validity_checkers}. Then, every motion plan $((\bar u_t, \bar x_t))_{t = 0}^T$ returned by Alg.~\ref{alg:wdr-rrt} solves Problem~\ref{problem}, i.e., \eqref{eq:cc_constraints} is satisfied.
\end{theorem}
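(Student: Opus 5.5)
The plan is to condition on the single high-confidence event from Theorem~\ref{thm:ambiguity_tube} and then show that each validity checker's acceptance implies the chance constraints deterministically. Let $\mathcal{E}$ denote the event that $P_t^e \in \mathcal{P}_t^e$ for all $t \in \naturals_0$. By Theorem~\ref{thm:ambiguity_tube}, $\mathcal{E}$ holds with confidence $1-\beta$. Lemma~\ref{lemma:confidence_balls} is stated with the same confidence, and I will argue it holds on this same event $\mathcal{E}$. The reason is that Alg.~\ref{alg:confidence_balls} chooses $\bar\varepsilon_{\tau_j} \ge \varepsilon_t$ for all $t\in I_j$, so $\mathcal{P}_t^e \subseteq \mathbb B(\widehat P^e_{\tau_j},\bar\varepsilon_{\tau_j})$. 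The point is to use no further union bound. The rest of the argument is deterministic on $\mathcal{E}$, and it must hold simultaneously for every node of every returned plan, whatever horizon $T$ the tree search produces. This is exactly why a tube of infinite length with horizon-independent confidence is needed.

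\textbf{Step 1: translate to the state.} Fix a returned plan $((\bar u_t,\bar x_t))_{t=0}^T$. By Proposition~\ref{prop:ambiguity_tube_error}, on $\mathcal{E}$ we have $P_t\in\mathcal{P}_t=\mathbb B(\widehat P^e_t*\delta_{\bar x_t},\varepsilon_t)$ for every $t\le T$. Here $X_\text{obs}=X_\text{obs}(\bar x_t,\bar u_t)$ already embeds the control constraint via \eqref{eq:control_constraint_embed}.

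\textbf{Step 2: safety constraint \eqref{eq:cc_constraints_safety}.} Every node on the extracted path was added to the tree only after passing \texttt{ValidityCheck}, and the root is assumed checked in the same way. I split by checker.
\begin{itemize}
\item[(a)] \emph{Alg.~\ref{alg:collision}.} It computes the exact value $\min_{P\in\mathcal{P}_t}P[\boldsymbol x_t\notin X_\text{obs}]$ through \eqref{eq:collision_probability}, citing the worst-case characterization of \cite{gao2023distributionally}. Since $P_t\in\mathcal{P}_t$, acceptance gives $P_t[\boldsymbol x_t\notin X_\text{obs}]>p_\text{safe}$. Clustering is harmless, because Remark~\ref{rem:cluster} enlarges the radius so that containment is preserved.
\item[(b)] \emph{Alg.~\ref{alg:lazy_check}.} Acceptance means $(S^e_t+\bar x_t)\cap X_\text{obs}=\emptyset$, so $\{\boldsymbol x_t\notin X_\text{obs}\}\supseteq\{\boldsymbol e_t\in S^e_t\}$. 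Lemma~\ref{lemma:confidence_balls} then gives $P_t[\boldsymbol x_t\notin X_\text{obs}]\ge P^e_t[\boldsymbol e_t\in S^e_t]>p_\text{safe}$.
\item[(c)] \emph{Hybrid checkers.} Both the naive and the bandit checkers return true only when one of (a) or (b) returned true. Their randomization affects only which checker is called, never the soundness of an acceptance.
\end{itemize}

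\textbf{Step 3: goal constraint \eqref{eq:cc_constraints_goal}.} For the terminal node $T$, the goal test \eqref{eq:dr_cc_goal} is the same worst-case computation with $X_\text{obs}$ replaced by the complement of $X_\text{goal}$. The lazy version instead requires $S^e_T+\bar x_T\subseteq X_\text{goal}$. The same two arguments give $P_T[\boldsymbol x_T\in X_\text{goal}]>p_\text{safe}$. If lower-dimensional tubes are used, I invoke Theorem~\ref{thm:ambiguity_tubes_low_dimensional} and Lemma~\ref{lemma:confidence_balls_lower_dimensional}, then apply a union (Bonferroni) bound over the $L$ components under Assumption~\ref{ass:obs_goal}:
\[
P_t[\boldsymbol x_t\notin X_\text{obs}]\ge 1-\sum_l\big(1-M_{l\#}P_t(\reals^{n_l}\setminus Y^l_\text{obs})\big)\ge 1-L+\sum_l\alpha_l>p_\text{safe},
\]
and analogously with intersections for the goal.

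\textbf{Main obstacle.} The difficulty is the bookkeeping of confidence. I must ensure that the confidence events of Theorem~\ref{thm:ambiguity_tube}, of Lemma~\ref{lemma:confidence_balls} and of the moment bounds in Proposition~\ref{prop:moments} are all accounted for inside the single budget $\beta$, rather than compounding across nodes. I would handle this by showing that all of them are implied by the single event used to build the tube: Alg.~\ref{alg:tube} splits $\beta/J$ across the $J$ data-driven sets, and the moment estimates are folded in by the union bound described in the footnote to Proposition~\ref{prop:moments}.
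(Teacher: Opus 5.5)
Your proposal is correct and follows essentially the same route as the paper's proof. You condition on the event that the ambiguity and confidence tubes contain the true error distributions. You then use $\boldsymbol x_t=\boldsymbol e_t+\bar x_t$ (Proposition~\ref{prop:ambiguity_tube_error}) to show that acceptance by either the exact transport check or the lazy ball check yields $P_t[\boldsymbol x_t\notin X_\text{obs}]>p_\text{safe}$, with the hybrids accepting only through one of these, and you argue the goal constraint the same way.

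Your observation that Lemma~\ref{lemma:confidence_balls} holds on the very same event as Theorem~\ref{thm:ambiguity_tube}, so no second union bound is needed, usefully sharpens the paper's brief appeal to both results. Note, though, that the inclusion $\mathcal{P}_t^e\subseteq\mathbb B(\widehat P^e_{\tau_j},\bar\varepsilon_{\tau_j})$ presumes Alg.~\ref{alg:tube} selected $\tau_j$ at time $t$; when there are ties, it is cleaner to invoke Lemma~\ref{lemma:ambiguity_dynamics} directly from $P^e_{\tau_j}\in\mathcal{P}^e_{\tau_j}$.
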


Alg.~\ref{alg:wdr-rrt} is also probabilistically complete with respect to the conservativeness of the validity checker.\footnote{All the validity checking methods are conservative with respect to the true probability of safety and goal-reachability, which is desirable for soundness of the computed solutions.} As the number of samples in the first phase goes to infinity, the conservative collision checker converges to exact collision detection probabilities. This means that as the number of iterations of the algorithm approaches infinity, the probability of finding a solution, if one exists (and is deemed valid with the conservative validity checkers), approaches $1$.

\begin{theorem}[Probabilistic Completeness]
\label{thm:pc}
    Let Alg.~\ref{alg:wdr-rrt} be equipped with Alg.~\ref{alg:collision}. Let $\mathcal{P}^e := \mathbb{B}(\widehat P^e, \varepsilon)$ be an ambiguity ball of radius $\varepsilon >0$ and centered on a discrete distribution $\widehat P^e$, such that $\bigcup_{t\in\naturals_0} \mathcal{P}^e_t \subseteq \mathcal{P}^e$. Assume also that there exists a valid motion plan $\{(\bar u_t, \bar x_t)\}_{t=0}^T$ such that
    \begin{align*}
        & \min_{P\in\mathcal{P}^e} P\big[\boldsymbol{x}_t \notin  X_\text{obs}\big] > p_\text{safe} && \forall t \in \{0,\dots,T\},\\
        & \min_{P\in\mathcal{P}^e} P\big[\boldsymbol{x}_T \in X_\text{goal}\big] > p_\text{safe}.
    \end{align*}
    Then, as the number of iterations goes to infinity, the probability that Alg.~\ref{alg:wdr-rrt} finds a solution approaches $1$.
\end{theorem}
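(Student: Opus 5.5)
The proof follows the standard template for probabilistic completeness of kinodynamic tree planners: a robust clearance argument combined with a ball-covering argument over the feedforward control sequence. The key observation is that the hypothesis is stated with respect to the single enlarged ball $\mathcal{P}^e\supseteq \bigcup_t \mathcal{P}^e_t$. Because each $\mathcal{P}_t^e\subseteq\mathcal{P}^e$, the worst-case safety probability over $\mathcal{P}_t$ (obtained by translating by $\bar x_t$ as in Proposition~\ref{prop:ambiguity_tube_error}) is at least the worst case over the translate of $\mathcal{P}^e$. Any node that is robustly valid with respect to $\mathcal{P}^e$ is therefore accepted by Alg.~\ref{alg:collision}, which computes the exact minimum in \eqref{eq:dr_cc_obs}. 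The same applies to the goal condition \eqref{eq:dr_cc_goal}. It therefore suffices to show that the tree eventually contains a path whose nodes are valid with respect to $\mathcal{P}^e$ and whose last node satisfies the goal condition.

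\textbf{Step 1 (continuity / robust clearance).} I would show that the map $\bar x\mapsto h(\bar x):=\min_{P\in\mathbb B(\widehat P^e*\delta_{\bar x},\varepsilon)}P[\boldsymbol x\notin X_\text{obs}]$ is lower semicontinuous in a suitable sense. Since the centre is discrete, \eqref{eq:collision_probability0}--\eqref{eq:collision_probability} express $h$ through the finitely many distances $\text{dist}(\hat x^{(i)}+\bar x,X_\text{obs})$. These are $1$-Lipschitz in $\bar x$, and the greedy mass formula is monotone in the distances.

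Shifting $\bar x$ by $\delta$ changes each distance by at most $\|\delta\|$. A cleaner way to see the effect is via the Wasserstein metric: $\mathbb B(\widehat P^e*\delta_{\bar x+\delta},\varepsilon)\subseteq \mathbb B(\widehat P^e*\delta_{\bar x},\varepsilon+\|\delta\|)$. The worst-case probability is continuous in the radius $\varepsilon$ for a discrete centre, because the greedy formula is piecewise linear and continuous in $\varepsilon$ wherever the relevant distances are positive. Hence the strict inequalities $>p_\text{safe}$ at the finitely many plan states $\bar x_0,\dots,\bar x_T$ persist in neighbourhoods $B(\bar x_t,r)$ for some common $r>0$.

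I expect this to be the main obstacle. Degenerate cases, such as atoms at distance $0$ from $X_\text{obs}$ or $m_0$ hitting a breakpoint, need care. I would first try to handle them by shrinking the radius: strict inequality gives slack $\eta>0$, and one picks $\|\delta\|$ small enough that the extra transport budget $\|\delta\|$ moves at most $\eta$ mass. This works because any atom at positive distance $d_i$ costs at least $d_{\min}>0$ per unit mass, so $\|\delta\|$ extra budget moves at most $\|\delta\|/d_{\min}$ mass. Atoms with $d_i=0$ are already counted as in collision, so they need no further budget.

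\textbf{Step 2 (control perturbation).} Since the reference dynamics \eqref{eq:reference_dynamics} are linear, $\bar x_t$ depends continuously (indeed affinely) on $\bar x_0$ and $(\bar u_0,\dots,\bar u_{t-1})$. There is thus $\rho>0$ such that any feedforward sequence with $\|\bar u'_s-\bar u_s\|<\rho$ for all $s$ yields states within $r$ of $\bar x_t$ for all $t\le T$. Every such sequence is valid at every step and reaches the goal by Step 1.

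\textbf{Step 3 (covering argument).} Following the standard kinodynamic RRT completeness proofs, I would argue inductively along $t=0,\dots,T$. Suppose the tree contains a node in the $r_t$-ball around $\bar x_t$, with radii chosen via Step 2 so that the errors compound only through the bounded matrix $A$. Then at each iteration there is a uniformly positive probability that \texttt{Select} picks that node and \texttt{Sample} draws a control within $\rho$ of $\bar u_t$. The resulting node lies in the $r_{t+1}$-ball, and by Step 1 it passes Alg.~\ref{alg:collision}.

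Each stage succeeds with probability at least some $p>0$ per iteration, so by a geometric-trials bound the probability that all $T$ stages have succeeded after $k$ iterations tends to $1$ as $k\to\infty$. The node reached at stage $T$ satisfies the goal check, so Alg.~\ref{alg:wdr-rrt} returns a solution with probability approaching $1$.
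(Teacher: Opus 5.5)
Your reduction to the rigid ball $\mathcal{P}^e$ is sound: any node robustly valid for the translate of $\mathcal{P}^e$ passes Alg.~\ref{alg:collision}, whatever its time index. Your Step~1 is also sound. The inclusion $\mathbb B(\widehat P^e*\delta_{\bar x+\delta},\varepsilon)\subseteq\mathbb B(\widehat P^e*\delta_{\bar x},\varepsilon+\|\delta\|)$, together with the bound $\|\delta\|/d_{\min}$ on the extra transported mass, justifies robust clearance more cleanly than the paper's bare appeal to continuity.

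The gap is in Step~3. You assert that, once the tree contains a vertex in the $r_t$-ball around $\bar x_t$, \texttt{Select} picks \emph{that} vertex with probability bounded below uniformly over iterations. For nearest-neighbour selection this is false. The probability of selecting a given vertex is the measure of its Voronoi cell, which can become arbitrarily small as the tree grows. Even the weaker event that \emph{some} vertex of the $r_t$-ball is selected has no uniform lower bound when the known vertex lies near the boundary of that ball. In that case the region of admissible $x_\text{rand}$ shrinks with the remaining slack $r_t-\|v-\bar x_t\|$. This is the delicate step in kinodynamic RRT completeness proofs. The paper resolves it with \cite[Lemma 4]{kleinbort2018probabilistic}: if some vertex lies in $B(\bar x_t,2r_t/5)$, then every $x_\text{rand}\in B(\bar x_t,r_t/5)$ has its nearest vertex in $B(\bar x_t,r_t)$. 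That gives a uniform positive probability.

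The induction must then be run with this nested structure. From an \emph{arbitrary} point of the large ball $B(\bar x_t,r_t)$, a control sampled near $\bar u_t$ must land in the \emph{small} ball $B(\bar x_{t+1},2r_{t+1}/5)$. This is why the paper chooses $r_{t+1}=4\|A\|r_t$ backwards from $r_T=\epsilon$, and bounds $\|\bar u_\text{rand}-\bar u_t\|$ by a multiple of $\|A\|r_t/\|B\|$. Your Step~2 perturbs a whole control sequence from the fixed $\bar x_0$, so it does not supply this: the selected vertex need not be the child created at the previous stage. What you need is a one-step contraction statement. Validity of the earlier vertices on the returned path is automatic, since they are already in the tree.

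There are also two smaller omissions. First, with the control constraints embedded via \eqref{eq:control_constraint_embed}, $X_\text{obs}$ depends on $(\bar x_t,\bar u_t)$. Your Step~1 must therefore give clearance on neighbourhoods of state--control pairs (the paper's $F_\text{safe}$), not only of $\bar x_t$ with a fixed obstacle set. Second, if propagation durations are sampled, you need a uniformly positive probability of drawing duration $1$.
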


For simplicity, in Theorem~\ref{thm:pc}, we assume that there exists a valid motion plan that is robust with respect to a rigid, i.e., time-invariant, ambiguity tube for the distribution of the error. Note the time-varying ambiguity tube obtained via Alg.~\ref{alg:tube} is tighter than the rigid one, i.e., it contains less trajectories of distributions while having the same guarantees of containing the true trajectory. However, the rigid tube assumption greatly simplifies the planning algorithm and the proof of probabilistic completeness. To relax this assumption, ones needs to plan in a different space, e.g., the belief space~\cite{Ho2022gbt}.

\begin{remark}
\label{rem:completeness_extension}
    % Theorem~\ref{thm:pc} also applies when validity is checked via our hybrid validity checkers in Section~\ref{sec:combined_validity_checkers}: first, the Naive Hybrid either deems a node as valid or assesses its validity via Alg.~\ref{alg:lazy_check}. Furthermore, Alg.~\ref{alg:bandit_check} always checks for validity via Alg.~\ref{alg:lazy_check} with some probability, with a fixed lower bound on this probability thus also inheriting the property of probabilistic completeness.

    Theorem~\ref{thm:pc} also applies to the hybrid validity checkers of Section~\ref{sec:combined_validity_checkers}: first, the Naive Hybrid algorithm either deems a node valid via Alg.~\ref{alg:lazy_check} or defers to Alg.~\ref{alg:collision}. On the other hand, the Bandit-based algorithm always checks validity of a node via Alg.~\ref{alg:collision}, which is less conservative, with positive probability. Furthermore, when an attempt to propagate a node is not allowed by the more conservative algorithm, the probability of using Alg.~\ref{alg:collision} in the next iteration increases. By these considerations, both hybrid algorithms inherit the probabilistic completeness property under the assumptions in Thm.~\ref{thm:pc}.
\end{remark}

In the following two theorems, we conclude that Alg.~\ref{alg:wdr-rrt}, leveraging the lower-dimensional ambiguity tubes for validity checking, is also sound and probabilistically complete with respect to the conservatism of the collision checkers.
\begin{theorem}
\label{thm:solution_low_dimensional}
    Let Alg.~\ref{alg:wdr-rrt} be equipped with low-dimensional ambiguity tubes and Alg.~\ref{alg:collision_lower_dimensional}, 
    % \ref{alg:lazy_check_lower_dimensional} 
    lazy checker, 
    or any of the hybrid validity checkers in Section~\ref{sec:combined_validity_checkers}. Every motion plan $((\bar u_t, \bar x_t))_{t = 0}^T$ returned by the algorithm solves Problem~\ref{problem}.
\end{theorem}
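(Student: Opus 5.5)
We will condition on the high-confidence event $E$ of Theorem~\ref{thm:ambiguity_tubes_low_dimensional}, on which $M_{l\#}P_t^e \in \mathcal{P}_{l,t}^e$ for all $t\in\naturals_0$ and $l\in\{1,\dots,L\}$; this event has probability at least $1-\beta$. The confidence tubes of Lemma~\ref{lemma:confidence_balls_lower_dimensional} are built from the same ambiguity tubes, so their soundness holds on the same event $E$ and no further union bound is needed. On $E$, we will show that every node accepted by any of the checkers satisfies \eqref{eq:cc_constraints_safety}, and that the returned goal node satisfies \eqref{eq:cc_constraints_goal}. This yields the claim with confidence $1-\beta$.

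\textbf{Translation step.} First, as in Proposition~\ref{prop:ambiguity_tube_error}, we translate the error sets into state sets. Since $\boldsymbol x_t = \boldsymbol e_t + \bar x_t$, we have $M_l\boldsymbol x_t = M_l\boldsymbol e_t + M_l\bar x_t$, so $M_{l\#}P_t = M_{l\#}P_t^e * \delta_{M_l\bar x_t}$. Translation preserves Wasserstein distance, hence $M_{l\#}P_t \in \mathcal{P}_{l,t} = \mathbb B(\widehat P_{l,t}^e*\delta_{M_l\bar x_t},\varepsilon_{l,t})$. By Assumption~\ref{ass:obs_goal}, $P_t[\boldsymbol x_t \notin X^l_\text{obs}] = M_{l\#}P_t(\reals^{n_l}\setminus Y^l_\text{obs})$, which is at least $\alpha_l := \min_{P\in\mathcal{P}_{l,t}}P(\reals^{n_l}\setminus Y^l_\text{obs})$. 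Alg.~\ref{alg:collision_lower_dimensional} computes $\alpha_l$ exactly, by the exactness of the mass-transport formula \eqref{eq:collision_probability}.

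\textbf{Union-bound (Fr\'echet) step for Alg.~\ref{alg:collision_lower_dimensional}.} We use
\begin{align*}
P_t[\boldsymbol x_t\notin X_\text{obs}] = P_t\Big[\bigcap_l \{\boldsymbol x_t\notin X^l_\text{obs}\}\Big] \ge 1-\sum_{l=1}^L\big(1-P_t[\boldsymbol x_t\notin X^l_\text{obs}]\big) \ge 1-L+\sum_l\alpha_l,
\end{align*}
which exceeds $p_\text{safe}$ whenever the checker accepts. The goal check is handled identically, since $X_\text{goal}=\bigcap_l X^l_\text{goal}$: we obtain $P_T[\boldsymbol x_T\in X_\text{goal}] \ge 1-L+\sum_l \min_{P\in\mathcal{P}_{l,T}}P(Y^l_\text{goal})$.

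\textbf{Lazy checker.} On $E$, Lemma~\ref{lemma:confidence_balls_lower_dimensional} gives $M_{l\#}P_t^e(S^e_{l,t}) > p_\text{safe}/L$. When no shifted ball $S^e_{l,t}+M_l\bar x_t$ meets $Y^l_\text{obs}$, each event $\{M_l\boldsymbol x_t \in S^e_{l,t}+M_l\bar x_t\}$ forces $\boldsymbol x_t\notin X^l_\text{obs}$. We then need to combine these $L$ per-constraint guarantees into a joint one. With the threshold $p_\text{safe}/L$ as stated, the Fr\'echet bound only yields $1-L+\sum_l(\cdot)$, which is not obviously above $p_\text{safe}$.

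\textbf{Main obstacle.} This aggregation for the lazy checker is the step I expect to be hardest. The threshold in \eqref{eq:confidence_balls_lower_dimensional} would need to be read as per-constraint mass exceeding $1-(1-p_\text{safe})/L$, so that the same Fr\'echet inequality gives $1-L\cdot(1-p_\text{safe})/L = p_\text{safe}$. I would first try to argue in that form, interpreting the construction consistently with Alg.~\ref{alg:collision_lower_dimensional}. Failing that, I would fall back on the observation that in the hybrid schemes the lazy checker only accepts nodes that the exact check would also accept.

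\textbf{Hybrid checkers.} These accept a node only if one of the two sound checkers accepts it, so soundness is inherited, exactly as argued in Section~\ref{sec:combined_validity_checkers}. Every node along the returned path was accepted, giving \eqref{eq:cc_constraints_safety} for all $t\le T$, and the terminal goal test gives \eqref{eq:cc_constraints_goal}. All of this holds on $E$, that is, with confidence $1-\beta$.
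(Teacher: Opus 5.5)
Your argument matches the paper's wherever the paper's argument is valid. The translation via Proposition~\ref{prop:ambiguity_tube_error} is the same. So is the Fr\'echet/union bound $P_t[\boldsymbol x_t\notin X_\text{obs}]\ge 1-L+\sum_l\alpha_l$ for Alg.~\ref{alg:collision_lower_dimensional}, which is the chain \eqref{eq:proof_thm_lower_dimensional_ambiguity_sets}. The analogous bound for $X_\text{goal}=\bigcap_l X^l_\text{goal}$ also matches; your version with $P(Y^l_\text{goal})$ is the correct one, while the paper's text and proof write the complement. Finally, conditioning everything on the single event of Theorem~\ref{thm:ambiguity_tubes_low_dimensional}, on which Lemma~\ref{lemma:confidence_balls_lower_dimensional} also holds, is what the paper does. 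The gap is the lazy checker, and you are right that it does not close. With the threshold $p_\text{safe}/L$ of \eqref{eq:confidence_balls_lower_dimensional}, the lazy checker is unsound for $L\ge 2$, and the paper's proof contains no missing idea you could borrow. Its step \eqref{eq:proof_thm_lower_dimensional_ambiguity_sets1} asserts $1-\sum_l M_{l\#}P_t(Y^l_\text{obs})\ge\sum_l M_{l\#}P_t(S^e_{l,t}+M_l\bar x_t)$. Disjointness, however, only gives $M_{l\#}P_t(Y^l_\text{obs})\le 1-M_{l\#}P_t(S^e_{l,t}+M_l\bar x_t)$, which yields exactly your weaker bound $1-L+\sum_l M_{l\#}P_t(S^e_{l,t}+M_l\bar x_t)$. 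A concrete failure: take $L=2$, $M_1=M_2$, and $Y^1_\text{obs}=Y^2_\text{obs}$ equal to the complement of the shifted ball. The two confidence tubes then coincide and the lazy check accepts. The only guarantee is $P_t[\boldsymbol x_t\notin X_\text{obs}]>p_\text{safe}/2$, and since the ball is chosen approximately minimal by bisection, nothing forces this above $p_\text{safe}$.

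Your primary repair is the right one, but state it as a modification of Alg.~\ref{alg:confidence_balls_lower_dimensional}, not as a reading of it: certify each tube with worst-case mass $>1-(1-p_\text{safe})/L$. The Fr\'echet bound then gives $P_t[\boldsymbol x_t\notin X_\text{obs}]>1-L\cdot(1-p_\text{safe})/L=p_\text{safe}$. The same correction is needed for the lazy goal test ($S^e_{l,T}+M_l\bar x_T\subseteq Y^l_\text{goal}$ for all $l$), which you did not treat. Since $S^e_{l,t}$ is certified over $\mathbb B(\widehat P^e_{l,\tau_j},\bar\varepsilon_{l,\tau_j})\supseteq\mathcal{P}^e_{l,t}$, lazy acceptance then implies $\alpha_l>1-(1-p_\text{safe})/L$ for every $l$, hence acceptance by Alg.~\ref{alg:collision_lower_dimensional}. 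The expression $1-\sum_l p_\text{safe}/L$ in the paper's chain suggests the authors intended exactly this, with $p_\text{safe}$ read as the admissible risk (the experiments use $p_\text{safe}=0.01$). Your fallback, however, fails as stated. With threshold $p_\text{safe}/L$, lazy acceptance yields only $\alpha_l>p_\text{safe}/L$, which does not imply $1-L+\sum_l\alpha_l>p_\text{safe}$, so the lazy checker is \emph{not} more conservative than Alg.~\ref{alg:collision_lower_dimensional}. In any case the theorem covers the lazy checker on its own. In both hybrids a lazy ``valid'' is final, so their soundness rests entirely on the lazy checker and cannot be recovered from the exact check.
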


\begin{theorem}
\label{thm:pc_lower_dimensional}
    Let Alg.~\ref{alg:wdr-rrt} be equipped with Alg.~\ref{alg:collision_lower_dimensional}. Also, let $\{\mathcal{P}_l^e := \mathbb{B}(\widehat P_l^e, \varepsilon_l)\}_{l=1}^L$ be lower-dimensional ambiguity balls of radii $\varepsilon_l >0$ and centered on the discrete distributions $\widehat P_l^e$ for $l \in\{1,\dots, L\}$, such that $\bigcup_{t=0}^T \mathcal{P}^e_{l,t} \subseteq \mathcal{P}_l^e$. Assume also that there exists a valid motion plan $\{(\bar u_t, \bar x_t)\}_{t=0}^T$ such that
    \begin{align*}
        & \sum_{l=1}^L \max_{P\in \mathcal{P}_l^e} P[ M_l \boldsymbol x_t \in Y_\text{obs}^l] <1 - p_\text{safe} && \forall t \in \{0,\dots,T\},\\
        & \sum_{l=1}^L \max_{P\in \mathcal{P}_l^e} P[ M_l \boldsymbol x_T \notin Y_\text{goal}^l] <1 - p_\text{safe}.
    \end{align*}
    Then, as the number of iterations goes to infinity, the probability that Alg.~\ref{alg:wdr-rrt} finds a solution approaches $1$.
\end{theorem}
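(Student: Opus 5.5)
The plan is to reduce Theorem~\ref{thm:pc_lower_dimensional} to the same argument as Theorem~\ref{thm:pc}. The reduction has two parts: first turn the hypotheses on the rigid balls $\mathcal{P}_l^e$ into a \emph{robust margin} that survives small perturbations of the reference trajectory, and then invoke the probabilistic completeness of the underlying kinodynamic planner $\mathcal{X}$ on the nominal dynamics \eqref{eq:reference_dynamics}.

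\textbf{Step 1: the checker accepts the given plan.} Since $\mathcal{P}^e_{l,t}\subseteq\mathcal{P}_l^e$ for all $t\le T$, and translation by $M_l\bar x_t$ preserves Wasserstein distances (as in Proposition~\ref{prop:ambiguity_tube_error}), each worst-case non-collision probability satisfies
\begin{align*}
\alpha_l = \min_{P\in\mathcal{P}_{l,t}} P(\reals^{n_l}\setminus Y^l_\text{obs}) \ge 1-\max_{P\in\mathcal{P}_l^e}P[M_l\boldsymbol x_t\in Y^l_\text{obs}].
\end{align*}
Summing over $l$ gives $1-L+\sum_l\alpha_l>p_\text{safe}$, so Alg.~\ref{alg:collision_lower_dimensional} accepts every node of the plan. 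The goal check follows analogously.

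\textbf{Step 2: robustness to perturbations of the reference trajectory.} I would show that the acceptance persists for all reference trajectories $\bar x'_t$ with $\|\bar x'_t-\bar x_t\|\le\delta$ for some $\delta>0$. The key observation is that shifting the center by $M_l(\bar x'_t-\bar x_t)$ keeps the shifted distribution in a ball around $\widehat P_l^e*\delta_{M_l\bar x_t}$ of radius $\varepsilon_l+\|M_l\|\delta$. It therefore suffices to show that
\begin{align*}
\varepsilon\mapsto\max_{P\in\mathbb B(\widehat Q,\varepsilon)}P(Y)
\end{align*}
is right-continuous at $\varepsilon_l$ for discrete $\widehat Q$. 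I would derive this from the explicit formula \eqref{eq:collision_probability0}--\eqref{eq:collision_probability}, where the worst-case mass is piecewise linear and continuous in $\varepsilon$ as long as the atoms lie at positive distance from the set. Atoms inside the set contribute a constant, so they pose no problem. Because the strict inequalities in the hypothesis leave slack, a uniform $\delta$ works for the finitely many pairs $(t,l)$.

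\textbf{Step 3: conclude with the planner's completeness.} The question then reduces to a deterministic kinodynamic planning problem for \eqref{eq:reference_dynamics} in which a $\delta$-clearance trajectory exists. The sampling-based planner $\mathcal{X}$ (sample, select, extend) is probabilistically complete for such problems, by the standard argument that a ball-covering sequence along the trajectory is traced with positive probability at each iteration. Hence the success probability tends to $1$.

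\textbf{Main obstacle.} I expect Step 2 to be the hardest part: establishing continuity of the worst-case probability in $\varepsilon$ and in the center for arbitrary-shaped sets $Y^l_\text{obs}$. I would first handle it through the distance function, using that $x\mapsto\text{dist}(x,Y)$ is 1-Lipschitz, so the sorted distances and hence \eqref{eq:collision_probability} vary continuously.
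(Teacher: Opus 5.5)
Your outline follows the paper's proof. The paper also gets a robust neighbourhood of the given plan from continuity of the greedy worst-case formula \eqref{eq:collision_probability0}--\eqref{eq:collision_probability} for discrete centres. It then runs the chaining argument of \cite{kleinbort2018probabilistic} (uniformly sampled states, feedforward controls and unit durations) as in Theorem~\ref{thm:pc}, with $F_\text{safe}$ redefined through the sum over $l$. Your Step~2 reaches the margin more cleanly than the paper's appeal to continuity of atom-to-set distances in $(\bar x_t,\bar u_t)$. A translation by $v$ moves a distribution by at most $\|v\|$ in $\mathcal{W}$, so you only need continuity in the radius of $\varepsilon\mapsto\sup_{P\in\mathbb B(\widehat Q,\varepsilon)}P(Y)$. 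For discrete $\widehat Q$ this is the value of a fractional knapsack LP, so it is concave and nondecreasing, hence continuous on $(0,\infty)$. This view also handles atoms at distance zero without special treatment.

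The concrete gap is that your margin is only in the reference state. Under \eqref{eq:control_constraint_embed} the obstacle sets depend on the pair $(\bar x_t,\bar u_t)$; in Example~\ref{ex:example}, for instance, $Y^2_\text{obs}=\reals^{n_u}\setminus(U-\bar u_t-K\bar x_t)$. This is why the paper defines $F_\text{safe}$ over pairs $(x,u)$ and establishes $B(\bar x_t,\epsilon)\times B(\bar u_t,\epsilon)\subset F_\text{safe}$. In the chaining step the sampled control $\bar u_\text{rand}$ differs from $\bar u_t$ almost surely, so a margin in $\bar x_t$ alone does not certify the new node. Your reduction to a state-clearance problem in Step~3 is therefore not justified as stated. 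Your own tool repairs it: what matters is the displacement of the atoms relative to $Y^l_\text{obs}(\bar x'_t,\bar u')$. For the translation-type dependence produced by \eqref{eq:control_constraint_embed}, this displacement is bounded by a constant times $\|\bar x'_t-\bar x_t\|+\|\bar u'-\bar u_t\|$; for the control component its norm is exactly $\|\bar u'-\bar u_t\|$. The same radius inflation then gives a joint margin in $(x,u)$.

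Be careful also that treating validity as time-invariant requires every tube set a node can be checked against to lie in $\mathcal{P}^e_l$. Nearest-neighbour selection may extend a node of depth $\ge T$, whose child is checked against $\mathcal{P}^e_{l,t}$ with $t>T$, so containment for $t\le T$ alone does not suffice. Theorem~\ref{thm:pc} assumes containment for all $t\in\naturals_0$, and that is what the paper's ``analogous'' proof actually relies on.
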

%
% \begin{remark}
%     By the same reasoning of Remark~\ref{rem:completeness_extension}, Theorem~\ref{thm:pc_lower_dimensional} also applies when validity is checked via hybrid validity checkers as the ones presented in Section~\ref{sec:combined_validity_checkers}, when these leverage Alg.~\ref{alg:collision_lower_dimensional} or \ref{alg:lazy_check_lower_dimensional}.
% \end{remark}

\section{Evaluation}
\label{sec:evaluation}

% The uncertainty propagation and collision-checking approaches are tested

We evaluate the performance of our proposed method in several case studies and benchmarks.  
To put our results in perspective, we compare the following uncertainty propagation and collision-checking approaches: 
\begin{enumerate}[label=\arabic*.]
    \item TS: moment-based distributionally robust-RRT from \cite{summers2018distributionally}.
    \item Risk-Assigned: moment-based distributionally robust-RRT with risk allocation from~\cite{ekenberg2023distributionally}.
    \item Particle-WDR-RRT: Alg.~\ref{alg:wdr-rrt} with particle-based validity checker.
    \item Confidence-WDR-RRT: Alg.~\ref{alg:wdr-rrt} with confidence-region validty checker.
    \item Hybrid-WDR-RRT: Alg.~\ref{alg:wdr-rrt} with Naive Hybrid validity checker.
    \item Bandit-WDR-RRT: Alg.~\ref{alg:wdr-rrt} with Bandit-based validity Checker.
\end{enumerate}

All algorithms were implemented in C++ with the Open Motion Planning Library (OMPL)\cite{sucan2012the-open-motion-planning-library}. We utilize kinodynamic RRT \cite{Kleinbort2019RRTPC} as the underlying planner for WDR-$\mathcal{X}$ for all experiments. The simulations were conducted single threaded on a machine with 3.7GHz CPU and 32GB of RAM.

\subsection{4-D Linear System}

% We first consider a 4-D linear system subject to process noise and initial state uncertainty from~\cite{summers2018distributionally}. 
% We benchmarked the systems over 4 types of environments to represent a diverse range of planning scenarios: (i) Scattered: Relatively sparse obstacles with more free space, (ii) Cluttered: Dense obstacle space with less free space, (iii) Narr(`width'): To assess the conservatism of the validity checkers, we vary the width of the narrow-passage environment, denoted Narr(`width'), and (iv) Random: A randomized environment with 10 obstacles with random width and height placed in random positions. The Scattered, Cluttered, and Narr(1.5) environments are shown in Fig.~\ref{fig:environments}.

We first consider a 4-D linear system from~\cite{summers2018distributionally}, subject to process noise and initial state uncertainty. We benchmark over four environment types spanning a diverse range of planning scenarios: (i) \textit{Scattered}, with relatively sparse obstacles and ample free space; (ii) \textit{Cluttered}, with dense obstacles and little free space; (iii) \textit{Narr(`width')}, a narrow-passage environment whose width we vary to assess the conservatism of the validity checkers; and (iv) \textit{Random}, containing 10 obstacles of random width, height, and position. The Scattered, Cluttered, and Narr(1.5) environments are shown in Fig.~\ref{fig:environments}.

\begin{figure*}[t]
    \centering
    ~~
    \begin{subfigure}[b]{0.22\linewidth}
        \includegraphics[width=\linewidth]{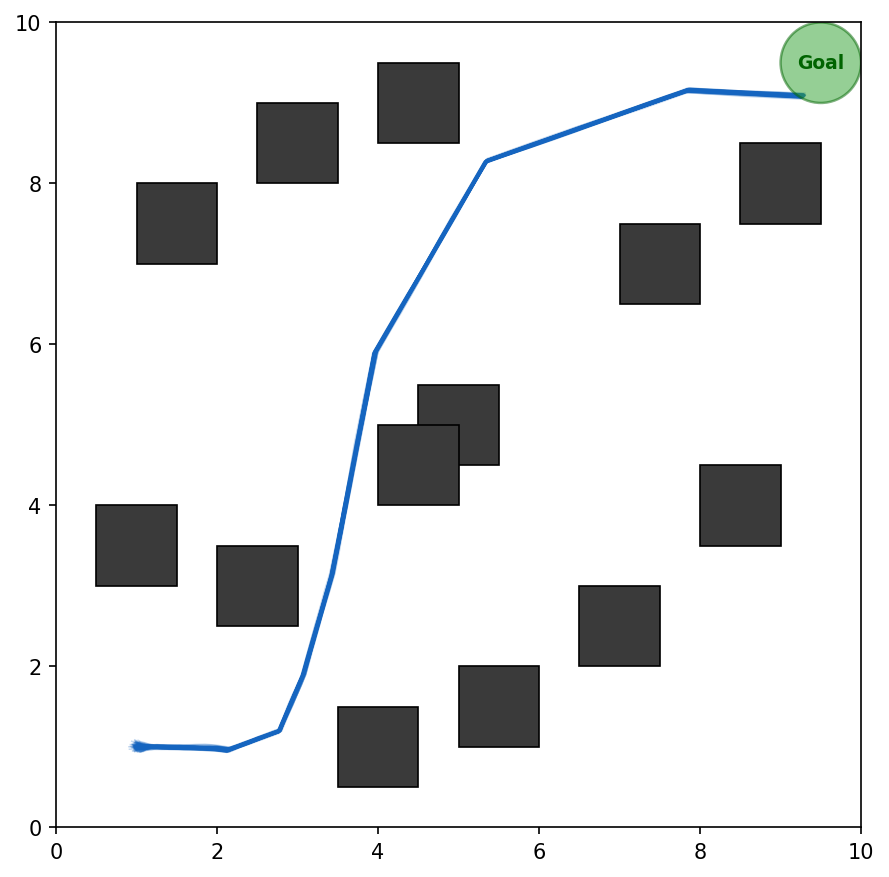}
        \caption{Scattered Obstacles.}
        \label{fig:scattered}
    \end{subfigure}
    ~~
    \begin{subfigure}[b]{0.22\linewidth}
        \includegraphics[width=\linewidth]{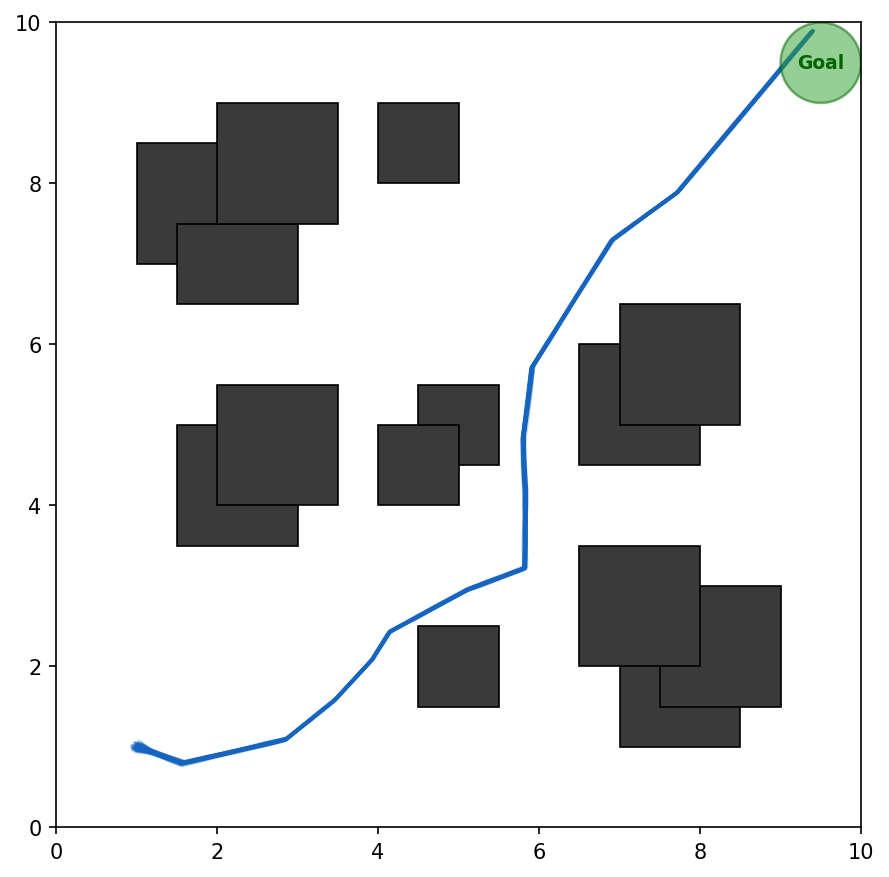}
        \caption{Cluttered Obstacles.}
        \label{fig:cluttered}
    \end{subfigure}
    ~~
    \begin{subfigure}[b]{0.22\linewidth}
        \includegraphics[width=\linewidth]{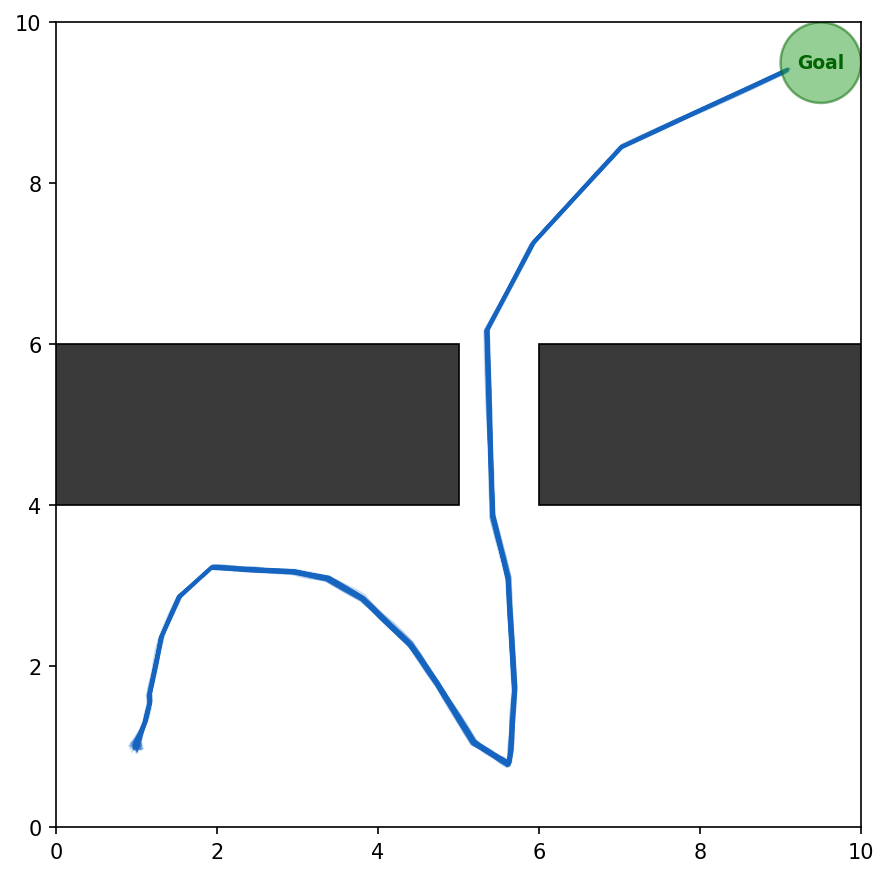}
        \caption{Narrow Passage Narr(1.0)}
        \label{fig:narrow}
    \end{subfigure}
    ~~
    \begin{subfigure}[b]{0.22\linewidth}
        \includegraphics[width=\linewidth]{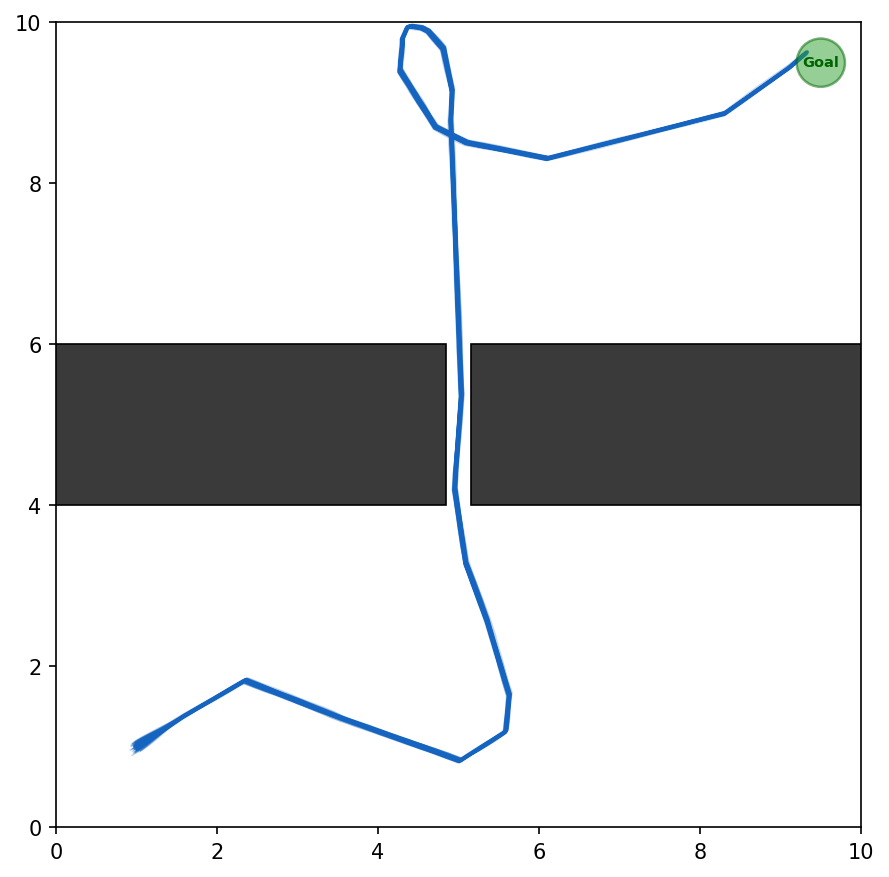}
        \caption{Narrow Passage Narr(0.3)}
        \label{fig:narrow_tight}
    \end{subfigure}
    \caption{Benchmark environments for 4-D Linear System. Obstacles are shown in black and the goal region is shown in green. Representative plots of Monte carlo simulation of solution trajectories are shown in blue for Gaussian noise.} 
    \label{fig:environments}
\end{figure*}
\begin{table*}[t]
\caption{Benchmark results for 4-D linear system over 100 runs.
  % Each environment is evaluated over 100 independent trials.
  `Succ' is the fraction of trials in which a valid path was found within 300\,s;
  `Time' is the mean planning time over successful trials; 
  `--' indicates no successful trial. Best success rate in each row is \textbf{bold}; among methods with equal success, the fastest is also \textbf{bold}.}
\label{tab:4d gauss noise}
\centering
%\scalebox{1}{
\begin{tabular}{l|c c|c c|c r|c c|c r|c r}
\toprule
        \multirow{2}{*}{Env.} & \multicolumn{2}{c|}{TS \cite{summers2018distributionally}}  &  \multicolumn{2}{c|}{Risk Assigned \cite{ekenberg2023distributionally}} & \multicolumn{2}{c|}{Particle-WDR-RRT} & \multicolumn{2}{c|}{Confidence-WDR-RRT} & \multicolumn{2}{c|}{Hybrid-WDR-RRT} & \multicolumn{2}{c}{Bandit-WDR-RRT} \\
        & Succ  & Time (s)& Succ  & Time (s) & Succ     & Time (s)   & Succ         & Time (s)        & Succ    & Time (s)  & Succ  & Time (s)\\
        \midrule
        \multicolumn{13}{c}{4D Linear System with Gaussian Noise}\\
        \midrule
Random    & 0.13 & 0.88 & 0.25 & 0.98 & 0.34 & 111.16 & 0.75 & 3.82 & 0.73 & 24.54 & \textbf{0.81} & \textbf{15.29} \\
Scattered & \textbf{1.00} & \textbf{0.01} & \textbf{1.00} & 1.40 & 0.52 & 154.04 & \textbf{1.00} & \textbf{0.01} & \textbf{1.00} & 6.87 & \textbf{1.00} & 4.52 \\
Cluttered & \textbf{1.00} & \textbf{0.01} & 0.97 & 7.14 & 0.45 & 173.79 & \textbf{1.00} & \textbf{0.01} & \textbf{1.00} & 11.18 & \textbf{1.00} & 6.81 \\
Narr(1.5) & \textbf{1.00} & 0.02 & \textbf{1.00} & 0.02 & 0.85 & 124.54 & \textbf{1.00} & \textbf{0.01} & \textbf{1.00} & 4.92 & \textbf{1.00} & 2.96 \\
% \hline
Narr(1.0) & \textbf{1.00} & 0.05 & \textbf{1.00} & 0.04 & 0.89 & 142.31 & \textbf{1.00} & \textbf{0.03} & \textbf{1.00} & 7.87 & \textbf{1.00} & 5.05 \\
% \hline
Narr(0.5) & \textbf{1.00} & 0.42 & \textbf{1.00} & 0.36 & 0.86 & 135.88 & \textbf{1.00} & \textbf{0.10} & \textbf{1.00} & 20.18 & \textbf{1.00} & 12.99 \\
% \hline
Narr(0.3) & 0.00 & -- & 0.00 & -- & 0.82 & 155.21 & \textbf{1.00} & \textbf{0.28} & \textbf{1.00} & 43.58 & \textbf{1.00} & 24.25 \\
% \hline
Narr(0.18) & 0.00 & -- & 0.00 & -- & 0.81 & 135.71 & 0.00 & -- & 0.86 & 108.14 & \textbf{0.97} & \textbf{83.80} \\
% \hline
              \midrule
              \multicolumn{13}{c}{4D Linear System with Non-Gaussian Noise}\\
            \midrule
Random    & 0.03 & 0.18 & 0.09 & 1.79 & 0.27 & 115.95 & 0.60 & 2.22 & 0.66 & 39.29 & \textbf{0.76} & \textbf{14.47} \\
Scattered & \textbf{1.00} & 0.02 & \textbf{1.00} & 1.31 & 0.41 & 190.00 & \textbf{1.00} & \textbf{0.01} & \textbf{1.00} & 14.46 & \textbf{1.00} & 8.28 \\
Cluttered & \textbf{1.00} & 0.01 & 0.95 & 8.94 & 0.34 & 178.80 & \textbf{1.00} & \textbf{0.01} & \textbf{1.00} & 14.00 & \textbf{1.00} & 11.24 \\
Narr(1.5) & \textbf{1.00} & 0.04 & \textbf{1.00} & 0.03 & 0.80 & 127.04 & \textbf{1.00} & \textbf{0.01} & \textbf{1.00} & 7.40 & \textbf{1.00} & 4.92 \\
% \hline
Narr(1.0) & \textbf{1.00} & 0.35 & \textbf{1.00} & 0.14 & 0.87 & 113.27 & \textbf{1.00} & \textbf{0.03} & \textbf{1.00} & 14.70 & \textbf{1.00} & 8.60 \\
% \hline
Narr(0.5) & 0.00 & -- & 0.00 & -- & 0.73 & 122.36 & \textbf{1.00} & \textbf{0.23} & 0.99 & 52.50 & \textbf{1.00} & 28.39 \\
% \hline
Narr(0.3) & 0.00 & -- & 0.00 & -- & \textbf{0.71} & \textbf{124.89} & 0.00 & -- & 0.33 & 116.57 & 0.33 & \textbf{90.91} \\
% \hline
\bottomrule
\end{tabular}
%}
\end{table*}

We use $N = 10^8$ samples, a safety probability threshold $p_{\text{safe}} = 0.01$, and a confidence parameter $\beta = 10^{-3}$. The time steps are selected as $\{\tau_j\}_{j=1}^J = \{0\text{--}11, 13\text{--}18, 20, 39\}$.

The distributions (which are unknown to the planner) for the initial state $P_0$ and process noise $P_w$ are zero-mean Gaussians with covariances:
% We evaluate and benchmark the algorithms in a $4$-D linear system from \cite{summers2018distributionally}, with $N = 10^8$ samples, $p_{\text{safe}} = 0.01$, $\{\tau_j\}_{j=1}^J = \{0\text{-}11, 13\text{-}18, 20, 39\}$,
% and $\beta = 10^{-3}$. The distributions of $P_0$ and $P_w$ are zero mean Gaussians with covariances 
%
\begin{align*}
    % \text{Cov}(P_0) = 
    10^{-3}\begin{bmatrix}
        1 & 0 & 0 & 0\\
        0 & 1 & 0 & 0\\
        0 & 0 & 0 & 0\\
        0 & 0 & 0 & 0
    \end{bmatrix},
    \quad
    % \text{Cov}(P_w) = 
    10^{-3}\begin{bmatrix}
        0 & 0 & 0 & 0\\
        0 & 0 & 0 & 0\\
        0 & 0 & 2 & 1\\
        0 & 0 & 1 & 2
    \end{bmatrix}
\end{align*}
Both $P_0$ and $P_w$ are truncated at $4$ standard deviations. 
Since the obstacles in~\cite{summers2018distributionally} are defined only in the workspace, it suffices to learn a single ambiguity tube for the 2-D projection of the state distribution. We achieve this using the formulation in Section~\ref{sec:lower_dimensional} with $L=1$.
% , where $M_1$ projects the state onto the workspace and $Y_\text{obs}^1 \subset \mathbb{R}^{\text{ws}}$ represents the workspace obstacles.
%
% for $P_0$ and $P_w$,
% respectively. We truncate $P_0$ and $P_w$ at $4$ standard deviations. Since \cite{summers2018distributionally} only considers workspace obstacles, it suffices to learn a single ambiguity tube for the $2$-dimensional projection of the state distribution onto the workspace. We achieve this by using the theory described in Section~\ref{sec:lower_dimensional} with $L=1$, $M_1$ being the matrix that projects the state into its workspace components and $Y_\text{obs}^1\subset \reals^\text{ws}$ representing the workspace obstacles.

We also consider the same system with non-Gaussian noise, obtained by pushing a bi-variate uniform distribution through the map
\begin{align*}
    (\omega_1,\omega_2) \mapsto 4\omega_1^{1/4}\bigg(10^{-3}\begin{bmatrix}
        2 & 1\\
        1 & 2\\
    \end{bmatrix}\bigg)^{1/2}\begin{bmatrix}
        \cos(2\pi\omega_2)\\
        \sin(2\pi\omega_2)
    \end{bmatrix}.
\end{align*}

\begin{figure}[t]
    \centering
    \begin{subfigure}{0.45\linewidth} 
        \includegraphics[width=\linewidth]{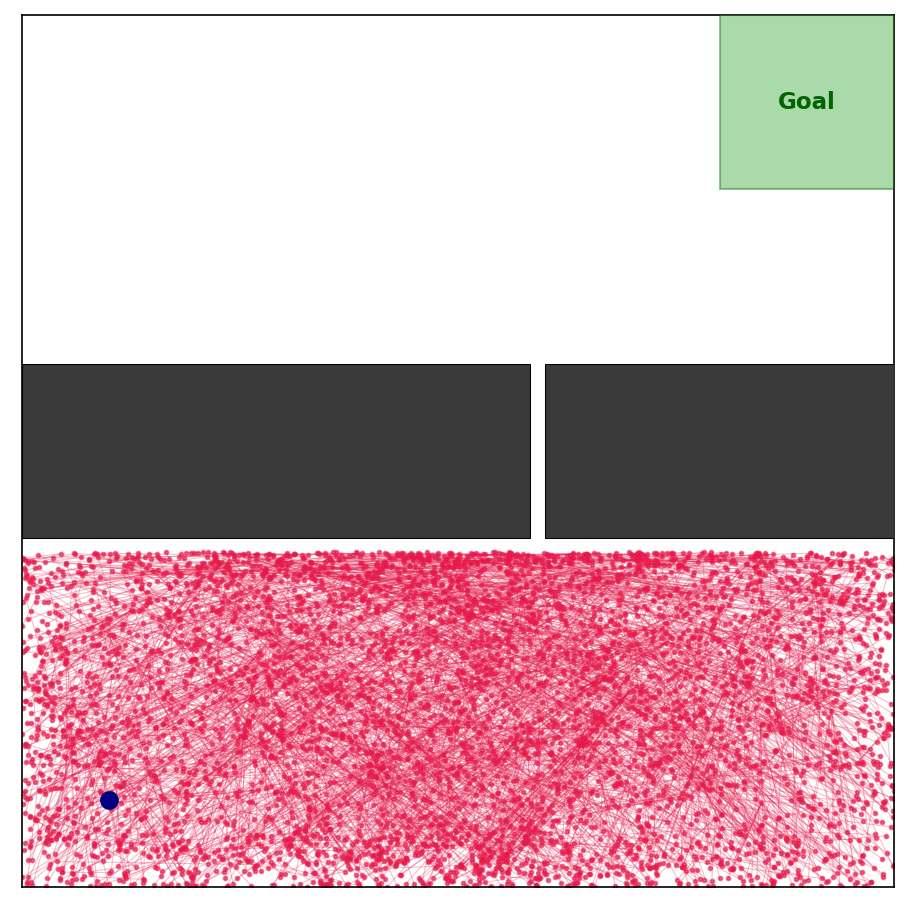}
        \caption{Moment-based Checker.}
        \label{fig:moment}
    \end{subfigure}
    \begin{subfigure}{0.45\linewidth}
        \includegraphics[width=\linewidth]{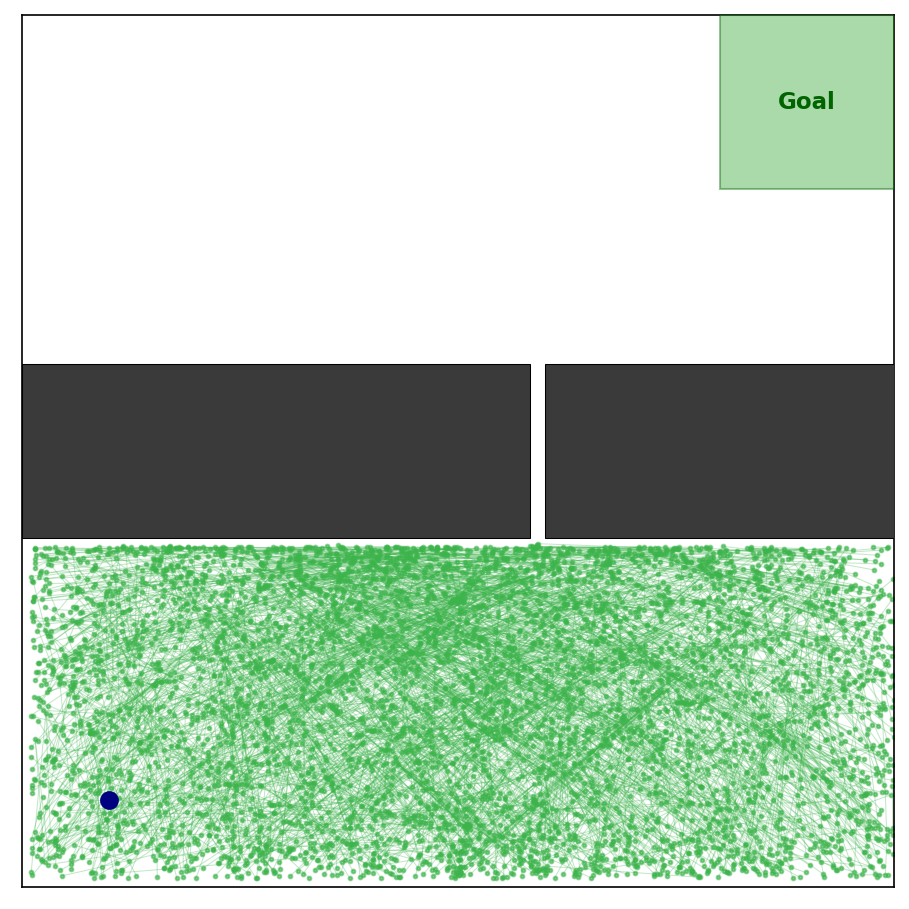}
        \caption{Ours: Confidence Checker.}
        \label{fig:confidencetube}
    \end{subfigure}
    \begin{subfigure}{0.45\linewidth}
        \includegraphics[width=\linewidth]{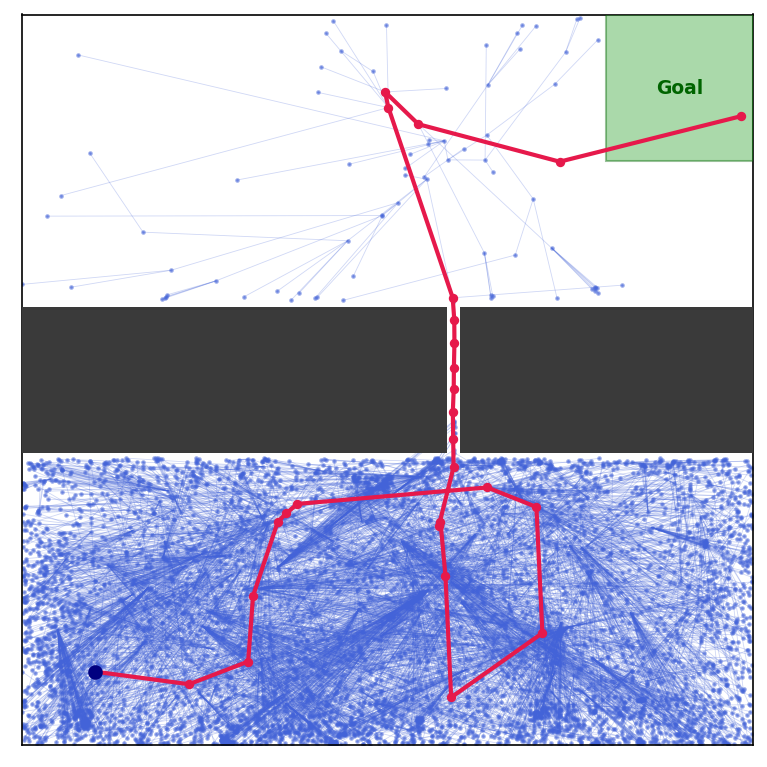}
        \caption{Ours: Bandit Checker.}
        \label{fig:ours}
    \end{subfigure}
    \caption{Plot of search trees after $60$ seconds. The number of nodes are 361240, 579358, and 30063, for Moment-based, Confidence, and Bandit Checkers, respectively. The Bandit Checker finds a solution with nominal trajectory in red.
    } 
    \label{fig:conservativeness}
\end{figure}

Table~\ref{tab:4d gauss noise} 
% and~\ref{tab:4d non-gauss noise} 
summarizes the results 
% mean computation times (capped at 300\,s) and success rates 
across all environments.

\paragraph{Comparison with baselines.}
The moment-based approaches~\cite{summers2018distributionally,ekenberg2023distributionally} exhibit two fundamental limitations that our method overcomes.
First, their validity checkers inflate obstacles using analytic bounds derived from the first two moments of the state distribution. This over-approximation renders them overly conservative in constrained geometries: both baselines fail at Narr(0.3) and Narr(0.18) under Gaussian noise, whereas Bandit-WDR-RRT achieves 1.00 and 0.97 success, respectively. 
Further, under non-Gaussian noise, the moment bounds used by both baselines become
overly conservative, where TS~\cite{summers2018distributionally}
achieves only 3\% success and Risk Allocation~\cite{ekenberg2023distributionally}
only 9\% on the Random environment, compared to 76\% for Bandit-WDR-RRT. More strikingly, both baselines fail at Narr(0.5) because the looser moment bounds over-approximate the state
distribution's support, making the passage appear infeasible. In contrast, our Wasserstein ambiguity tube learns a distribution-free confidence set directly from samples, capturing the actual shape of the noise distribution rather than relying on moment surrogates, thus avoiding unnecessary conservatism regardless of the true distribution.

It is important to note that our approach is inherently distributionally robust, meaning the true underlying noise distribution remains completely unknown to the planner. Because the ambiguity-tube constraints must guarantee safety for the worst-case distribution within the confidence set, our method naturally exhibits more conservative planning behaviors than would be observed if the exact distribution were known \textit{a priori}. However, unlike the moment-based baselines that suffer from geometric over-approximation, our conservatism is tight with respect to the ambiguity set while providing sound safety guarantees under unknown distributions.

In the easier environments (Scattered, Cluttered, wide narrow passages), where there is more free space and solutions are easier to find, the baselines achieve comparable success rates to our method and are faster due to their closed-form validity checks. This is expected: the analytic bounds are tight when the geometry is forgiving. However, our methods remain competitive even here: Confidence-WDR-RRT matches the baselines' speed at 0.01\,s in these cases.

\paragraph{Comparison among WDR-RRT variants.}
Among our proposed validity checkers, the Particle-based variant is the most general but incurs high per-node evaluation cost (110--190\,s mean planning time), making it impractical as a standalone checker. The Confidence-Region checker is fast (comparable to the baselines in easy environments) but inherits some conservatism from its geometric over-approximation, failing at Narr(0.18) under Gaussian noise and Narr(0.3) under non-Gaussian noise. The Hybrid checker alleviates this by falling back to particle evaluation near obstacles, but remains slower than necessary. Bandit-WDR-RRT offers the best balance: it adaptively allocates computational effort, matching or exceeding the success rate of all other variants while being 1.5--3$\times$ faster than Hybrid-WDR-RRT across environments.

For each of the runs, the planned trajectories are validated via Monte Carlo simulation, with the found solution paths achieving between 99-100\% success rate across trials, confirming that the ambiguity-tube constraints provide sound safety guarantees. 
Example trajectories for the Gaussian noise case are shown in Fig.~\ref{fig:environments}.

Fig.~\ref{fig:conservativeness} visualizes the search trees grown after 60\,s in Narr(0.18) under Gaussian noise. The moment-based checker produces trajectories that stays far from all obstacles and never enters the narrow passage. The Confidence-Region checker allows nodes closer to the walls but cannot go through the gap  to reach the goal within the time limit. The Bandit checker successfully explores the passage and reaches the goal region, consistent with its 0.97 success rate in Table~\ref{tab:4d gauss noise}.

% \subsubsection{Conservativeness of each technique}

% \begin{figure}[htbp]
%     \centering
%     \begin{subfigure}[b]{0.31\textwidth}
%         \includegraphics[trim={24cm 4cm 24cm 4cm}, clip, width=\linewidth]{moment_results.png}
%         \caption{Moment-based Checker.}
%         \label{fig:moment}
%     \end{subfigure}
%     \hfill
%     \begin{subfigure}[b]{0.31\textwidth}
%         \includegraphics[trim={24cm 4cm 24cm 4cm}, clip, width=\linewidth]{confidence.png}
%         \caption{Ours with Confidence Checker.}
%         \label{fig:confidencetube}
%     \end{subfigure}
%     \hfill
%     \begin{subfigure}[b]{0.31\textwidth}
%         \includegraphics[trim={24cm 4cm 24cm 4cm}, clip, width=\linewidth]{bandit_result.png}
%         \caption{Ours with Bandit Checker.}
%         \label{fig:ours}
%     \end{subfigure}
%     \caption{Plot of nodes in tree after $60$ seconds.}
%     \label{fig:conservativeness}
% \end{figure}

\subsection{8-D Drone System.}

Finally, we show the scalability of our approach during planning with a 8-D drone system navigating on a 2D plane, i.e., $n = 8$ and $n_\text{ws} = 2$, and with $n_u = 2$. In this example, we use the Stable Sparse RRT (SST) algorithm \cite{SST}. The model is taken from \cite{hakobyan2020wasserstein}. The noise distribution is Gaussian with zero mean and covariance via $G$, a $4 \times 2$ matrix mapping the 2D disturbance $\mathbf{w}$ into the 4D state:
\begin{align*}
    G = \begin{bmatrix} 0 & 0 & a & b \\ 0 & 0 & b & a \end{bmatrix}^\top,
    \quad
    a = \frac{\sqrt{3}+1}{2\sqrt{1000}},
    \quad
    b = \frac{\sqrt{3}-1}{2\sqrt{1000}},
\end{align*}
so that only the velocity states $(\dot{x}, \dot{y})$ are directly perturbed; the position states $(x, y)$ are unaffected. The state-space covariance is then given by
% \begin{align*}
%     W = GG^{\top} &=
%     \begin{bmatrix}
%         0 & 0 & 0 & 0 \\
%         0 & 0 & 0 & 0 \\
%         0 & 0 & a^{2}+b^{2} & 2ab \\
%         0 & 0 & 2ab & a^{2}+b^{2}.
%     \end{bmatrix}
% \end{align*}
$W = GG^{\top}$, a $4\times 4$ matrix.

We set the probability of safety to $p_\text{safe} = 0.02$. In this case study, we consider both collision avoidance constraints with respect to the workspace obstacles and also control constraints, where we let $U = \{u\in\reals^m : \|u\| \le 0.025\}$. We also let the goal region be a physical location in the 2D workspace.

Note that, although the system is high-dimensional, the goal and obstacles are related to the lower-dimensional spaces $\reals^\text{ws}$ and $\reals^{n_u}$. Because of this, we make use of the approach described in Section~\ref{sec:lower_dimensional} and learn two lower-dimensional ambiguity tubes: one for $M_{1\#} P_t^e$, related to the position of the drone and another for $-K_\# P_t^e$, related to the control effort. To this end and, as described in Example~\ref{ex:example}, we first express $X_\text{obs}$ and $X_\text{goal}$ as required in Assumption~\ref{ass:obs_goal}. We learn the tubes with $N = 10^8$, $\{\tau_j\}_{j=1}^J = 
% \{0, 1, 2, 3, 4, 5, 6, 7, 8, 9, 10, 11, 13, 14, 15, 16, 17, 18, 20, 39\}
\{0\text{-}11, 13\text{-}18, 20, 39\}
$ and $\beta = 10^{-3}$.

\begin{figure}[t]
    \centering
    \begin{subfigure}{0.49\linewidth}
        \centering
        \includegraphics[width=.9\linewidth]{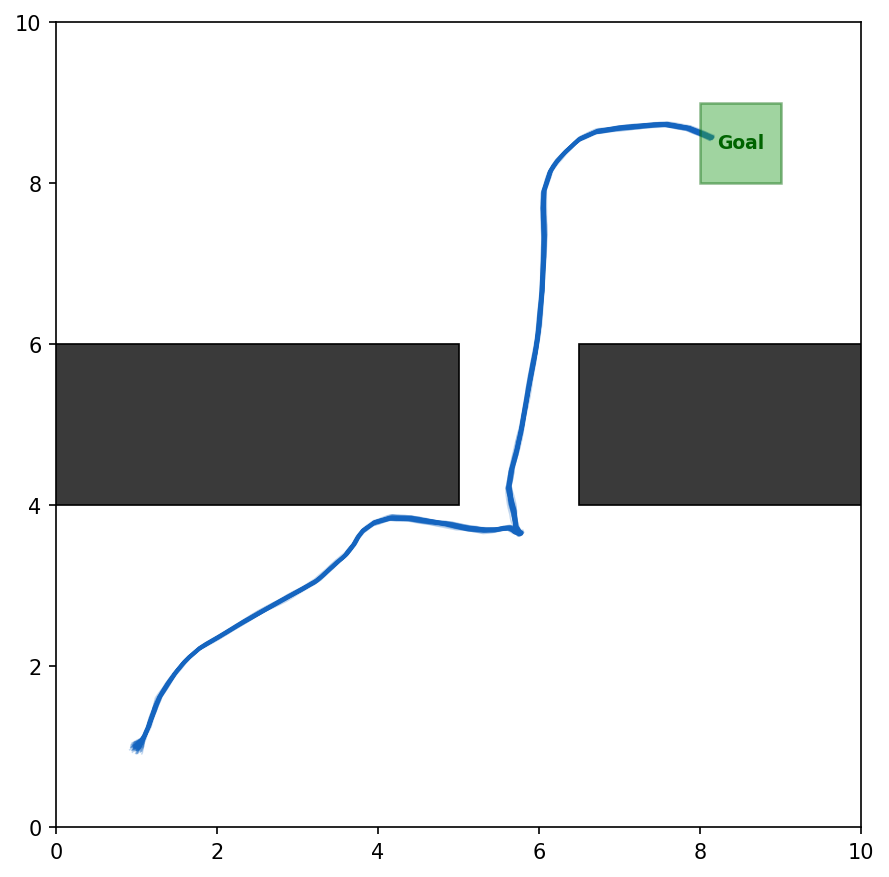}
        \caption{Narrow Passage Narr(1.5).}
        \label{fig:quadcopter1}
    \end{subfigure}
    \begin{subfigure}{0.49\linewidth}
        \centering
        \includegraphics[width=.9\linewidth]{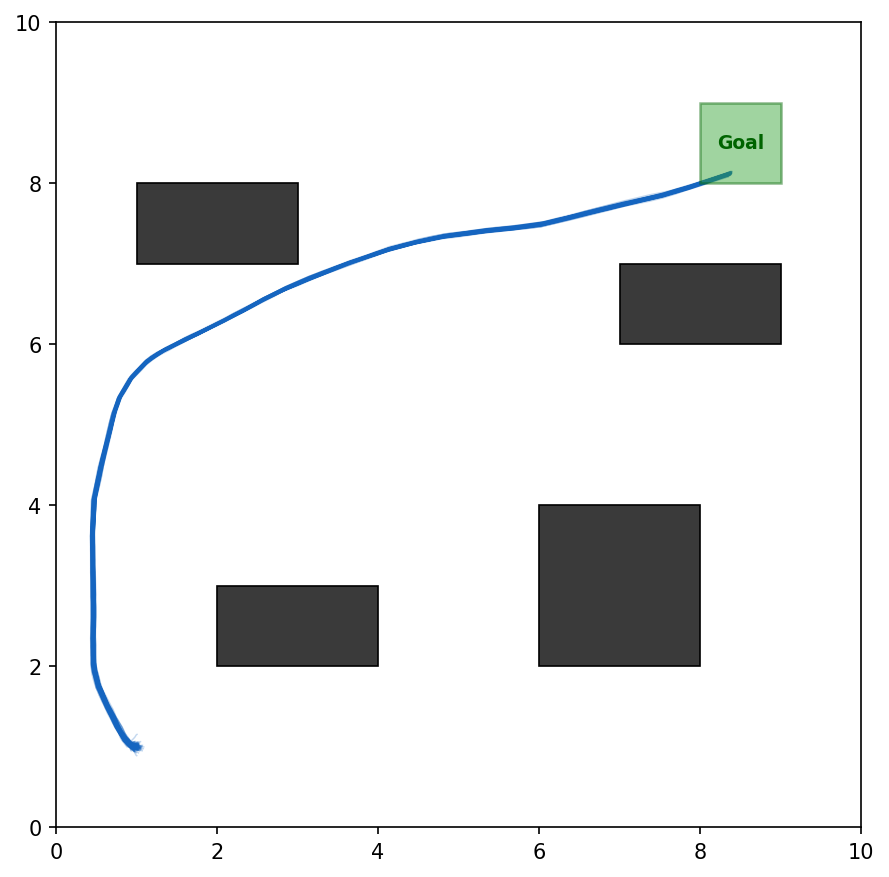}
        \caption{Scattered Rectangle Obstacles.}
        \label{fig:quadcopter2}
    \end{subfigure}
    \begin{subfigure}{0.6\linewidth}
        \centering
        \includegraphics[width=0.74\linewidth]{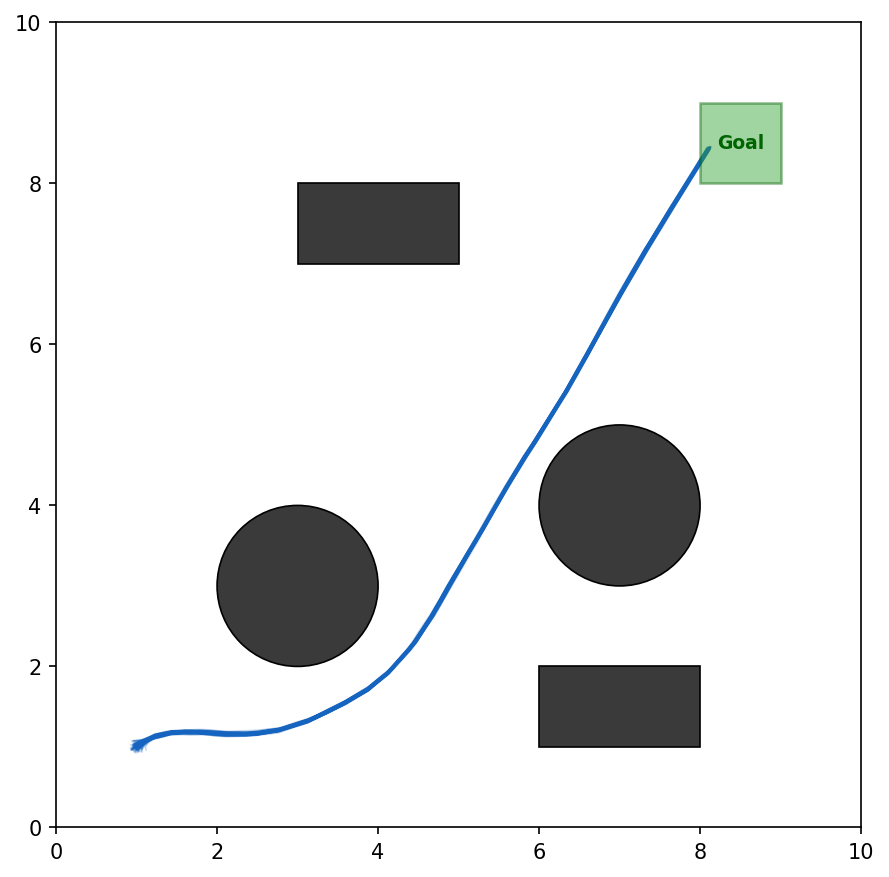}
        \caption{Circle and Rectangle Obstacles.}
        \label{fig:quadcopter3}
    \end{subfigure}
    \caption{Environments for the 8-D drone system case study with Monte Carlo simulations of example solution trajectories.} 
    \label{fig:quadcopter}
\end{figure}
\begin{table}[ht!]
    \caption{8-D quadcopter system with Gaussian noise. Each scenario is run with the SST planner using the Bandit validity checker, stopping at first solution (60\,s timeout). Success rate and mean planning time over successful trials are reported.}
    \label{tab:8d quadcopter}
    \centering
    \scalebox{1}{
    \begin{tabular}{l | cc}
        \toprule
                  \multirow{2}{*}{Env.} & \multicolumn{2}{c}{Bandit-WDR-SST}\\
                  & Success Rate & Time (s) \\
                  \midrule
    Fig.~\ref{fig:quadcopter1} & 1.00 & $9.89 \pm 7.54$ \\
    Fig.~\ref{fig:quadcopter2} & 1.00 & $4.29 \pm 3.00$ \\
    Fig.~\ref{fig:quadcopter3}  & 1.00 & $4.84 \pm 2.25$ \\
    \bottomrule
    \end{tabular}
    }
\end{table}

% \begin{table}[ht!]
% \caption{8D quadcopter system}
% \label{tab:8d quadcopter}
% \centering
% \begin{tabular}{l|c|c}
%     \toprule
%               \multirow{2}{*}{Env.} & \multicolumn{2}{c}{Confidence-WDR-RRT}\\
%               & Success Rate & Time (s) \\
%               \midrule
% Environment 1 (Circular) & 1.0 & $2.23 \pm 0.639$\\
% Environment 2 (Scattered) & 0.92 & $11.30 \pm 3.58$\\
% Enviroment 3 (Narrow 1.5) & \\
% \bottomrule
% \end{tabular}
% \end{table}

% \begin{table}[ht!]
% \caption{8D quadcopter system}
% \label{tab:8d quadcopter}
% \centering
% \begin{tabular}{l|c c|c c}
%     \toprule
%               \multirow{2}{*}{Env.} & \multicolumn{2}{c|}{Bandit-WDR-RRT} & \multicolumn{2}{c}{SST} \\
%               & Success Rate & Time (s) & Success Rate & Time (s) \\
%               \midrule
% Environment 1 (Circular)   & 1.00 & $2.23 \pm 0.64$  & 1.00 & $60.08 \pm 0.04$ \\
% Environment 2 (Scattered)  & 0.92 & $11.30 \pm 3.58$ & 1.00 & $60.06 \pm 0.03$ \\
% Environment 3 (Narrow 1.5) &      &                  & 1.00 & $60.07 \pm 0.02$ \\
% \bottomrule
% \end{tabular}
% \end{table}

Table~\ref{tab:8d quadcopter} and Fig.~\ref{fig:quadcopter} show that Bandit-WDR-SST finds valid paths in all three environments, including the narrow passage, with planning times well under 60\,s. The Monte Carlo simulations of the trajectories show a 100\% success rate. The ability to handle an 8-D system with two separate lower-dimensional ambiguity tubes shows that the decomposition in Section~\ref{sec:lower_dimensional} directly enables tractable planning in higher-dimensional systems.

\section{Conclusion}

% In this work, we presented a probabilistically complete tree-based framework to perform motion planing for linear or (feedback linearizable) systems with additive random disturbances of unknown distribution. Our approach leverages observations from previous trajectories of the system to learn an ambiguity tube that contains the distribution of the system's state at all times, and uses this tube to obtain safe paths. We also presented a second algorithm that relies on learning several lower-dimensional tubes in order to reduce the sample and computational complexity of our approach. We also proposed a bandit-based validity checker, which smartly chooses to either employ a precise but slow checker or a conservative but fast one. Finally, our empirical evaluation showcases the effectiveness of our approach in finding safe paths and its superiority with respect to state-of-the-art approaches.

In this work, we presented a probabilistically complete tree-based framework for motion planning of linear and feedback-linearizable systems with additive random disturbances of unknown distribution. Our approach uses observations from previous trajectories to learn an ambiguity tube containing the state distribution at all times, then leverages this tube to find safe paths. We also presented a variant that learns several lower-dimensional tubes to reduce sample and computational complexity, and a bandit-based validity checker that adaptively chooses between a precise but slow checker and a conservative but fast one. Empirical evaluation demonstrates the effectiveness of our approach in finding safe paths and its superiority over state-of-the-art methods.

% \begin{credits}
% \subsubsection{\ackname} 
% \ml{complete this}
% This study was funded
% by X (grant number Y)
% .

% \subsubsection{\discintname}
% \ml{It is now necessary to declare any competing interests or to specifically
% state that the authors have no competing interests. Please place the
% statement with a bold run-in heading in small font size beneath the
% (optional) acknowledgments,
% for example: The authors have no competing interests to declare that are
% relevant to the content of this article. Or: Author A has received research
% grants from Company W. Author B has received a speaker honorarium from
% Company X and owns stock in Company Y. Author C is a member of committee Z.}
% \end{credits}

\appendix
\section{Appendix}
\label{sec:appendix}

\iffalse
For the proofs of this appendix we rely in the notion of \emph{optimal transport cost} and its properties. \IG{Maybe we don't need optimal transport} 
%
\begin{definition}[Optimal Transport Cost]
\label{dfn:optimal_transport_cost}
Let the space 
$X \subseteq \reals^n$ be equipped with the euclidean norm, $M\in\reals^{m\times n}$ and define the cost function $c:X\rightarrow \reals_{\ge0}$, with $c(x) := \|Mx\|$. The optimal transport discrepancy between distributions $P,P'\in \mathcal{D}(X)$ is defined as
%
\begin{align*}
    \mathcal{T}_{c}(P,P') = \inf_{\pi\in\Pi(P,P')}\int_{X\times X} c(x-x')d\pi(x,x'),
\end{align*}
% 
where $\Pi(P,P')$ is the set of probability distributions on $\mathcal{D}(X\times X)$ with marginals $P$ and $P'$.
\end{definition}
%
Note that, when $M$ is the identity matrix, $\mathcal{T}_{c}(P,P') = \mathcal{W}(P,P')$.
\fi

For the proofs in this appendix we rely on the following four properties of the Wasserstein distance:
%
\iffalse
\begin{assumption}
\label{assumption:cost}
    Let $p\in\mathbb N$. The cost $c:\mathbb R^n \times \mathbb R^n \rightarrow \mathbb R_{\ge 0}$ can be written as $c(x_1, x_2) = \|M(x_1-x_2)\|^p$ for some $n\times n$ matrix $M$.
\end{assumption}
%
Taking Assumption~\ref{assumption:cost} into account, from now on we consider costs of the form $c:X\longrightarrow \mathbb R$, and write $c(x-x') := c(x,x')$.
\fi
%
\begin{proposition}
    Let %$c:X\longrightarrow \mathbb R$ be the transportation cost in Definition~\ref{dfn:optimal_transport_cost} and 
    $P,P',P''\in\mathcal{D}(X)$ and $A\in\reals^{n\times n}$. Then
    \begin{align*}
        \mathcal{W}(P*P'',P'*P'') &\le \mathcal{W}(P,P'),\\
        \mathcal{W}(P,P'*P'') &\le \mathcal{W}(P,P') + \int_X \|x\| dP''(x),\\
         \mathcal{W}(A_\#P, A_\#P') &\le \|A\| \mathcal{W}(P, P'),\\
        \mathcal{W}(P,P') &\le \mathcal{W}(P,P'') + \mathcal{W}(P'',P').
    \end{align*}
\end{proposition}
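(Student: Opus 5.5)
All four inequalities follow from one device. Take a coupling that is (near-)optimal for the Wasserstein distance on the right-hand side, push it forward through a suitable map to obtain an admissible coupling for the left-hand side, and bound the resulting transport cost pointwise. To avoid appealing to existence of optimal couplings, I will work throughout with an $\eta$-optimal coupling $\pi_\eta\in\Pi(P,P')$, meaning $\int\|x-x'\|\,d\pi_\eta\le\mathcal{W}(P,P')+\eta$, and let $\eta\to0$ at the end. I will also use the probabilistic reading of convolution: $P*P''$ is the law of $\boldsymbol x+\boldsymbol y$ with $\boldsymbol x\sim P$ and $\boldsymbol y\sim P''$ independent. Finiteness of first moments, i.e.\ membership in $\mathcal{D}(X)$, guarantees that every integral below is finite.

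\emph{First inequality.} Let $(\boldsymbol x,\boldsymbol x')\sim\pi_\eta$, and let $\boldsymbol y\sim P''$ be drawn independently of this pair.
\begin{itemize}
\item The pair $(\boldsymbol x+\boldsymbol y,\boldsymbol x'+\boldsymbol y)$ has marginals $P*P''$ and $P'*P''$, so it is an admissible coupling.
\item Its cost is $\mathbb E\|\boldsymbol x-\boldsymbol x'\|\le\mathcal{W}(P,P')+\eta$, because the common shift $\boldsymbol y$ cancels.
\end{itemize}

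\emph{Third inequality.} With the same $\pi_\eta$, the pair $(A\boldsymbol x,A\boldsymbol x')$ couples $A_\#P$ and $A_\#P'$. Its cost is $\mathbb E\|A(\boldsymbol x-\boldsymbol x')\|\le\|A\|\,\mathbb E\|\boldsymbol x-\boldsymbol x'\|\le \|A\|(\mathcal{W}(P,P')+\eta)$, by submultiplicativity of the induced operator norm. The same argument works verbatim for rectangular $M\in\reals^{l\times n}$, which is the form needed in Lemma~\ref{lemma:ambiguity_dynamics_low_dimensional}.

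\emph{Fourth inequality.} This is the standard triangle inequality for $\mathcal{W}$, and it is the only step that is not a one-line coupling construction.
\begin{itemize}
\item Take $\eta$-optimal couplings $\pi_1\in\Pi(P,P'')$ and $\pi_2\in\Pi(P'',P')$.
\item Invoke the gluing lemma (disintegrate both couplings with respect to their common marginal $P''$, which is valid on Polish spaces such as $\reals^n$). This yields random variables $(\boldsymbol x,\boldsymbol z,\boldsymbol x')$ whose $(1,2)$-marginal is $\pi_1$ and whose $(2,3)$-marginal is $\pi_2$.
\item The law of $(\boldsymbol x,\boldsymbol x')$ lies in $\Pi(P,P')$.
\item Pointwise, $\|\boldsymbol x-\boldsymbol x'\|\le\|\boldsymbol x-\boldsymbol z\|+\|\boldsymbol z-\boldsymbol x'\|$. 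Integrating gives $\mathcal{W}(P,P')\le\mathcal{W}(P,P'')+\mathcal{W}(P'',P')+2\eta$.
\end{itemize}
I expect the gluing lemma to be the main obstacle. Rather than reprove it, I plan to cite it (e.g.\ Villani's monograph) as a standard fact.

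\emph{Second inequality.} This follows from the fourth one with $P'$ as the intermediate distribution: $\mathcal{W}(P,P'*P'')\le\mathcal{W}(P,P')+\mathcal{W}(P',P'*P'')$. It remains to bound the last term.
\begin{itemize}
\item Let $\boldsymbol x'\sim P'$ and $\boldsymbol y\sim P''$ be independent.
\item The pair $(\boldsymbol x',\boldsymbol x'+\boldsymbol y)$ couples $P'$ and $P'*P''$.
\item Its cost is $\mathbb E\|\boldsymbol y\|=\int_X\|x\|\,dP''(x)$.
\end{itemize}

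Finally, letting $\eta\to0$ in each of the four arguments gives the stated inequalities.
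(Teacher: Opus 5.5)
Your proof is correct. It differs from the paper's only in being self-contained: the paper proves this proposition entirely by citation. It takes the convolution contraction from \cite[Prop.~10]{aolaritei2022uncertainty}, the convolution-perturbation bound from \cite[Lemma~16]{boskos2023high}, and the linear-map bound and triangle inequality from \cite{villani2021topics}. You instead build an explicit coupling for each item: a common independent shift for the first, the pushforward $(A\boldsymbol x, A\boldsymbol x')$ for the third, and the gluing lemma for the fourth. You then derive the second from the fourth plus the coupling $(\boldsymbol x', \boldsymbol x'+\boldsymbol y)$, rather than treating it as a separate lemma.

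The paper's route is shorter and leaves technicalities such as near-optimality and measurability to the references. Yours makes the single non-elementary ingredient (gluing) explicit and handles near-optimality through $\eta$. It also shows the third bound holds verbatim for rectangular matrices, which matters because the paper applies it to $(A_\text{cl}^{i}G)_\#P_w$ and to $M_{l\#}$, while the statement as written only covers $A\in\reals^{n\times n}$. Your device is also the same one the paper uses inside the proof of Lemma~\ref{lemma:ambiguity_dynamics}, where it bounds $\mathcal{W}(A_{\text{cl},\#}^\tau P_0, A_{\text{cl},\#}^t P_0)$ through the coupling $(A_\text{cl}^\tau\times A_\text{cl}^t)_\#P_0$.
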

\begin{proof}
    The first three statements are proved in \cite[Prop. 10]{aolaritei2022uncertainty}, \cite[Lemma 16]{boskos2023high} and \cite{villani2021topics}, respectively, and the last one is the triangle inequality of the Wasserstein distance \cite{villani2021topics}.
\end{proof}
%
\iffalse
\begin{proposition}
    Let $c:X\longrightarrow \mathbb R$ be the transportation cost in Definition~\ref{dfn:optimal_transport_cost}, $A\in\reals^{n\times n}$ and $P,P'\in\mathcal{D}(X)$. Then it holds that \cite[Prop. 3]{aolaritei2022uncertainty},
    %
    \begin{align*}
        \mathcal{T}_c(A_\#P,A_\#P') &= \mathcal{T}_{c\circ A}(P, P'). 
    \end{align*}
    %
    Furthermore, if $c(x) = \|x\|$ \cite{villani2021topics},
    %
    \begin{align*}
        \mathcal{T}_{c\circ A}(P, P') &\le \|A\| \mathcal{W}(P, P').
    \end{align*}
\end{proposition}
\fi

\begin{proof}[Proof of Prop.~\ref{prop:ambiguity_tube_error}]
    Since 
    %
    %\begin{align*}
        $\mathcal{W}(P_t, \widehat P_t) = \mathcal{W}(P_t^e   * \delta_{\bar x_t}, \widehat P_t^e  * \delta_{\bar x_t}) \le \mathcal{W}(P_t^e, \widehat P_t^e) \le \varepsilon_t$, 
%    \end{align*}
    %
    then $P_t \in \mathcal{P}_t$.
\end{proof}
\begin{proof}[Proof of Lemma~\ref{lemma:data_driven_ambiguity_set}]
    Let $\beta_1, \beta_2 \in (0,1)$. First, Hoeffding's inequality bounds the $q$-th moment of $P$:
    \vspace{-0.25cm}
\begin{align}
\label{eq:beta1}
    P^N\big[ \mathcal{M}_q(P) \le \widehat{\mathcal{M}}_q(P)\big] \ge 1 - \beta_1,
\end{align}
Next, we use McDiarmid's inequality. Let $x_1, \dots, x_i, x_i', x_{i+1}, \dots, x_N \in \mathbb R^d$, $\widetilde P$ be the empirical distribution of $\{x_j\}_{j = 1}^N$, and $\widetilde P'$ differ from $\widetilde P$ only on $x_i'$. The symmetry and triangle inequality of the Wasserstein distance yield $|\mathcal{W}(P, \widetilde P) - \mathcal{W}(P, \widetilde P') | \le \mathcal{W}(\widetilde P, \widetilde P')$. Furthermore, by geometric inspection, $\mathcal{W}(\widetilde P, \widetilde P')$ is maximized when all $\{x_j\}_{j=1}^N$ overlap and $x_i'$ is located diametrically opposite, implying that $\mathcal{W}(\widetilde P, \widetilde P') \le \phi/N$, for all $x_1, \dots, x_i, x_i', x_{i+1}, \dots, x_N \in \mathbb R^d$. By McDiarmid's inequality,
    %
    % \begin{align}
    % \label{eq:mcdiarmid}
    % {\small
        $P^N\Big[\mathcal{W}(P, \widehat P) - \mathbb E[\mathcal{W}(P, \widehat P)] \ge t \Big] \le \exp{(-2Nt^2/\phi^2)} =: \beta_2$.
    %     }
    % \end{align}
    %
    %Letting $\beta_2$ equal the right-hand side of \eqref{eq:mcdiarmid} we obtain that, with probability no less than $1 - \beta_2$ over the choice of the $N$ samples,
% %
%     \begin{align}
%     \label{eq:beta2}
%         \mathcal{W}(P, \widehat P) \le g(d, q, \mathcal{M}_q(P), N) + \phi\sqrt{\frac{\log(1/\beta_2)}{2N}}.
%     \end{align}
    %
We conclude the proof by combining this expression with \eqref{eq:beta1} via the union bound, and noting that $\mathcal{M}_q(P)\mapsto g(d, q, \mathcal{M}_q(P), N)$ increases monotonically.
\end{proof}
\begin{proof}[Proof of Lemma~\ref{lemma:ambiguity_dynamics}]
   To prove the first statement, let $t < \tau$.
\begin{equation*}
%\label{eq:ambiguity_radius2}
\begin{split}
    &\mathcal{W}(\widehat P_\tau^e, P_t^e) \le \mathcal{W}(\widehat P_\tau^e, P_\tau^e) + \mathcal{W}(P_\tau^e, P_t^e)\\
    &\le \varepsilon_\tau + \mathcal{W}(A_{\text{cl},\#}^\tau P_0, \Conv_{i=0}^{\tau-1} (A_\text{cl}^{\tau-1-i}G)_\#P_w,\\
    &A_{\text{cl},\#}^t P_0 \Conv_{i=0}^{t-1} (A_\text{cl}^{t-1-i}G)_\#P_w)\\
    &\le \varepsilon_\tau + \mathcal{W}(A_{\text{cl},\#}^\tau P_0 \Conv_{i=0}^{\tau-1} (A_\text{cl}^{\tau-1-i}G)_\#P_w,\\
    &A_{\text{cl},\#}^t P_0 \Conv_{i=0}^{\tau-1} (A_\text{cl}^{\tau-1-i}G)_\#P_w)\\
    &+ \mathcal{W}(A_{\text{cl},\#}^t P_0 \Conv_{i=0}^{\tau-1} (A_\text{cl}^{\tau-1-i}G)_\#P_w,\\
    &A_{\text{cl},\#}^t P_0 \Conv_{i=0}^{t-1} (A_\text{cl}^{t-1-i}G)_\#P_w)\\
    &\le \varepsilon_\tau + \mathcal{W}(A_{\text{cl},\#}^\tau P_0, A_{\text{cl},\#}^t P_0) + \\
    &  + \mathcal{W}(\Conv_{i=0}^{\tau-1} (A_\text{cl}^{\tau-1-i}G)_\#P_w, \Conv_{i=0}^{t-1} (A_\text{cl}^{t-1-i}G)_\#P_w)\\
    &\le \varepsilon_\tau +  \int_{\mathbb R^n \times \mathbb R^n} \|x-y\| d (A_\text{cl}^\tau\times A_{\text{cl}}^t)_\# P_0(x,y)\\
    &+ \mathcal{W}(\Conv_{i=t}^{\tau-1} (A_\text{cl}^iG)_\#P_w, \delta_0)\\
    &\le \varepsilon_\tau + \int_{\mathbb R^n} \|A_\text{cl}^\tau x-A_\text{cl}^t x\| dP_0(x) + \mathcal{W}(\Conv_{i=t}^{\tau-1} (A_\text{cl}^iG)_\#P_w, \delta_0)\\
    &\le \varepsilon_\tau + \|A_\text{cl}^\tau - A_\text{cl}^t\| \mathcal{M}_1(P_0) + \mathcal{M}_1(P_w)\sum_{i=t}^{\tau-1} \|A_\text{cl}^iG\|,
\end{split}
\end{equation*}
where in the fifth inequality we relied on the fact that $(A_\text{cl}^\tau \times A_{\text{cl}}^t)P_0$ is a feasible coupling between $A_{\text{cl},\#}^\tau P_0$ and $A_{\text{cl},\#}^t P_0$.
Similarly, 
%
%\begin{align*}
%\label{eq:ambiguity_radius3}
    $\mathcal{W}(\widehat P_\tau^e, P_t^e) \le \varepsilon_\tau + \|A_\text{cl}^t - A_\text{cl}^\tau\| \mathcal{M}_1(P_0) + \mathcal{M}_1(P_w)\sum_{i=\tau}^{t-1} \|A_\text{cl}^iG\|$ for $t \ge \tau$. 
%\end{align*}
%
Next we prove the second statement by bounding $f_\tau(t)$.
%is dominated by a sequence $(\epsilon_\tau)_{\tau\in\naturals_0} \subset \reals$ with $\lim_{\tau\to\infty} \epsilon_\tau = \varepsilon$.
Let $\tau \in \naturals_0$ and $t \ge \tau$, and note that $\|A_\text{cl}^t - A_\text{cl}^\tau\| \le \max_{t' \ge 0}\|A_\text{cl}^{t'} - I\|\| A_\text{cl}^\tau\|,\quad\forall t\ge \tau$. We now focus on the third term on the right-hand side of \eqref{eq:ambiguity_radius}. By the Jordan-Chevalley decomposition of $A_\text{cl}$, there exists an invertible $V\in\reals^{n\times n}$ such that $A_\text{cl} = V(\Lambda + N)V^{-1}$, with $\Lambda$ diagonal and sharing eigenvalues with $A_\text{cl}$, $N$ nilpotent and $A_\text{cl}N = NA_\text{cl}$. Newton's binom yields that, for all $i \in \naturals_0$,
\begin{align*}
    %\label{eq:aux_proof1}
    %\begin{split}
    \|A_\text{cl}^iG\| &\le \|G\| V(\Lambda+ N)^iV^{-1}\| \le \|G\|\|V\|\|V^{-1}\| \|(\Lambda+ N)^i\|\\
    &\le \|G\|\|V\|\|V^{-1}\| \|\sum_{k = 0}^n \binom{i}{k} \Lambda^{i - k} N^k\|\\
    &\le \|G\|\|V\|\|V^{-1}\| \bigg[ \sum_{k = 0}^n \bigg(\frac{e}{k}\bigg)^k\|\Lambda^{ - k}N^k\| \bigg] \|\Lambda^i\|i^n.
    %\end{split}
\end{align*}
Therefore,
\begin{align*}
% \label{eq:aux_proof2}
    % \begin{split}
    f_\tau(t) \le \varepsilon_\tau + \max_{t' \ge 0}\|A_\text{cl}^{t'} - I\|\| A_\text{cl}^\tau\|\mathcal{M}_p(P_0) +\\
    + \mathcal{M}_p(P_w)\|G\|\|V\|\|V^{-1}\| \bigg[\sum_{k = 0}^n \bigg(\frac{e}{k}\bigg)^k\|\Lambda^{ - k}N^k\| \bigg] \sum_{i=\tau}^\infty \|\Lambda^i\|i^n,
    % \end{split}
\end{align*}
where $\|\Lambda^i\| = |\lambda|^i$ for some eigenvalue $\lambda$ of $\Lambda$. Since $A_\text{cl}$ is stable, $|\lambda| < 1$, and thus the infinite series converges. The proof is concluded by observing that if $\tau$ is big enough, then the right-hand side becomes arbitrarily small for all $t \ge \tau$.
\end{proof}

\begin{proof}[Proof of Thm.~\ref{thm:ambiguity_tube}]
    Let $j \in \{1,\dots,J\}$ and $I_j$ denote the set of time steps $t$ for which the ambiguity set $\mathbb B(\widehat P_t^e, \varepsilon_t)$ has been derived from the data-driven one $\mathbb B(\widehat P_{\tau_j}^e, \varepsilon_{\tau_j})$. By Lemma~\ref{lemma:ambiguity_dynamics}, if $P_{\tau_j}^e \in \mathbb B(\widehat P_{\tau_j}^e, \varepsilon_{\tau_j})$, then $P_t^e \in \mathbb B(\widehat P_t^e, \varepsilon_t)$ for all $t\in I_j$. By construction, $P_{\tau_j}^e \in \mathbb B(\widehat P_{\tau_j}^e, \varepsilon_{\tau_j})$ with confidence $1-\beta/J$. Then $P_t^e \in \mathbb B(\widehat P_t^e, \varepsilon_t)$ for all $t\in I_j$ with confidence $1-\beta/J$. Since this result holds for every $j\in\{1,\dots,J\}$, an application of the union bound yields an overall confidence of $1-J\beta/J = 1-\beta$, which concludes the proof.
\end{proof}

\begin{proof}[Proof of Lemma~\ref{lemma:confidence_balls}]
    Let $j\in\{1, \dots, J\}$. Then, if $P_{\tau_j}^e \in \mathcal{P}_{\tau_j}$, then, Lemma~\ref{lemma:ambiguity_dynamics} gives $P_t^e \in \mathbb B(\widehat P_{\tau_j}^e, f_{\tau_j}(t)) \subseteq \mathbb B(\widehat P_{\tau_j}^e, \bar \varepsilon_t)$ for all $t\in I_j$. Since this is the case for all $j\in\{1, \dots, J\}$, then $P_t^e \in \mathbb B(\widehat P_{\tau_j}^e, \bar \varepsilon_t)$ and
    $P_t^e(S^e_t) > p_\text{safe}$ for all $t\in\naturals_0$. Since $P_{\tau_j}^e \in \mathcal{P}_{\tau_j}$ for all $j\in\{1, \dots, J\}$ holds with confidence of $1-\beta$, the previous result holds with the same confidence..
\end{proof}

\begin{proof}[Proof of Thm.~\ref{thm:ambiguity_tubes_low_dimensional}]
    Let $l \in \{1,\dots,L\}$. By construction of the $l$-th tube, following the same reasoning used in Thm.~\eqref{thm:ambiguity_tube}, but relying on Lemma~\ref{lemma:ambiguity_dynamics_low_dimensional} instead of Lemma~\ref{lemma:ambiguity_dynamics}, we obtain that $M_{l\#}P_t^e \in \mathcal{P}_{l,t}^e, \:\forall\:t\in\mathbb N_0$ with confidence $1 - \beta/L$. Therefore, by the union bound we obtain the desired result.
\end{proof}

\begin{proof}[Proof of Lemma~\ref{lemma:confidence_balls_lower_dimensional}]
    Let $l\in\{1, \dots, L\}$ and assume that $M_{l\#}P_t^e \in \mathcal{P}_{l, t}^e$ for all $t\in\naturals_0$. Then, by the same reasoning in the proof of Lemma~\ref{lemma:confidence_balls}, we obtain that $M_{l\#}P_t^e( S_{l,t}^e ) > p_\text{safe}/L$ for all $t\in\naturals_0$. Since the the previous assumption holds with confidence $1-\beta/L$ for each $l\in\{1, \dots, L\}$, an application of the union bound yields the desired result.
\end{proof}
\begin{proof}[Proof of Thm.~\ref{thm:soundness}]
    Without loss of generality let the control constraints in \eqref{eq:cc_constraints} be embedded into the collision-avoidance constraints. Let $(\mathcal{P}_t^e)_{t = 0}^T$ and $(S_t^e)_{t = 0}^T$ be the error ambiguity and confidence tubes obtained through Alg.~\ref{alg:tube} Alg.~\ref{alg:confidence_balls} respectively. Assume that $P_t^e \in \mathcal{P}_t^e$ and $P_t^e[\textbf e_t \in S^e_t] > p_\text{safe}$ for all $t\in\naturals_0$ and let $t \in \{0, \dots, T\}$. Since every node in the motion plan is valid, the following conditions holds:
    \begin{align}
    \label{eq:aux_proof_thm3}
            {\small \min_{P\in\mathcal{P}_t} P\big[\textbf  x_t \notin X_\text{obs}\big] > p_\text{safe} \;\;} \text{or}\;\; {\small X_\text{obs}\bigcap\big( S^e_t + \bar x_t \big)  = \emptyset.}
    \end{align}
    If the first condition holds and noting that $P_t^e \in \mathbb B(\widehat P_t^e, \varepsilon_t)$ is equivalent to $P_t \in \mathcal{P}_t = \mathbb B(\widehat P_t^e*\delta_{\bar x_t}, \varepsilon_t)$ by Prop.~\ref{prop:ambiguity_tube_error}, then $P_t\big[ \boldsymbol{x_t} \notin X_\text{obs}\big] >p_\text{safe}$. On the other hand, let the second condition in \eqref{eq:aux_proof_thm3} hold. Since $P_t^e[\textbf e_t \in S^e_t] > p_\text{safe}$ is equivalent to $P_t[\boldsymbol{x_t} \in S^e_t + \bar x_t] > p_\text{safe}$, then $P_t\big[ \boldsymbol{x_t} \notin X_\text{obs}\big] > p_\text{safe}$. Since $t$ was chosen arbitrarily, this result holds for all $t\in\{0,\dots,T\}$. By the previous reasoning we also obtain that $P_T\big[ \boldsymbol{x_T} \in X_\text{goal}\big] >p_\text{safe}$. Finally, taking into account Thm.~\ref{thm:ambiguity_tube} and Lemma~\ref{lemma:confidence_balls}, we have that our assumptions $P_t^e \in \mathcal{P}_t^e$ and $P_t^e[\textbf e_t \in S^e_t] > p_\text{safe}$ hold for all $t\in\naturals_0$ with confidence $1-\beta$. Therefore we obtain that $P_t\big[ \boldsymbol{x_t} \notin X_\text{obs}\big] >p_\text{safe}$ for all $t$ and $P_T\big[ \boldsymbol{x_T} \in X_\text{goal}\big] >p_\text{safe}$ hold with the same confidence.
\end{proof}

\begin{proof}[Proof of Thm.~\ref{thm:pc}]
    The proof follows the reasoning of the proof of Thm. 2 in \cite{kleinbort2018probabilistic}, which we extend here to account for the randomness and ambiguity inherent to our approach. Leveraging this reasoning is possible because we plan in the state space: we grow a tree of (deterministic) reference motion plans and uniformly sample reference states $\boldsymbol{\bar x_\text{rand}}$, feedforward controls $\boldsymbol{\bar u_\text{rand}}$ and propagation durations $\boldsymbol{\bar t_\text{rand}}$. %We sample these from a uniform distribution.
%
    %Therefore, the extension of their proof to our setting only requires redefining the set of states and controls that satisfy the collision avoidance, goal reachability and control saturation constraints, and take into account that System~\ref{eq:system} evolves in discrete-time.
 %   
    Note that, by Eq.~\eqref{eq:control_constraint_embed}, $X_\text{obs}$ generally depends on $\bar x_t$ and $\bar u_t$. Here we make this dependence explicit and denote $X_\text{obs} \equiv X_\text{obs}(\bar x_t, \bar u_t)$. Let  
    % \begin{align*}
        $F_\text{safe} := \{(x,u) : \min_{P\in\mathcal{P}^e} P\big[\boldsymbol{e} + x \notin  X_\text{obs}(x, u)\big] > p_\text{safe}\}$
    % \end{align*}
    %
    denote the \emph{free state-control space} with respect to the ambiguity, i.e., the set of pairs of safe reference states and feedforward controls at time $t$ with respect to all distributions in $\mathcal{P}^e$. Similarly, let $F_\text{goal} := \{x\in \reals^n : \min_{P\in\mathcal{P}^e} P\big[\boldsymbol{e} + x \in  X_\text{goal}\big] > p_\text{safe}\}$. Note that validity of the motion plan implies that $(\bar x_t,\bar u_t) \in F_\text{safe}$ for all $t\in\{0, 1,\dots, T\}$ and that $\bar x_T \in F_\text{goal}$.
    
    Next, define the sequence of Cartesian products of balls $B(\bar x_t, \epsilon)\times B(\bar u_t, \epsilon) \subset \reals^{n+m}$, with $\epsilon >0$. We prove that if $\epsilon$ is small enough, then the sets $B(\bar x_t, \epsilon)\times B(\bar u_t, \epsilon) \subset \reals^{n+m}$ are completely contained in $F_\text{safe}$ for all $t\in\{0, 1,\dots, T\}$ and $B(\bar x_T, \epsilon) \subset F_\text{goal}$. Pick such a $t$ and recall from Section~\ref{sec:validity_checking} that, since $\widehat P^e$ is discrete, the worst-case probability $\min_{P\in\mathcal{P}^e} P\big[\boldsymbol{e} + \bar x_t \notin  X_\text{obs}(\bar x_t,\bar u_t)\big]$ is a function of the distances $\text{dist}(\boldsymbol{\hat{e}}^i+\bar x_t, \reals^n\setminus X_\text{obs}(\bar x_t,\bar u_t))$ between the atoms $\boldsymbol{\hat{e}}^i+\bar x_t$ of $\widehat P^e*\delta_{\bar x_t}$ and the complement of $X_\text{obs}(\bar x_t,\bar u_t)$. By continuity of these distances w.r.t. $(\bar x_t, \bar u_t)$ and by Eqs.~\eqref{eq:collision_probability0} and \eqref{eq:collision_probability}, the function $(x,u) \mapsto \min_{P\in\mathcal{P}^e} P\big[\boldsymbol{e} + x \notin  X_\text{obs}(x,u)\big]$ is continuous. This implies that, for $\epsilon >0$ small enough, every $(x,u)\in B(\bar x_t, \epsilon)\times B(\bar u_t, \epsilon)$ yields a value $\min_{P\in\mathcal{P}^e} P\big[\boldsymbol{e} + x \notin  X_\text{obs}(x, u)\big]$ that is arbitrarily close to $\min_{P\in\mathcal{P}^e} P\big[\boldsymbol{e} + \bar x_t \notin  X_\text{obs}(\bar x_t, \bar u_t)\big]$. 
    Extending this logic to all $t < T$ and to $F_\text{goal}$ %$\min_{P\in\mathcal{P}^e} P\big[\boldsymbol{e} + x \in  X_\text{goal}\big]$ is also bigger than $p_\text{safe}$ for all $x\in B(\bar x_t, \epsilon)$, thus guaranteeing that $B(\bar x_t, \epsilon) \subset F_\text{safe}$. By the same reasoning, we obtain the radii of balls centered on $\bar x_t$
    we obtain an $\epsilon$ such that $B(\bar x_t, \epsilon)\times B(\bar u_t, \epsilon)\subset F_\text{safe}$ for all $t \in\{0,\dots, T\}$ and that $B(\bar x_T, \epsilon) \subset F_\text{goal}$.

    %We now consider the control constraints in \eqref{eq:cc_constraints}. Note that whether the control constraint at some time $t$ is satisfied or not only depends on $\bar u_t$, regardless of $\bar x_t$. This allows us to define the set $F_\text{control} := \{u\in U : \min_{P\in\mathcal{P}^e} P\big[-K\boldsymbol{e} + u \in  U \big] > p_\text{safe}\}$ from which, if we sample $\boldsymbol{\bar u_\text{rand}}$, the control constraint at time $t$ is satisfied. Proceeding in the same way as before, we obtain that the validity of the motion plan implies that there exists some $\epsilon_u > 0$ such that $B(\bar u_t, \epsilon_u)\subset F_\text{control}$ for all $t$.

    %Let $T_\text{prop} < \infty$ denote the maximum time of propagation of a sampled feedforward control. Note that the probability of sampling the
    %Note that the discrete-time nature of System~\eqref{eq:system} guarantees that 
    
    Next, define, at every $t$, a ball $B(\bar x_t, r_t)\subset \reals^n$ such that $r_{t+1} := 4\|A\| r_t$ and $r_T = \epsilon$. By definition, $B(\bar x_t, r_t)\times B(\bar u_t,\epsilon) \subset F_\text{safe}$, and $B(\bar x_T, \epsilon) \subset F_\text{goal}$. We now prove that, if some node $\bar x$ of the tree lies inside $B(\bar x_t, 2r_t/5)$, the probability of sampling $\boldsymbol{\bar x_\text{rand}}$ such that $\boldsymbol{\bar x_\text{near}} \in B(\bar x_t, r_t)$ and it is propagated to $\boldsymbol{\bar x_\text{new}}\in B(\bar x_{t+1}, 2r_{t+1}/5)$ is lower bounded by some $\rho > 0$, uniformly over $t$. Let $t < T$, $(x,u)\in B(\bar x_t, r_t)\times B(\bar u_t, \epsilon)$ and $x' := Ax + B u$. Taking a look at the reference dynamics in Eq.~\eqref{eq:reference_dynamics}, we obtain that $\|\bar x_{t+1} - x'\| \le \|A\|\| \bar x_t - x \| + \|B\| \|\bar u_t - u\|$. Next, we look for the set of feedforward controls that guarantee $x'\in B(\bar x_{t+1}, 2r_{t+1}/5)$, being $\|\bar u_t - u\| \le 3\|A\|r_t/(8\|B\|)$ a sufficient condition. Hence, sampling a feedforward control $\boldsymbol{\bar u_\text{rand}}\in B(\bar u_t, \min\{3\|A\|r_0/(8\|B\|), \epsilon\})$ ensures that $(x,\boldsymbol{\bar u_\text{rand}}) \in F_\text{safe}$, %i.e., that the safety constraint is satisfied at time $t$,
    and that any node $\bar x\in B(\bar x_t, r_t)$ is one-step propagated to $B(\bar x_{t+1}, 2r_{t+1}/5)$. Furthermore, the probability of uniformly sampling such a $\boldsymbol{\bar u_\text{rand}}$ is always bigger than some $\varrho_{1,t}>0$. Additionally, since Sys.~\eqref{eq:system} is discrete time, the duration $\boldsymbol{t_\text{rand}}$ that $\boldsymbol{\bar u_\text{rand}}$ is applied is sampled from a finite set, which means that $\boldsymbol{t_\text{rand}} = 1$ with probability lower bounded by some $\varrho_{2} > 0$. Next, by \cite[Lemma 4]{kleinbort2018probabilistic}, which guarantees that if there is a node $\bar x\in B(\bar x_t, 2r_t/5)$, then the probability of uniformly sampling $\boldsymbol{\bar x_\text{rand}}$ such that $\boldsymbol{\bar x_\text{near}} \in B(\bar x_t, r_t)$ is always bigger than some constant $\varrho_{3,t} > 0$. Since $\boldsymbol{\bar u_\text{rand}}$, $\boldsymbol{t_\text{rand}}$ and $\boldsymbol{\bar x_\text{rand}}$ are sampled independently, we obtain our desired probability: $\rho := \min\{\varrho_{1,t}\varrho_2\varrho_{3,t} : t \in \{0,1,\dots,T\}\}$.

    Without loss of generality, let $\rho \in (0,1/2)$. By the reasoning in the proof of \cite[Thm. 1]{kleinbort2018probabilistic}, the probability that after $k \gg T$ iterations the algorithm has not found a valid path is at most $T/(T-1)! k^T\exp(-\rho k)$, which approaches $0$ as $k\to \infty$.
\end{proof}

\begin{proof}[Proof of Thm.~\ref{thm:solution_low_dimensional}]
    Without loss of generality, let the control constraints in \eqref{eq:cc_constraints} be embedded into the collision-avoidance constraints. Let $(\mathcal{P}_{l, t}^e)_{t = 0}^T$ and $(S_{l,t}^e)_{t = 0}^T$, with $l\in\{1,\dots, L\}$, be respectively the lower-dimensional error ambiguity and confidence tubes obtained through Algs.~\ref{alg:tubes_low_dimensional} and \ref{alg:confidence_balls_lower_dimensional}, respectively. Assume $M_{l\#}P_t^e \in \mathcal{P}_{l,t}^e$ and $M_{l\#}P_t^e( S^e_{l,t} ) > p_\text{safe}/L$ for all $t\in\naturals_0$, $l\in\{1,\dots, L\}$. First, we prove Eq.~\eqref{eq:cc_constraints_safety}. Let $t \le T$. By validity of the nodes in the motion plan, it holds that:
    \begin{equation}
        \label{eq:aux_proof_thm_lower_dim}
        \begin{split}
            &{\small 1 - L + \sum_{l=1}^L \min_{P\in\mathcal{P}_{l,t}} P( \reals^{n_l} \setminus Y_\text{obs}^l ) > p_\text{safe}, }\\
            % \vspace{\textwidth}
            &\qquad\qquad\qquad\qquad\qquad \quad \\
            \text{or} \;&{\small  Y_\text{obs}^l\bigcap\big( S^e_{l,t} + M_l\bar x_t \big)  = \emptyset,\: \forall l\in\{1,\dots, L\} }.
        \end{split}
    \end{equation}
    If the first condition holds and noting that $M_{l\#}P_t^e \in\mathcal{P}_{l, t}^e =  \mathbb B(M_{l\#}\widehat P_t^e, \varepsilon_{l,t})$ is equivalent to $M_{l\#}P_t^e *\delta_{M_l \bar x_t} \in \mathcal{P}_{l, t} = \mathbb B(M_{l\#}\widehat P_t^e*\delta_{M_l \bar x_t}, \varepsilon_{l,t})$ by Prop.~\ref{prop:ambiguity_tube_error}, then
    \begin{equation}
    \label{eq:proof_thm_lower_dimensional_ambiguity_sets}
\begin{split}
    &{\small P_t\big[\boldsymbol{x_t} \notin X_\text{obs}\big] =
    1 - P_t^e * \delta_{\bar x_t}(X_\text{obs}) \ge  1 - \sum_{l = 1}^L P_t^e * \delta_{\bar x_t}(X_\text{obs}^l)}\\
    &{\small =  1 - \sum_{l = 1}^L M_{l\#}P_t^e * \delta_{M_l\bar x_t}(Y_\text{obs}^l)\ge 1 -  \sum_{l = 1}^L \max_{P \in \mathcal{P}_{l, t}} P(Y_\text{obs}^l)}\\
    &{\small = 1 - L + \sum_{l=1}^L \min_{P\in\mathcal{P}_{l,t}} P( \reals^{n_l} \setminus Y_\text{obs}^l) > p_\text{safe} }.
\end{split}
\end{equation}
Alternatively, let the second condition in \eqref{eq:aux_proof_thm_lower_dim} hold. Since $M_{l\#}P_t^e( S^e_{l,t} ) > p_\text{safe}/L$ is equivalent to $M_{l\#}P_t( S^e_{l, t} + M_l \bar x_t ) > p_\text{safe}/L$, we obtain that
\begin{equation}
\label{eq:proof_thm_lower_dimensional_ambiguity_sets1}
\begin{split}
    &P_t\big[\boldsymbol{x_t} \notin X_\text{obs}\big] \ge  1 - \sum_{l = 1}^L M_{l\#}P_t(Y_\text{obs}^l) \\
    &\ge \sum_{l = 1}^L M_{l\#}P_t(S^e_{l, t} + M_l \bar x_t) > 1 -  \sum_{l = 1}^L \frac{p_\text{safe}}{L} > p_\text{safe}.
\end{split}
\end{equation}
Since $t$ was chosen arbitrarily, this result holds for all $t\le T$. 

Next, we prove that the goal-reachability constraint \eqref{eq:cc_constraints_goal} also holds. Since the motion plan is valid, we have that one of the following conditions holds: 
    %
    % \begin{align*}
        %%\label{eq:aux1_proof_thm_lower_dim}
        % \begin{split}
            either $1 - L + \sum_{l=1}^L \min_{P\in\mathcal{P}_{l,t}} P( \reals^{n_l} \setminus Y_\text{goal}^l ) > p_\text{safe}$ or $( S^e_{l,t} + M_l\bar x_t )  \subseteq Y_\text{goal}^l$ for all $l\in\{1,\dots, L\}$. 
        % \end{split}
    % \end{align*}
    %
Noting that 
%
% \begin{align*}
    $P_T(X_\text{goal}) = P_T(\cap_{l=1}^L X_\text{goal}^l) = 1 - P_T(\cup_{l=1}^L \reals^n\setminus X_\text{goal}^l)$, 
% \end{align*}
%
along with Eqs.~\eqref{eq:proof_thm_lower_dimensional_ambiguity_sets} and \eqref{eq:proof_thm_lower_dimensional_ambiguity_sets1}, we also obtain that, if our assumptions hold, then $P_T\big[ \boldsymbol{x_t} \in X_\text{goal}\big] >p_\text{safe}$. Finally, taking into account Thm.~\ref{thm:ambiguity_tubes_low_dimensional} and Lemma~\ref{lemma:confidence_balls_lower_dimensional}, we have that $M_{l\#}P_t^e \in \mathcal{P}_{l,t}^e$ and $M_{l\#}P_t^e( S^e_{l,t} ) > p_\text{safe}/L$ for all $t\in\naturals_0$, $l\in\{1,\dots,L\}$, with confidence $1-\beta$. Therefore, the union bound yields that $P_t\big[ \boldsymbol{x_t} \notin X_\text{obs}\big] >p_\text{safe}$ for all $t\le T$ and $P_T\big[ \boldsymbol{x_t} \in X_\text{goal}\big] >p_\text{safe}$ hold with the same confidence, concluding the proof.
\end{proof}

\begin{proof}[Proof of Thm.~\ref{thm:pc_lower_dimensional}]
    The proof is analogous to that of Thm.~\ref{thm:pc}, except for defining 
    % \begin{equation*}
    % \begin{split}
        $F_\text{safe} := \{(x,u) : 1 -  \sum_{l = 1}^L \max_{P\in\mathcal{P}_l^e} P\big[M_l(\boldsymbol{e} + x) \notin  Y_\text{obs}^l(x, u)\big] > p_\text{safe}\}$
    % \end{split}
    % \end{equation*}
    %
    and $F_\text{goal}$ in a similar way.
\end{proof}

\bibliographystyle{IEEEtran}
\bibliography{refs}

% \begin{IEEEbiographynophoto}{Jane Doe}
% Biography text here without a photo.
% \end{IEEEbiographynophoto}

% \begin{IEEEbiography}[{\includegraphics[width=1in,height=1.25in,clip,keepaspectratio]{fig1.png}}]{IEEE Publications Technology Team}
% In this paragraph you can place your educational, professional background and research and other interests.\end{IEEEbiography}

\end{document}